\newtheorem{theorem}{Theorem}
\newtheorem{corollary}{Corollary}
\newtheorem{lemma}{Lemma}
\newtheorem*{theorem*}{Theorem}
\newtheorem*{corollary*}{Corollary}
\newtheorem*{lemma*}{Lemma}
\newtheorem{proposition}{Proposition}
\newtheorem{definition}{Definition}
\newtheorem{property}{Property}
\newtheorem*{property*}{Property}
\newtheorem{remark}{Remark}
\title{An Efficient Solution to s-Rectangular Robust Markov Decision Processes}
\author[1]{Navdeep Kumar}
\author[1]{Kfir Levy}
\author[2]{Kaixin Wang}
\author[1]{Shie Mannor}
\affil[1]{Technion}
\affil[2]{National University of Singapore}
\begin{document}
\maketitle
\begin{abstract}
We present an efficient robust value iteration for \texttt{s}-rectangular robust Markov Decision Processes (MDPs) with a time complexity comparable to standard (non-robust) MDPs which is significantly faster than any existing method. We do so by deriving the optimal robust Bellman operator in concrete forms using our $L_p$ water filling lemma. We unveil the exact form of the optimal policies, which turn out to be novel threshold policies with the probability of playing an action proportional to its advantage.
\end{abstract}

\section{Introduction}

In Markov Decision Processes (MDPs), an agent interacts with the environment and learns to optimally behave in it ~\cite{Sutton1998}.
However, the MDP solution may be very sensitive to little changes in the model parameters ~\cite{BiasVarianceShie}. Hence we should be cautious applying the solution of the MDP, when the model is changing or when there is uncertainty in the model parameters. Robust MDPs  provide a way to address this issue, where an agent can learn to optimally behave even when the model parameters are uncertain ~\cite{HKuhn2013,tamar14,Iyenger2005}.
% Specifically, robust MDPs allow some room for the uncertainty in the model parameters by allowing the model parameters to belong to an uncertainty set. 
Another motivation to study robust MDPs is that they can lead to better generalization \cite{robustnessAndGeneralization,genralization1,generalization2} compared to non-robust solutions.

% In Markov Decision Processes (MDPs), an agent interacts with the environment and learns to optimally behave in it ~\cite{Sutton1998}.
% Nevertheless, an MDP solution may be very sensitive to the model parameters ~\cite{BiasVarianceShie,genralization1,generalization2}, implying that one should be cautious when the model is changing or when there is uncertainty in the model parameters. Robust MDPs (RMDPs) on the other hand, allow some room for the uncertainty in the model parameters as it allows the model parameters to belong to some uncertainty set ~\cite{HKuhn2013,tamar14,Iyenger2005}. 

% In Markov Decision Processes (MDPs), an agent interacts with the environment and learns to optimally behave in it ~\cite{Sutton1998}.
% The optimal behavior of the agent is however sensitive to the model parameters~\cite{BiasVarianceShie,genralization1,generalization2}, and is thus affected when the parameters deviate from their known values. Robust Markov Decision Processes (RMDPs) are a class of models that allow some room for this uncertainty in the model parameters as it allows the model parameters to belong to some uncertainty set~\cite{HKuhn2013,tamar14,Iyenger2005}.

Unfortunately, solving robust MDPs is proven to be NP-hard for general uncertainty sets \cite{RVI}. As a result, the uncertainty set is often assumed to be rectangular, which enables the existence of a contractive robust Bellman operators to obtain the optimal robust value function \cite{Nilim2005RobustCO,Iyenger2005,k-rectangularRMDP,r-rectRMDP,RVI}. Recently, there has been progress in solving robust MDPs for some \texttt{sa}-rectangular uncertainty sets via both value-based and policy-based methods \cite{Rcontamination,PG_RContamination}. An uncertainty set is said to be \texttt{sa}-rectangular if it can be expressed as a Cartesian product of the uncertainty in all states and actions. It can be further generalized to a \texttt{s}-rectangular uncertainty set if it can be expressed as a Cartesian product of the uncertainty in all states only.
% An uncertainty set is said to be \texttt{sa}-rectangular if it can be written as a cartesian product of the uncertainty in all states and actions. It can be further generalized to a \texttt{s}-rectangular uncertainty set if it can be written as a cartesian product of the uncertainty in all states only.
% While \texttt{s}-rectangular robust MDPs are less conservative than \texttt{sa}-rectangular robust MDPs and hence more desirable: they are also much more difficult and poorly understood \cite{RVI}.
Compared to \texttt{sa}-rectangular robust MDPs, \texttt{s}-rectangular robust MDPs are less conservative and hence more desirable; however, they are also much more difficult and poorly understood \cite{RVI}. Currently, there are few works that consider \texttt{s}-rectangular $L_p$ robust MDPs where uncertainty set is further constrained by $L_p$ norm, but they rely on black box methods which limits its applicability and offers little insights \cite{LinfPPI,ppi,derman2021twice,RVI}. No effective value or policy based methods exist for solving any \texttt{s}-robust MDPs. Moreover, it is known that optimal policies in \texttt{s}-rectangular robust MDPs can be stochastic, in contrast to \texttt{sa}-rectangular robust MDPs and non-robust MDPs \cite{RVI}. However, so far, nothing is known about the stochastic nature of the optimal policies in \texttt{s}-rectangular MDPs.
In this work, we mainly focus on \texttt{s}-rectangular $L_p$ robust MDPs. We first revise the unrealistic assumptions made in the noise transition kernel in \cite{derman2021twice} and introduce forbidden transitions, which leads to novel regularizers. Then we derive robust Bellman operator (policy evaluation) for a \texttt{s}-rectangular robust MDPs in closed form which is equivalent to reward-value-policy-regularized non-robust Bellman operator without radius assumption 5.1 in \cite{derman2021twice}. We exploit this equivalence to derive an optimal robust Bellman operator in concrete forms using our $L_p$-water pouring lemma which generalizes existing water pouring lemma for $L_2$ case \cite{anava2016k}. 
% We exploit this equivalence to derive optimal robust Bellman operator (policy improvement) in concrete forms using our $L_p$-water pouring lemma that we generalize from existing water pouring lemma \cite{anava2016k}. 
We can compute these operators in closed form for $p=1,\infty$ and exactly by a simple algorithm for $p=2$, and approximately by binary search for general $p$. We show that the time complexity of robust value iteration for $p=1,2$ is the same as that of non-robust value iteration. For general $p$, the complexity includes some additional log-factors due to binary searches.
% Further, we exploit this equivalence, to derive optimal robust Bellman operator (policy improvement) in concrete forms, which can be computed in closed form for $p=1,\infty$ and exactly by a simple algorithm for $p=2$, and approximately by binary search for general $p$. Based on this, we show that the time complexity of robust value iteration for $p=1,2$ is the same as that of non-robust value iteration. For general $p$, the complexity includes some additional log-factors due to binary searches.
% we analyze time complexity of robust value iteration: For $p=1,2$, the complexity is exactly same as non-robust value iteration, and for general $p$, it has some additional log-factors.

In addition, we derive a complete characterization of  the stochastic nature of optimal policies in \texttt{s}-rectangular robust MDPs. The optimal policies in this case, are  threshold policies, that plays only actions with positive advantage with probability proportional to $(p-1)$-th power to its advantage.

% \textbf{Contributions.}
% \begin{itemize}
%     \item We present robust value iteration algorithms for \texttt{s}-rectangular $L_p$ robust MDPs with complexity similar to non-robust MDPs  which is significantly faster than any existing method.
%     \item We are the first to provide a characterization of the optimal policies in \texttt{s}-rectangular robust MDPs. 
% \end{itemize}  
%   It is noteworthy that our work can be used to obtain an optimistic policy that is used as an exploration policy in \cite{UCRL, UCRL2}. Interestingly, we show UCRL 2 algorithm (in discounted setting) resembles UCB VI \cite{UCRLVI}. This relation is discussed in section \ref{app:UCRL} in details.

\textbf{Related Work.} For \texttt{sa}-rectangular R-contamination robust MDPs, \cite{Rcontamination} derived robust Bellman operators which are equivalent to value-regularized-non-robust Bellman operators, enabling efficient robust value iteration. Building upon this work, \cite{PG_RContamination} derived robust policy gradient which is equivalvent to non-robust policy gradient with regularizer and correction terms. 
% The equivalence between robust Bellman operators and value-regularized non-robust Bellman operators was established by \cite{Rcontamination}, which enabled efficient robust value iteration for \texttt{sa}-rectangular R-contamination robust MDPs. Building upon this, \cite{PG_RContamination} derived a policy gradient with regularizer and correction terms as a non-robust policy gradient.
% This policy based methods has $O(\frac{1}{\epsilon^3})$ iteration complexity for obtaining $\epsilon$-close optimal solution for policy gradient. However, a contemporary work establishes $O(\frac{1}{\epsilon})$ iteration complexity for obtaining $\epsilon$-close optimal solution for policy gradient methods for general \texttt{sa}-rectangular robust MDPs assuming the oracle access to policy gradient.
Unfortunately, these methods can't be naturally generalized to \texttt{s}-rectangular robust MDPs.

For \texttt{s}-rectangular robust MDPs,  methods such as robust value iteration \cite{Bagnell01solvinguncertain,RVI}, robust modified policy iteration \cite{Kaufman2013RobustMP}, partial robust policy iteration \cite{ppi} etc tries to approximately evaluate robust Bellman operators using variety of tools to estimate optimal robust value function.
% Further, uncertainty in rewards can be handled more easily than uncertainty in transition kernels \cite{derman2021twice}.  Moreover, when the uncertainty set is polyhedral, such as in $L_1/L_\infty$ robust MDPs, then it can be solved using nested Linear programming, that is, both policy evaluation and policy improvement can be done using linear programming exactly or inexactly \cite{Kaufman2013RobustMP,RVI,ppi}.
The scalability of these methods has been limited due to their reliance on an external black-box solver such as Linear Programming.

Previous works have explored robust MDPs from a regularization perspective \cite{derman2021twice,DRO_derman,rewardRobustRegularization,MaxEntRL}. Specifically, \cite{derman2021twice} showed that \texttt{s}-rectangular robust MDPs is equivalent to reward-value-policy regularized MDPs, and  proposed a gradient based policy iteration for \texttt{s}-rectangular $L_p$ robust MDPs ( where uncertainty set is \texttt{s}-rectangular and constrained by $L_p$ norm). But this gradient based policy improvement relies on black box simplex projection, hence very slow and not scalable.

The detailed discussion of the above works can be found in the appendix.
\section{Preliminary}
\subsection{Notations}
For a set $\mathcal{S}$, $\lvert\mathcal{S}\rvert$ denotes its cardinality. $\langle u, v\rangle := \sum_{s\in\mathcal{S}}u(s)v(s)$ denotes the dot product between functions $u,v:\mathcal{S}\to\mathbb{R}$. $\lVert v\rVert_p^q :=(\sum_{s}\lvert v(s)\rvert^p)^{\frac{q}{p}}$ denotes the $q$-th power of $L_p$ norm of function $v$, and we use  $\lVert v\rVert_p := \lVert v\rVert^1_p$ and $\lVert v\rVert := \lVert v\rVert_2$ as shorthand. For a set $\mathcal{C}$, $\Delta_{\mathcal{C}}:=\{a:\mathcal{C} \to \mathbb{R}|a(c)\geq 0, \forall c, \sum_{c\in\mathcal{C}}a_c=1\}$ is the probability simplex over $\mathcal{C}$.
% . $\mathrm{var}(\cdot)$ is variance function, defined as $\mathrm{var}(v)  = \sqrt{\sum_{s\in\mathcal{S}}(v(s) - \bar{v})^2}$ 
% where $\bar{v} = \frac{\sum_{s\in\mathcal{S}}v(s)}{\lvert\mathcal{S}\rvert}$ is the mean of function $v:\mathcal{S}\to\mathbb{R}^d$.
$\mathbf{0},\mathbf{1} $ denotes all zero vector and all ones vector/function respectively of appropriate dimension/domain. $\mathbf{1}(a=b):=1$ if $a=b$, 0 otherwise, is the indicator function. For vectors $u,v$, $\mathbf{1}(u\geq v)$ is component wise indicator vector, i.e. $\mathbf{1}(u\geq v)(x) = \mathbf{1}(u(x)\geq v(x))$. $A\times B =\{(a,b)\mid a\in A, b\in B\}$ is cartesain product between set $A$  and $B$.

\subsection{Markov Decision Processes}\label{sec:MDPs}
A Markov Decision Process (MDP) can be described as a tuple $(\mathcal{S},\mathcal{A},P,R,\gamma,\mu)$, where $\mathcal{S}$ is the state space, $\mathcal{A}$ is the action space, $P$ is a transition kernel mapping $\mathcal{S}\times\mathcal{A}$ to $\Delta_{\mathcal{S}}$, $R$ is a reward function mapping $\mathcal{S}\times\mathcal{A}$ to $\mathbb{R}$, $\mu$ is an initial distribution over states in $\mathcal{S}$, and $\gamma$ is a discount factor in $[0,1)$.
% A Markov Decision Process (MDP) is defined by $(\mathcal{S},\mathcal{A},P,R,\gamma,\mu)$ where $\mathcal{S}$ is the state space, $\mathcal{A}$ is the action space, $P:\mathcal{S}\times\mathcal{A} \to \Delta_{\mathcal{S}}$ is the transition kernel, $R:\mathcal{S}\times\mathcal{A} \to \mathbb{R}$ is the reward function, $\mu\in\Delta_{\mathcal{S}}$  is the initial distribution over states and $\gamma \in [0,1)$ is the discount factor \cite{Sutton1998}. A stationary policy $\pi :\mathcal{S}\to \Delta_{\mathcal{A}}$ maps states to probability distribution over actions. Further,  $\pi(a|s), P(s'|s,a), R(s,a)$ denotes the probability of selecting action $a$ in state $s$, transition probability to state $s'$ in state $s$ under action $a$, and reward in state $\texttt{s}$ under action $a$ respectively.
The expected discounted cumulative reward (return) is defined as
\begin{align*}
  \rho^\pi_{(P,R)} :=   &\mathbb{E}\left[\sum_{n=0}^{\infty}\gamma^n R(s_n,a_n)\Bigm| s_0\sim \mu, \pi,P\right].
\end{align*}
The return can be written compactly  as 
\begin{align}
   \rho^\pi_{(P,R)} =\langle \mu,v^\pi_{(P,R)}\rangle,
\end{align}
\cite{Puterman1994MarkovDP} where 
% d^\pi_P$ is the occupation measure, defined as
% \[d^\pi_{P,\mu}(s,a) = \mathbb{E}[\sum_{n=0}^\infty \gamma^n\mathbf{1}(s_n=s,a_n=a)| s_0\sim \mu, \pi,P],\]
% % Further, using the dual formulation\cite{Puterman1994MarkovDP}, the above objective can be rewritten as 
% % \begin{align}
% %     \langle q,v^\pi_{(P,R)}\rangle,
% % \end{align}
$v^\pi_{(P,R)}$ is the value function , defined as 
\begin{align}
  v^\pi_{(P,R)}(s) :=\mathbb{E}\left[\sum_{n=0}^\infty \gamma^nR(s_n,a_n)\Bigm\vert s_0= s,\pi,P\right].
\end{align}

Our objective is to find an optimal policy $\pi^*_{(P,R)}$ that maximizes the performance $\rho^\pi_{(P,R)}$. This performance can be written as : 
\begin{align}
    \rho^*_{(P,R)} := \max_{\pi}\rho^\pi_{(P,R)} = \langle \mu, v^*_{(P,R)}\rangle,
\end{align}
where $v^*_{(P,R)}:=\max_{\pi}v^\pi_{(P,R)}$ is the optimal value function \cite{Puterman1994MarkovDP}.
% Our objective is to find an optimal policy $\pi^*_{(P,R)}$ that maximizes the performance $\rho^\pi_{(P,R)}$, defined as 
% \begin{align*}
%    \rho^*_{(P,R)} = \max_{\pi}\rho^\pi_{(P,R)},
% \end{align*} 
% that can be written as \cite{Puterman1994MarkovDP}
% \begin{align}
%     \rho^*_{P,R} = \langle \mu, v^*_{P,R}\rangle.
% \end{align}
% The value function $v^\pi_{(P,R)}$ and optimal value function $v^*_{(P,R)}$, is the fixed point of the Bellmen operator $\mathcal{T}^\pi_{P,R}$  and robust Bellmen operator $\mathcal{T}^\pi_{P,R}$ respectively \cite{Sutton1998}, defined as 
% \begin{align*}
%     (\mathcal{T}^\pi_{(P,R)} v)(s) = &\sum_{a}\pi(a|s)\Bigm[R(s,a) + \\&\gamma \sum_{s'}P(s'|s,a)v(s')\Bigm], \quad \text{and}
% \end{align*}

% \begin{align*}
%     (\mathcal{T}^*_{(P,R)} v)(s) = \max_{a\in\mathcal{A}}\Bigm[R(s,a) + \gamma \sum_{s'}P(s'|s,a)v(s')\Bigm].
% \end{align*}

The value function $v^\pi_{(P,R)}$ and the optimal value function $v^*_{(P,R)}$ are the fixed points of the Bellman operator $\mathcal{T}^{\pi}_{(P,R)}$ and the robust Bellman operator $\mathcal{T}^*_{(P,R)}$, respectively \cite{Sutton1998}. These $\gamma$-contraction operators are defined as follows: For any vector $v$, and state $s\in \mathcal{S}$,
\begin{align*}
    (\mathcal{T}^\pi_{(P,R)} v)(s) &:= \sum_{a}\pi(a|s)\Bigm[R(s,a) +  \gamma \sum_{s'}P(s'|s,a)v(s')\Bigm], \quad \text{and}\\
    \mathcal{T}^*_{(P,R)} v &:= \max_{\pi}\mathcal{T}^\pi_{(P,R)} v.
\end{align*}

Therefore, the value iteration $v_{n+1} := \mathcal{T}^*_{(P,R)} v_n$ converges linearly to the optimal value function $v^*_{(P,R)}$. Given this optimal value function, the optimal policy can be computed as: $\pi^*_{(P,R)}  \in \text{arg}\max_{\pi}\mathcal{T}^{\pi}_{(P,R)} v^*_{(P,R)} $.

\begin{remark} The vector minimum of a set $U$ of vectors is defined component wise, i.e. $(\min_{u\in U}u)(i) := \min_{u\in U}u(i)$. This operation is well-defined only when there exists a minimal vector $u^*\in U$ such that $u^* \preceq u, \forall u\in U$. The same holds for other operations such as maximum, argmin, argmax, etc.
\end{remark}
% Hence, the value iteration $v_{n+1} := \mathcal{T}^*_{(P,R)} v_n$ converges linearly to optimal value function $v^*_{(P,R)}$. Given optimal value function $v^*_{(P,R)}$, the optimal policy can be computed from the relation $\mathcal{T}^*_{(P,R)} v^*_{(P,R)}  = \mathcal{T}^{\pi^*_{(P,R)} }_{(P,R)} v^*_{(P,R)} $.
% The optimal Bellman operator $\mathcal{T}^*_{P,R}$ and  Bellman operators $\mathcal{T}^\pi_{P,R}$  for all policy $\pi$, are $\gamma$-contraction  maps\cite{Sutton1998}. 
% So
% the sequences $\{v^\pi_n|n\geq 0\}$, and  $\{v^*_n|n\geq 0\}$ defined as
% \begin{align}
%     v^\pi_{n+1} := \mathcal{T}^\pi_{P,R}v^\pi_n,\qquad v^*_{n+1} := \mathcal{T}^*_{P,R}v^*_n
% \end{align}
% converge linearly to $v^\pi_{(P,R)}$ the value function for policy $\pi$, and $v^*_{P,R}$ the optimal value function respectively for all initial value $v^\pi_0$ and $v^*_0$.

\subsection{Robust Markov Decision Processes}
A robust Markov Decision Process (MDP) is a tuple $(\mathcal{S},\mathcal{A},\mathcal{P},\mathcal{R},\gamma,\mu)$ which generalizes the standard MDP by containing a set of transition kernels $\mathcal{P}$ and set of reward functions $\mathcal{R}$. Let uncertainty set $\mathcal{U} =\mathcal{P}\times\mathcal{R}$ be set of tuples of transition kernels and reward functions \cite{Iyenger2005,Nilim2005RobustCO}. 
% A robust Markov Decision Process (MDP) is a tuple $(\mathcal{S},\mathcal{A},\mathcal{U},\gamma,\mu)$ that generalizes MDP where tuple of transition kernels and reward functiont $(P,R) \in \mathcal{U}$ and  $\mathcal{U}$ known as uncertainty set \cite{Iyenger2005,Nilim2005RobustCO}.
% % A Markov Decision Process (MDP) is defined by $(\mathcal{S},\mathcal{A},P,R,\gamma,\mu)$ where $\mathcal{S}$ is the state space, $\mathcal{A}$ is the action space, $P:\mathcal{S}\times\mathcal{A} \to \Delta_{\mathcal{S}}$ is the transition kernel, $R:\mathcal{S}\times\mathcal{A} \to \mathbb{R}$ is the reward function, $\mu\in\Delta_{\mathcal{S}}$  is the initial distribution over states and $\gamma \in [0,1)$ is the discount factor \cite{Sutton1998}.
% In most practical cases, the system dynamics (transition kernel $P$ and reward function $R$) are not known precisely. Instead, we have an access to a nominal reward function $R_0$ and a nominal transition kernel $P_0$ that may have some uncertainty. To capture this, the uncertainty (or ambiguity) set is defined as $\mathcal{U}:= (R_0 + \mathcal{R})\times(P_0 + \mathcal{P})$,  where $\mathcal{R}$, $\mathcal{P}$ are the respective uncertainties  in the reward function and transition kernel \cite{Iyenger2005}.
The robust performance $\rho^\pi_\mathcal{U}$ of a policy $\pi$ is defined to be its worst performance on the entire uncertainty set $\mathcal{U}$ as 
\begin{align}\label{obj:RobPolEval}
    \rho^\pi_{\mathcal{U}} := \min_{(P,R)\in\mathcal{U}} \rho^\pi_{(P,R)}.
\end{align}
Our objective is to find an optimal robust policy $\pi^*_\mathcal{U}$ that maximizes the robust performance $\rho^\pi_{\mathcal{U}}$, defined as 
\begin{align}\label{obj:RobPolImp}
   \rho^*_{\mathcal{U}} := \max_{\pi}\rho^\pi_{\mathcal{U}} .
\end{align}
% Our objective is to get the optimal robust performance $\rho^*_{\mathcal{U}}$, defined as 
% \begin{align*}
%    \rho^*_{\mathcal{U}} = \max_{\pi}\rho^\pi_{\mathcal{U}} ,
% \end{align*}
% which is achieved by a optimal robust policy $\pi^*_\mathcal{U} \in \text{arg}\max_{\pi}\rho^\pi_{\mathcal{U}}$.
Solving the above robust objectives \ref{obj:RobPolEval} and \ref{obj:RobPolImp} are strongly NP-hard for general uncertainty sets, even if they are convex \cite{RVI}. Hence, the uncertainty set $\mathcal{U}=\mathcal{P}\times\mathcal{R}$ is commonly assumed to be \texttt{s}-rectangular, meaning that $\mathcal{R}$ and $\mathcal{P}$ can be decomposed state-wise as $\mathcal{R} = \times_{s \in \mathcal{S}} \mathcal{R}_{s}$ and $\mathcal{P} = \times_{s \in \mathcal{S}} \mathcal{P}_{s}$. For further simplification, $\mathcal{U}=\mathcal{P}\times\mathcal{R}$ is assumed to decompose state-action-wise as $\mathcal{R} = \times_{(s,a) \in \mathcal{S}\times\mathcal{A}} \mathcal{R}_{s,a}$ and $\mathcal{P} = \times_{(s,a) \in \mathcal{S}\times \mathcal{A}} \mathcal{P}_{s,a}$, known as $\texttt{sa}$-rectangular uncertainty set. Throughout the paper, the uncertainty set is assumed to be \texttt{s}-rectangular (or \texttt{sa}-rectangular) unless stated otherwise. Under the $\texttt{s}$-rectangularity assumption, for every policy $\pi$, there exists a robust value function  $v^\pi_{\mathcal{U}}$ which is the minimum of $v^\pi_{(P,R)}$ for all $(P,R) \in \mathcal{U}$, and the optimal robust value function $v^*_{\mathcal{U}}$ which is the maximum of $v^\pi_{\mathcal{U}}$ for all policies $\pi$ \cite{RVI}, that is
% Under the $\texttt{s}$-rectangularity assumption,  existence of the robust value function  
\[v^\pi_{\mathcal{U}} := \min_{{(P,R)\in\mathcal{U}}}v^\pi_{(P,R)}, \quad \text{and}\quad v^*_{\mathcal{U}} := \max_{\pi}v^\pi_\mathcal{U}.\]
This implies, robust policy performance can be rewritten as 
\begin{align*}
   \rho^\pi_{\mathcal{U}}  =  \langle \mu,v^\pi_{\mathcal{U}}\rangle , \quad \text{and}\quad \rho^*_{\mathcal{U}} = \langle \mu,v^*_{\mathcal{U}}\rangle.
\end{align*}
Furthermore, the robust value function $v^\pi_{\mathcal{U}}$ is the fixed point of the robust Bellmen operator $\mathcal{T}^\pi_{\mathcal{U}}$  \cite{RVI,Iyenger2005}, defined as 
\begin{align*}
    &(\mathcal{T}^\pi_{\mathcal{U}} v)(s) :=\min_{(P,R)\in\mathcal{U}}\sum_{a}\pi(a|s)\left[R(s,a) + \gamma \sum_{s'}P(s'|s,a)v(s')\right],
\end{align*}
and the optimal robust value function $v^*_{\mathcal{U}}$  is the fixed point of the optimal robust Bellman operator $\mathcal{T}^*_{\mathcal{U}}$ \cite{Iyenger2005,RVI}, defined as
\begin{align*}
    \mathcal{T}^*_{\mathcal{U}}v := \max_{\pi}\mathcal{T}^\pi_{\mathcal{U}}v.
\end{align*}
The optimal robust Bellman operator $\mathcal{T}^*_{\mathcal{U}}$ and  robust Bellman operators $\mathcal{T}^\pi_{\mathcal{U}}$ are $\gamma$ contraction maps for all policy $\pi$ \cite{RVI}, that is 
\begin{align*}
    &\lVert\mathcal{T}^*_{\mathcal{U}}v - \mathcal{T}^*_{\mathcal{U}}u\rVert_{\infty} \leq \gamma \lVert u-v\rVert_{\infty},\qquad \lVert\mathcal{T}^\pi_{\mathcal{U}}v - \mathcal{T}^\pi_{\mathcal{U}}u\rVert_{\infty} \leq \gamma \lVert u-v\rVert_{\infty},\qquad \forall \pi,u,v.
\end{align*}
So for all initial values $v^\pi_0,v^*_0$, sequences defined as 
\begin{align}
    v^\pi_{n+1} := \mathcal{T}^\pi_{\mathcal{U}}v^\pi_n , \qquad v^*_{n+1} := \mathcal{T}^*_{\mathcal{U}}v^*_n
\end{align}
converges linearly to their respective fixed points, that is $v^\pi_n\to v^\pi_{\mathcal{U}}$ and $v^*_n\to v^*_{\mathcal{U}}$. Given this optimal robust value function, the optimal robust policy can be computed as: $\pi^*_{\mathcal{U}} \in \text{arg}\max_{\pi}\mathcal{T}^{\pi }_{\mathcal{U}} v^*_{\mathcal{U}} $ \cite{RVI}. This makes the robust value iteration an attractive method for solving \texttt{s}-rectangular robust MDPs.

\section{Method}

In this section, we consider constraining the uncertainty set around nominal values by the $L_p$ norm, which is a natural way of limiting the broad class of \texttt{s} (or \texttt{sa})-rectangular uncertainty sets \cite{derman2021twice,ppi,UCRL2}.
% General \texttt{s} (or \texttt{sa})-rectangular uncertainty set is too large to be useful, hence in this section, we shall constrained the uncertainty set by $L_p$ norm. This can naturally, arises in the many situations \cite{derman2021twice,ppi,UCRL2}. The main goal is to evaluate robust Bellman operators for these uncertainty sets, which in turn will used to obtain optimal robust value functions.\\
% Class of \texttt{s} (or \texttt{sa})-rectangular uncertainty set is too broad. Therefore, it is natural to constrain uncertainty set around nominal values by $L_p$ norm \cite{derman2021twice,ppi,UCRL2} which we consider in this section.
We will then derive robust Bellman operators for these uncertainty sets, which can be used to obtain robust value functions. This will be done separately for \texttt{sa}-rectangular in Subsection \ref{main:sec:saLp} and \texttt{s}-rectangular case in Section \ref{main:sec:sLp}.
% We will see in Theorem \ref{rs:saLprvi} that for the \texttt{sa}-rectangular case,  both policy evaluation and improvement can be done similarly to non-robust MDPs with an additional reward penalty. Theorem \ref{rs:SLpPlanning} states that policy evaluation for the \texttt{s}-rectangular case, is similar to \texttt{sa}-rectangular case except the reward penalty has extra dependence on policy. This dependence on the policy makes policy improvement a challenging task, and the rest of the section is devoted to efficiently solving it. Theorem \ref{rs:rve} provides methods for the policy improvement for \texttt{s}-rectangular case, using binary search for general $p$. Although there exists an algorithmic method that is tailor-made for $p=1,2$, it is discussed in the appendix in detail. However, the best of both worlds is summarized in table \ref{tb:val}.  Theorem \ref{rs:srect:optimalPolicy} characterizes the stochasticity in the optimal policy. All the results are summarized in the table \ref{tb:val}, and  \ref{tb:Opt:pi}. There are many additional properties, and characterizations of greedy policy are discussed in appendix \ref{app:properties}.\\

We begin by making a few useful definitions. We reserve  $q$ for Holder conjugate of $p$, i.e. $\frac{1}{p} + \frac{1}{q} = 1$. 
Let $p$-variance function  $\kappa_p:\mathcal{S}\to\mathbb{R}$  be defined as
\begin{equation}\label{def:kp}
    \kappa_p(v) :=\min_{\omega\in\mathbb{R}}\lVert v-\omega\mathbf{1}\lVert_p.
\end{equation}
For $p=1,2,\infty$, the $p$-variance function $\kappa_p$ has intuitive closed forms as summarized in Table \ref{tb:kappa}. For general $p$, it can be calculated by binary search in the range $[\min_{s}v(s),\max_{s}v(s)]$ ( see appendix \ref{app:pvarianceSection} for proofs).
% For $p=1,2,\infty$, the $p$-variance function $\kappa_p$ has intuitive closed forms as summarized in Table \ref{tb:kappa}. Moreover, for general $p$, it can be calculated by binary search in the range $[\min_{s}v(s),\max_{s}v(s)]$ (see appendix \ref{app:pvarianceSection}), the proofs are in appendix \ref{app:pvarianceSection}. 
 
 % Proofs of all the above claims and upcoming results in this section can be found in the appendix \ref{app:srLp}.

\begin{table}
  \caption{$p$-variance}
  \label{tb:kappa}
  \centering
  \begin{tabular}{lll}
    \toprule                   
    $x$     & $\kappa_x(v)$     & Remark \\
    \midrule
    $p$ & $\min_{\omega\in\mathbb{R}}\lVert v-\omega\mathbf{1}\lVert_p  $  &  Binary search     \\&\\
    $\infty$     & $\frac{\max_{s}v(s) - \min_{s}v(s)}{2}$ & Semi-norm      \\&\\
    $2$     & $\sqrt{\sum_{s}\bigm(v(s) -\frac{\sum_{s}v(s)}{S}\bigm)^2}$      & Variance  \\&\\
    $1$     &$ \sum_{i=1}^{\lfloor (S+1)/2\rfloor}v(s_i) $ & Top half - lower half  \\ & $\quad -\sum_{i =\lceil (S+1)/2\rceil}^{S}v(s_i)$     \\
    \bottomrule
  \end{tabular}
  \begin{tabular}{l}
      where $v$ is sorted, i.e. $v(s_i)\geq v(s_{i+1}) \quad \forall i.$
  \end{tabular}      
\end{table}

\subsection{\texttt{(Sa)}-rectangular Lp robust Markov Decision Processes}\label{main:sec:saLp}
In accordance with \cite{derman2021twice}, we define $\texttt{sa}$-rectangular $L_p$ constrained uncertainty set $\mathcal{U}^{\texttt{sa}}_p$ as  
\[\mathcal{U}^{\texttt{sa}}_p := (P_0 +  \mathcal{P})\times(R_0 +  \mathcal{R})\]
where $\mathcal{P}$, $\mathcal{R}$ are noise sets around nominal kernel $P_0$ and nominal reward $R_0$ respectively. Furthermore, noise sets are \texttt{sa}-rectangular, that is
\[\mathcal{P} = \times_{s\in\mathcal{S},a\in\mathcal{A}}\mathcal{P}_{s,a},\quad\text{and}\quad  \mathcal{R} = \times_{s\in\mathcal{S},a\in\mathcal{A}}\mathcal{R}_{s,a},\]
and each component are bounded by $L_p$ norm that is
\begin{align*}
\mathcal{R}_{s,a} &=  \Bigm\{R_{s,a}\in\mathbb{R}\Bigm| \lvert R_{s,a}\rvert\leq \alpha_{s,a}\Bigm\},\quad\text{and}\\
\mathcal{P}_{s,a} &= \{P_{s,a}:\mathcal{S}\to\mathbb{R}\Bigm| \underbrace{\sum_{s'}P_{s,a}(s')=0}_{\text{ simplex condition }},  \lVert P_{s,a}\rVert_p\leq \beta_{s,a}\}
\end{align*}
with  radius vector $\alpha $ and $ \beta$. Radius vector $\beta$ is chosen small enough so that all the transition kernels in  $(P_0 +\mathcal{P})$ are well defined. Further, all transition kernels in $(P_0 + \mathcal{P})$ must have the sum of each row equal to one, with $P_0$ being a valid transition kernel satisfying this requirement. This implies that the elements of $\mathcal{P}$ must have a sum of zero across each row as ensured by simplex condition above.

% Since the sum of rows of valid transition kernel must be one, hence the sum of rows of noise kernel must be zero (simplex condition), which we ensured in the above definition of $\mathcal{P}_{s,a}$.
Our setting differs from \cite{derman2021twice} as they didn't impose this simplex condition on the kernel noise, which renders their setting unrealistic as not all transition kernels in their uncertainty set satisfy the properties of transition kernels. This makes our reward regularizer dependent on the $q$-variance of the value function $\kappa_q(v)$, instead of the $q$-th norm of value function $\lVert v\rVert_q$ in \cite{derman2021twice}.

% It makes our result differ from \cite{derman2021twice}, as they did not impose this simplex condition on their kernel noise. This renders their setting unrealistic (see corollary 4.1 in \cite{derman2021twice}) as not all transition kernels in their uncertainty set, satisfy the properties of transition kernels. And we see next and later, this condition on the noise makes reward regularizer dependent on  the $\kappa_q(v)$ ( $q$-variance of value function) in our result instead of $\lVert v\rVert_q$ ($q$-th norm of value function $v$) in \cite{derman2021twice}.

The main result of this subsection below states that robust Bellman operators can be evaluated using only nominal values and regularizers.
 \begin{theorem}\label{rs:saLprvi} \texttt{sa}-rectangular $L_p$ robust Bellman operators are equivalent to reward-value regularized (non-robust) Bellman operators:
\begin{align*}
    (\mathcal{T}^\pi_{\mathcal{U}^{\texttt{sa}}_p} v)(s)  =& \sum_{a}\pi(a|s)\Bigm[  -\alpha_{s,a} -\gamma\beta_{s,a}\kappa_q(v)  +R_0(s,a) +\gamma \sum_{s'}P_0(s'|s,a)v(s')\Bigm], \quad \text{and}\\
    (\mathcal{T}^*_{\mathcal{U}^{\texttt{sa}}_p} v)(s)  =& \max_{a\in\mathcal{A}}\Bigm[  -\alpha_{s,a} -\gamma\beta_{s,a}\kappa_q(v)  +R_0(s,a) +\gamma \sum_{s'}P_0(s'|s,a)v(s')\Bigm].
\end{align*}
\end{theorem}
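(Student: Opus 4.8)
The plan is to exploit the \texttt{sa}-rectangular structure to reduce the robust Bellman operator to a collection of independent scalar minimizations, one per state-action pair, and then solve each via Hölder duality. Writing $R = R_0 + \hat R$ and $P = P_0 + \hat P$ with $\hat R \in \mathcal{R}$ and $\hat P \in \mathcal{P}$, the definition of $\mathcal{T}^\pi_{\mathcal{U}}$ at state $s$ becomes the minimization of $\sum_a \pi(a|s)\bigl[R_0(s,a) + \hat R(s,a) + \gamma\sum_{s'}(P_0(s'|s,a) + \hat P(s'|s,a))v(s')\bigr]$ over $(\hat R, \hat P)$. Because the noise sets factor as $\mathcal{R} = \times_{s,a}\mathcal{R}_{s,a}$ and $\mathcal{P} = \times_{s,a}\mathcal{P}_{s,a}$, and the objective is a nonnegative-weighted sum over actions, the minimization distributes over $a$ and splits into independent reward and transition parts. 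The reward part is immediate: $\min_{\lvert\hat R(s,a)\rvert \le \alpha_{s,a}}\hat R(s,a) = -\alpha_{s,a}$.

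The transition part requires solving, for each $(s,a)$, the problem $\min_{\hat P}\gamma\langle\hat P, v\rangle$ subject to $\lVert\hat P\rVert_p \le \beta_{s,a}$ and $\langle\hat P,\mathbf{1}\rangle = 0$. The zero-sum (simplex) constraint is the crux: I would use it to write $\langle\hat P, v\rangle = \langle\hat P, v - \omega\mathbf{1}\rangle$ for every scalar $\omega$, then apply Hölder's inequality to obtain $\langle\hat P, v-\omega\mathbf{1}\rangle \ge -\lVert\hat P\rVert_p\lVert v-\omega\mathbf{1}\rVert_q \ge -\beta_{s,a}\lVert v-\omega\mathbf{1}\rVert_q$. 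Minimizing the right-hand side over $\omega$ yields the lower bound $-\beta_{s,a}\kappa_q(v)$ by the definition \eqref{def:kp} of $\kappa_q$.

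The main obstacle is showing this lower bound is attained by a feasible $\hat P$. Let $\omega^*$ attain $\kappa_q(v) = \lVert v-\omega^*\mathbf{1}\rVert_q$ and set $w = v - \omega^*\mathbf{1}$. I would take the Hölder-extremal direction $\hat P(s') = -\beta_{s,a}\,\lvert w(s')\rvert^{q-1}\mathrm{sign}(w(s'))/\lVert w\rVert_q^{q-1}$, which by the equality case of Hölder achieves $\langle\hat P, w\rangle = -\beta_{s,a}\lVert w\rVert_q$ and, using the conjugacy identity $p(q-1)=q$, satisfies $\lVert\hat P\rVert_p = \beta_{s,a}$. The delicate point is verifying $\langle\hat P,\mathbf{1}\rangle = 0$: this follows from the first-order optimality condition for $\omega^*$, namely that $\frac{d}{d\omega}\lVert v-\omega\mathbf{1}\rVert_q^q$ vanishes at $\omega^*$, which reads $\sum_{s'}\lvert w(s')\rvert^{q-1}\mathrm{sign}(w(s')) = 0$ — exactly the (rescaled) sum $\sum_{s'}\hat P(s')$. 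Hence the constructed $\hat P$ is feasible and attains the bound, so the transition contribution equals $-\gamma\beta_{s,a}\kappa_q(v)$.

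Combining the reward and transition minimizers yields the stated closed form for $\mathcal{T}^\pi_{\mathcal{U}^{\texttt{sa}}_p}$. Finally, for $\mathcal{T}^*_{\mathcal{U}^{\texttt{sa}}_p} = \max_\pi\mathcal{T}^\pi_{\mathcal{U}^{\texttt{sa}}_p}$, I would note that the bracketed quantity no longer depends on $\pi$ and the expression is linear in the weights $\pi(\cdot|s)$ over the probability simplex, so its maximum is attained at a vertex, which replaces $\sum_a\pi(a|s)[\cdots]$ by $\max_a[\cdots]$. The boundary exponents $p=1$ (so $q=\infty$) and $p=\infty$ (so $q=1$), where the derivative argument must be replaced by a subgradient/optimality characterization, can be handled by a limiting argument or checked directly against the closed forms for $\kappa_q$ in Table~\ref{tb:kappa}.
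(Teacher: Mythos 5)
Your proof is correct, and its skeleton coincides with the paper's: separate the noise from the nominal values, use \texttt{sa}-rectangularity to decouple the minimization across state--action pairs, dispose of the reward noise trivially, and convert $\max_\pi$ into $\max_a$ by linearity over the simplex. Where you genuinely diverge is the core step, the kernel-noise minimization $\min\{\langle c,v\rangle : \lVert c\rVert_p\le\beta_{s,a},\ \sum_{s'}c(s')=0\}$. The paper isolates this as Lemma \ref{regfn} and proves it by writing a Lagrangian and manipulating the KKT stationarity conditions until the multiplier of the zero-sum constraint is identified with the $q$-mean and the optimal value with $-\beta_{s,a}\kappa_q(v)$. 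You instead argue by duality: the zero-sum constraint lets you re-center $v$ by any $\omega$, H\"older gives the lower bound $-\beta_{s,a}\lVert v-\omega\mathbf{1}\rVert_q$ for every $\omega$, hence $-\beta_{s,a}\kappa_q(v)$ after choosing the best $\omega$, and attainment follows from the H\"older-extremal vector built from $w=v-\omega^*\mathbf{1}$, whose feasibility $\sum_{s'}\hat P(s')=0$ is precisely the first-order condition defining $\omega^*$. Your route is more elementary and in places more rigorous: the lower bound needs no differentiability or KKT machinery, the worst-case kernel perturbation is exhibited explicitly, and the argument makes transparent why re-centering (hence $\kappa_q(v)$ rather than $\lVert v\rVert_q$, as in the setting of \cite{derman2021twice} without the simplex condition) is what appears. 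The paper's KKT route, in exchange, discovers the optimal shift from stationarity rather than positing it, and produces along the way the sign equation for $\omega_p$ that the paper reuses for its binary-search computation (Table \ref{tb:mean}). Both arguments share the same caveat at the boundary exponents $p\in\{1,\infty\}$, where derivatives must be replaced by subgradient or limiting arguments; you flag this explicitly, and the paper handles it the same way (a separate from-scratch check for $p=1$ and a limit for $p=\infty$). The only detail you omit is the degenerate case $v\propto\mathbf{1}$, where $w=0$ and your extremal vector is undefined; there $\kappa_q(v)=0$ and $\hat P=\mathbf{0}$ trivially attains the bound, so nothing breaks.
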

\begin{proof}
The proof in appendix, it mainly consists of two parts: a) Separating the noise from nominal values. b) The reward noise to yields the term $-\alpha_{s,a}$ and noise in kernel yields $ -\gamma\beta_{s,a}\kappa_q(v)$. 
\end{proof}

Note, the reward penalty is proportional to both the uncertainty radiuses and a novel variance function $\kappa_p(v)$. 

We recover non-robust value iteration  by putting uncertainty radiuses (i.e. $\alpha_{s,a},\beta_{s,a}$) to zero, in the above results. Furthermore, the same is true for all subsequent robust results in this paper. 
\subsubsection*{Q-Learning}
The above result immediately implies the robust value iteration, and also suggests the Q-value iteration of the following form 
\begin{align*}
    Q_{n+1}(s,a)& = \max_{a}\Bigm[R_0(s,a)-\alpha_{s,a}-\gamma\beta_{s,a}\kappa_q(v_n) + \sum_{s'}P_0(s'|s,a)\max_{a}Q_n(s',a')\Bigm],
\end{align*}
where $  v_{n}(s) = \max_{a}Q_{n}(s,a)$, which  is further discussed in appendix \ref{app:SALpQL}.

Observe that value-variance $\kappa_p(v)$  can be estimated online, using batches or other more sophisticated methods.  This paves the path for generalizing to a model-free setting similar to \cite{Rcontamination}. 

\subsubsection*{Forbidden Transitions}
Now, we focus on the cases where $P_0(s'|s, a) =0$ for some states $s'$, that is, forbidden transitions. In many practical situations, for a given state, many transitions are impossible. For example, consider a grid world example where only a single-step jumps (left, right, up, down) are allowed, so in this case, the probability of making a multi-step jump is impossible. So upon adding noise to the kernel, the system should not start making impossible transitions. Therefore, noise set $\mathcal{P}$ must satisfy additional constraint: For any $(s,a)$ if $P_0(s'|s,a)=0$ then
   \[P(s'|s,a) = 0  , \quad \forall P \in \mathcal{P}.  \]
Incorporating this constraint without much change in the theory is one of our novel contribution, and is discussed in the appendix \ref{app:ForbiddenTransition}.
% Now, we focus on the cases where $P_0(s'|s, a) =0$, that is, forbidden transitions. In most practical situations, for a given state, many transitions are impossible. For example, consider a grid world example where only a single-step jump (left, right, up, down) is allowed, so in this case, the probability of making a multi-step jump is impossible. Formally, let $\mathcal{F}_s$ be the set of forbidden states in state $s$, and  nominal kernel $P_0$ and all noise kernel in $\mathcal{P}$  must satisfy
% \begin{align*}
%     P_0(s'|s,a) = P(s'|s,a) = 0  , \quad \forall a \in \mathcal{A}, P \in \mathcal{P}, s'\in \mathcal{F}_s. 
% \end{align*}
% As it makes no sense, upon adding noise to the kernel, the system starts making impossible transitions. These constrained can be naturally incorporated without much change in theory below, hence we discuss it in appendix \ref{app:ForbiddenTransition}. This is also one of our novel contribution.\\

\begin{table*}
\caption{Optimal robust Bellman operator evaluation}
  \label{tb:val}
  \centering
  \begin{tabular}{lll}
    \toprule                   
    $\mathcal{U}$     & $(\mathcal{T}^*_\mathcal{U}v)(s)$     & remark \\
    \midrule
    $\mathcal{U}^{\texttt{s}}_p$ & $\min x \quad s.t. \quad \Bigm\lVert \bigm(Q_s - x\mathbf{1}\bigm)\circ \mathbf{1}\bigm(Q_s\geq x\bigm)\Bigm\rVert_p = \sigma_q(v,s)$   &Solve by binary search    \\& \\
    $\mathcal{U}^{\texttt{s}}_1$    & $\max_{k} \frac{\sum_{i=1}^kQ(s,a_i) -\sigma_\infty(v,s)}{k}$  & Highest penalized average    \\&\\
    $\mathcal{U}^{\texttt{s}}_2$     &
    % $\max_{k}\frac{\sum_{i=1}^{k}Q(s,a_i)}{k}- \sqrt{\frac{(\alpha_s +\gamma\beta_{s}\kappa_2(v))^2-  \sum_{i=1}^{k}(Q(s,a_i) - \frac{\sum_{i=1}^{k}Q(s,a_i)}{k})^2 }{k}$ 
     By algorithm \ref{alg:main:f2}
    & High mean and variance  \\&\\
    $\mathcal{U}^{\texttt{s}}_\infty$     &$    \max_{a\in\mathcal{A}}Q(s,a) -\sigma_1(v,s)$     & Best action  \\&\\
    $\mathcal{U}^{\texttt{sa}}_p$    &$  \max_{a\in\mathcal{A}}\Bigm[Q(s,a) - \alpha_{sa} -\gamma\beta_{sa} \kappa_q(v) \Bigm] $    & Best penalized action \\&\\
    nr     &$  \max_{a}Q(s,a)$&  Best action \\ 
   \bottomrule
  \end{tabular}
  \begin{tabular}{l}
    where nr stands for Non-Robust MDP, $ \quad Q(s,a) = R_0(s,a) + \gamma\sum_{s'}P_0(s'|s,a)v(s'), $ \\sorted Q-value: $ Q(s,a_1)\geq \cdots\geq Q(s,a_A)\quad$,  $\sigma_q(v,s)= \alpha_s +\gamma\beta_{s}\kappa_q(v)$, $Q_s = Q(s,\cdot)$,\\ and $\circ $ is Hadamard product.
  \end{tabular}
\end{table*}

\subsection{\texttt{S}-rectangular Lp robust Markov Decision Processes}\label{main:sec:sLp}
In this subsection, we discuss the core contribution of this paper: the evaluation of robust Bellman operators for the \texttt{s}-rectangular uncertainty set. 
% We discuss how to efficiently solve this problem in the rest of this section.
% Here we discuss the core contribution of the paper: Evaluating robust Bellman operators for \texttt{s}-rectangular uncertainty set. Theorem \ref{rs:SLpPlanning} states that policy evaluation for the \texttt{s}-rectangular case, is similar to \texttt{sa}-rectangular case except the reward penalty has extra dependence on policy. This dependence on the policy makes policy improvement a challenging task leading to richer theory, and the rest of the section is devoted to efficiently solving it.

We begin by defining \texttt{s}-rectangular $L_p$ constrained uncertainty set $\mathcal{U}^{\texttt{s}}_p$ as  
\[\mathcal{U}^{\texttt{s}}_p := (P_0 +  \mathcal{P})\times(R_0 +  \mathcal{R})\]
where noise sets are \texttt{s}-rectangular, 
\[\mathcal{P} = \times_{s\in\mathcal{S}}\mathcal{P}_{s},\quad\text{and}\quad  \mathcal{R} = \times_{s\in\mathcal{S}}\mathcal{R}_{s},\]
and each component are bounded by $L_p$ norm,
\begin{align*}
        \mathcal{R}_{s} &=  \Bigm\{R_s:\mathcal{A}\to\mathbb{R}\Bigm| \lVert R_s\rVert_p\leq \alpha_{s}\Bigm\},\quad\text{and}\\
     \mathcal{P}_{s} &= \Bigm\{P_s:\mathcal{S}\times\mathcal{A}\to\mathbb{R}\Bigm|\lVert P_s\rVert_p\leq \beta_{s}, \sum_{s'}P_s(s',a)=0,\forall a
     \Bigm\},
\end{align*}
with  radius vectors  $\alpha$ and  small enough $\beta$. 

% \[\mathcal{U}^{\texttt{s}}_p = (R_0 +  \mathcal{R})\times(P_0 +  \mathcal{P}), \quad \text{where}\quad \mathcal{P} = \times_{s\in\mathcal{S}}\mathcal{P}_{s}\]
% \begin{align*}
%      \mathcal{P}_{s} = \{p_s:\mathcal{S}\times\mathcal{A}&\to\mathbb{R}\Bigm| \sum_{s'}p_s(s',a)=0,\forall a, \\
%      &\lVert p_s\rVert_p\leq \beta_{s}\},
% \end{align*}
% \[\mathcal{R} = \times_{s\in\mathcal{S}}\mathcal{R}_{s}, \quad \mathcal{R}_{s} =  \{r_s:\mathcal{A}\to\mathbb{R}\Bigm| \lVert r_s\rVert_p\leq \alpha_{s}\},\]

% and $\alpha_{s}, \beta_{s}\in\mathbb{R}$ are reward and transition kernel respectively noise radius that is chosen small enough so that all the transition kernels in $P_0 +\mathcal{P}$ are well defined. 

 The result below shows that, compared to the \texttt{sa}-rectangular case, the policy evaluation for the \texttt{s}-rectangular case has an extra dependence on the policy.
\begin{theorem} \label{rs:SLpPlanning}(Policy Evaluation) \texttt{S}-rectangular $L_p$ robust Bellman operator is equivalent to reward-value-policy regularized (non-robust) Bellman operator:
\begin{align*}
    &(\mathcal{T}^\pi_{\mathcal{U}^{\texttt{s}}_p} v)(s)  =   -\bigm[\alpha_s +\gamma\beta_{s}\kappa_q(v)\bigm]\lVert\pi_s\rVert_q +\sum_{a}\pi(a|s)\Bigm[R_0(s,a) +\gamma \sum_{s'}P_0(s'|s,a)v(s')\Bigm],
\end{align*}
where $\lVert \pi_s\rVert _q$ is $q$-norm of the vector $\pi(\cdot|s)\in\Delta_{\mathcal{A}}$.
\end{theorem}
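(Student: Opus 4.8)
The plan is to peel the operator apart by its definition and reduce everything to a single state $s$. Writing each $(P,R)\in\mathcal{U}^{\texttt{s}}_p$ as $P=P_0+P_s'$ and $R=R_0+R_s'$ with $R_s'\in\mathcal{R}_s$ and $P_s'\in\mathcal{P}_s$, the \texttt{s}-rectangular structure makes the inner minimization at state $s$ depend only on the $s$-th noise blocks. First I would split the objective into the nominal contribution $\sum_a\pi(a|s)\big[R_0(s,a)+\gamma\sum_{s'}P_0(s'|s,a)v(s')\big]$ plus two independent noise minimizations, since the reward noise and kernel noise range over an independent product:
\[
\min_{R_s'\in\mathcal{R}_s}\sum_a\pi(a|s)R_s'(a)\;+\;\gamma\min_{P_s'\in\mathcal{P}_s}\sum_a\pi(a|s)\sum_{s'}P_s'(s',a)v(s').
\]

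The reward term is immediate: by $L_p$--$L_q$ Hölder duality, $\min_{\lVert R_s'\rVert_p\le\alpha_s}\langle\pi_s,R_s'\rangle=-\alpha_s\lVert\pi_s\rVert_q$, with the minimizer anti-aligned with $\pi_s$. This produces the $-\alpha_s\lVert\pi_s\rVert_q$ summand directly.

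The kernel term is the crux and where I expect the real work. Set $u(s',a):=\pi(a|s)v(s')$ and, more generally, $u_\omega(s',a):=\pi(a|s)(v(s')-\omega)$, so the term equals $\gamma\min_{P_s'\in\mathcal{P}_s}\langle P_s',u\rangle$ over the $L_p$ ball of radius $\beta_s$ intersected with the zero-column-sum subspace $\{\,P_s':\sum_{s'}P_s'(s',a)=0\ \forall a\,\}$. For the lower bound I would use that this constraint gives $\langle P_s',u\rangle=\langle P_s',u_\omega\rangle$ for every scalar $\omega$; Hölder together with the factorization $\lVert u_\omega\rVert_q=\lVert\pi_s\rVert_q\,\lVert v-\omega\mathbf{1}\rVert_q$ then yields $\langle P_s',u\rangle\ge-\beta_s\lVert\pi_s\rVert_q\lVert v-\omega\mathbf{1}\rVert_q$ for all $\omega$, and optimizing over $\omega$ produces the bound $-\beta_s\lVert\pi_s\rVert_q\kappa_q(v)$. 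For matching achievability I would take the Hölder equality case at the minimizing shift $\omega^\star$: with $\tilde v:=v-\omega^\star\mathbf{1}$, set $P_s'(s',a)\propto-\pi(a|s)^{q-1}\,\mathrm{sign}(\tilde v(s'))\,|\tilde v(s')|^{q-1}$, scaled so that $\lVert P_s'\rVert_p=\beta_s$.

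The subtle step, and the main obstacle, is checking feasibility of this candidate, i.e. that each column sums to zero. This is exactly the first-order optimality condition for $\kappa_q(v)=\min_\omega\lVert v-\omega\mathbf{1}\rVert_q$: differentiating $\sum_{s'}|v(s')-\omega|^q$ and setting it to zero at $\omega^\star$ gives $\sum_{s'}\mathrm{sign}(\tilde v(s'))|\tilde v(s')|^{q-1}=0$, which is precisely the zero-column-sum requirement. Hence the constructed $P_s'$ lies in $\mathcal{P}_s$ and attains $\langle P_s',u\rangle=-\beta_s\lVert\pi_s\rVert_q\kappa_q(v)$, matching the lower bound. Combining the nominal, reward, and scaled kernel contributions then yields the claimed identity. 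I would handle the non-smooth endpoints $q=1,\infty$ and the actions with $\pi(a|s)=0$ by a subgradient or limiting argument, since the optimality condition there holds only in a one-sided sense while the construction still goes through.
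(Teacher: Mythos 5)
Your proof is correct, and it reaches the crux --- the kernel-noise term --- by a genuinely different route than the paper. The paper evaluates $\min_{p_s\in\mathcal{P}_s}\sum_a\pi_s(a)\langle p_{s,a},v\rangle$ by splitting the state-level radius into per-action budgets (using that $\lVert p_s\rVert_p^p=\sum_a\lVert p_{s,a}\rVert_p^p$, so the ball $\lVert p_s\rVert_p\le\beta_s$ decomposes over $\sum_a\beta_{s,a}^p\le\beta_s^p$ into products of per-action balls), applying its \texttt{sa}-rectangular noise lemma (Lemma \ref{regfn}, proved by a Lagrangian/KKT computation) to each action to get $-\beta_{s,a}\kappa_q(v)$, and then allocating the budget across actions by one more H\"older step, $\max\{\sum_a\pi_s(a)\beta_{s,a}:\sum_a\beta_{s,a}^p\le\beta_s^p\}=\beta_s\lVert\pi_s\rVert_q$. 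You instead solve the matrix problem in one shot: zero column sums make $\langle P_s',u\rangle$ invariant to replacing $v$ by $v-\omega\mathbf{1}$, entrywise H\"older plus the factorization $\lVert u_\omega\rVert_q=\lVert\pi_s\rVert_q\lVert v-\omega\mathbf{1}\rVert_q$ give the lower bound $-\beta_s\lVert\pi_s\rVert_q\kappa_q(v)$, and your H\"older-equality candidate is feasible precisely because stationarity of $\omega\mapsto\lVert v-\omega\mathbf{1}\rVert_q$ at $\omega^\star$ is exactly the zero-column-sum identity --- the same identity the paper derives inside the KKT proof of Lemma \ref{regfn} (there written $\sum_s \mathrm{sign}(v(s)-\lambda)\lvert v(s)-\lambda\rvert^{1/(p-1)}=0$; note $q-1=1/(p-1)$). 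Your route buys a self-contained, primal argument with no KKT machinery, and it makes transparent why the $q$-variance $\kappa_q(v)$ rather than $\lVert v\rVert_q$ appears: the zero-sum constraint quotients out constant shifts of $v$. The paper's route buys modularity --- Lemma \ref{regfn} is needed for the \texttt{sa}-rectangular Theorem \ref{rs:saLprvi} anyway --- and its budget-splitting view exhibits how the adversary concentrates kernel noise on actions in proportion to $\pi_s(a)^{q-1}$, prefiguring the threshold structure of the optimal policies. Your closing caveat about $q\in\{1,\infty\}$ and actions with $\pi(a|s)=0$ is the right one to flag, but it is not a substantive gap: for $1<p<\infty$ your argument is complete as stated, and the boundary cases admit the same closed-form or limiting treatment the paper itself resorts to.
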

\begin{proof}
The proof in the appendix: the techniques are similar to as its $\texttt{sa}$-rectangular counterpart.
\end{proof}

The reward penalty in this case has an additional dependence on the norm of the policy ($\lVert \pi_s\rVert_q$). This norm is conceptually similar to entropy regularization $\sum_{a}\pi(a|s)\ln(\pi(a|s))$, which is widely studied in the literature \cite{EntReg1,EntReg2,EntReg3,SoftQL,TRPO}, and other regularizers such as $\sum_{a}\pi(a|s)\textit{tsallis}(\frac{1-\pi(a|s)}{2})$, $\sum_{a}\pi(a|s)cos(cos(\frac{\pi(a|s) }{2}))$, etc. 
% The reward penalty here, has an extra dependence on the norm of the policy ($\lVert \pi_s\rVert_q$). In spirit, $\lVert \pi_s\rVert_q$ is similar to entropy regularization  $\sum_{a}\pi(a|s)\ln(\pi(a|s))$ and other regularizers such as $\sum_{a}\pi(a|s)\textit{tsallis}(\frac{1-\pi(a|s)}{2})$, $\sum_{a}\pi(a|s)cos(cos(\frac{\pi(a|s) }{2})$, etc that are widely studied in the literature \cite{EntReg1,EntReg2,EntReg3,SoftQL,TRPO}. 

\textbf{Note:} These regularizers, which are convex functions, are often used to promote stochasticity in the policy and thus improve exploration during learning. However, the above result shows another benefit of these regularizers: they can improve robustness, which in turn can lead to better generalization. 
% These regularizers are convex functions, they are able to promote stochasticity in the policy, hence are often used to encourage stochasticity in the policy and thus improve exploration during learning. However, this result shows another benefit of these regularizers: they can improve robustness, which in turn can lead to better generalization. 

% The use of policy regularizers, such as policy entropy, has traditionally been thought to improve exploration during learning, by encouraging stochasticity in the policy. However, our result suggests that these regularizers may also help in robustness and generalization, as they are convex functions.

%  These regularizers are convex functions, hence encouraging the policy to be stochastic. Traditional wisdom behind using policy regularization such as policy entropy is that it encourages stochasticity in policy and hence improves exploration while learning. The above result gives another twist to the story, as it clearly indicates that policy regularizers help in robustness and hence may improve generalization. 
 
 In literature, the above regularizers are scaled with arbitrary chosen constant, here we have the  different constant $\alpha_s +\gamma\beta_{s}\kappa_q(v)$ for different states.

 This extra dependence makes the policy improvement a more challenging task and thus, presents a richer theory.  

\begin{theorem}\label{rs:rve}(Policy improvement)
For any vector $v$ and state $s$,  $(\mathcal{T}^*_{\mathcal{U}^{\texttt{s}}_p}v)(s)$ is the minimum value of $x$ that satisfies 
\begin{align}\label{eq:rve:val}
    \Bigm[\sum_{a}\Bigm(Q(s,a) - x\Bigm)^p\mathbf{1}\Bigm( Q(s,a) \geq x\Bigm)\Bigm]^{\frac{1}{p}}  = \sigma,
\end{align}
where   $Q(s,a) = R_0(s,a) + \gamma\sum_{s'} P_0(s'|s,a)v(s')$, and $\sigma = \alpha_s + \gamma\beta_s\kappa_q(v)$.
\end{theorem}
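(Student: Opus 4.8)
The plan is to start from the closed form of the policy-evaluation operator established in Theorem~\ref{rs:SLpPlanning}. Writing $Q_s = Q(s,\cdot)$ and $\sigma = \alpha_s + \gamma\beta_s\kappa_q(v)$, that theorem says $(\mathcal{T}^\pi_{\mathcal{U}^{\texttt{s}}_p}v)(s) = \langle \pi_s, Q_s\rangle - \sigma\lVert \pi_s\rVert_q$. Since $\mathcal{T}^*_{\mathcal{U}^{\texttt{s}}_p}=\max_\pi \mathcal{T}^\pi_{\mathcal{U}^{\texttt{s}}_p}$ and the choice of $\pi_s$ at state $s$ affects only the $s$-th component, evaluation at $s$ decouples into the single finite-dimensional program
\[
(\mathcal{T}^*_{\mathcal{U}^{\texttt{s}}_p}v)(s) = \max_{\pi_s\in\Delta_{\mathcal{A}}}\Big[\langle \pi_s, Q_s\rangle - \sigma\lVert \pi_s\rVert_q\Big],
\]
which is concave in $\pi_s$ because $-\lVert\cdot\rVert_q$ is concave and $\langle\pi_s,Q_s\rangle$ is linear. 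The remaining work is to solve this program explicitly and match its optimal value with the root described in the statement.

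I would solve it via KKT, first in the smooth regime $1<p<\infty$ (so $1<q<\infty$). Introducing a multiplier $\lambda$ for $\sum_a \pi_s(a)=1$ and $\mu_a\geq 0$ for $\pi_s(a)\geq 0$, and using $\partial_{\pi_a}\lVert\pi_s\rVert_q=\lVert\pi_s\rVert_q^{1-q}\pi_s(a)^{q-1}$, stationarity reads $Q(s,a)-\sigma\lVert\pi_s\rVert_q^{1-q}\pi_s(a)^{q-1}-\lambda+\mu_a=0$. For any support action ($\mu_a=0$) this rearranges, via the conjugacy identities $\tfrac{1}{q-1}=p-1$ and $q(p-1)=p$, to
\[
\pi_s(a) = \lVert\pi_s\rVert_q\Big(\tfrac{Q(s,a)-\lambda}{\sigma}\Big)^{p-1},
\]
while complementary slackness forces $\pi_s(a)=0$ whenever $Q(s,a)\leq\lambda$. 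This is exactly the announced threshold policy: only actions of positive advantage $Q(s,a)-\lambda$ are played, with probability proportional to the $(p-1)$-th power of that advantage.

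The optimal value then falls out algebraically. Substituting the support relation $Q(s,a)=\lambda+\sigma\lVert\pi_s\rVert_q^{1-q}\pi_s(a)^{q-1}$ into the objective and using $\sum_a\pi_s(a)=1$ together with $\sum_a\pi_s(a)^q=\lVert\pi_s\rVert_q^q$ collapses it to exactly $\lambda$, so the optimal value equals the threshold. To pin down $\lambda$ I would raise the support formula to the $q$-th power and sum: the factor $\lVert\pi_s\rVert_q^q$ cancels and (again using $q(p-1)=p$) one is left with $\sum_{a:\,Q(s,a)>\lambda}(Q(s,a)-\lambda)^p=\sigma^p$, which is precisely Equation~\eqref{eq:rve:val} with $x=\lambda$.

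The main obstacle I expect is the self-referential support: the active set $\{a:Q(s,a)>\lambda\}$ depends on $\lambda$, which is itself determined only once the active set is fixed, so the KKT manipulation must be closed into a genuine characterization. I would resolve this by studying the scalar map $g(x)=\big[\sum_a (Q(s,a)-x)^p\mathbf{1}(Q(s,a)\geq x)\big]^{1/p}$: it is continuous, vanishes for $x\geq\max_a Q(s,a)$, and is strictly decreasing on $(-\infty,\max_a Q(s,a))$ since each summand is non-increasing and at least one strictly decreasing there. Hence for $\sigma>0$ the equation $g(x)=\sigma$ has a unique root, forced to be $\lambda$ and trivially its own minimum; for $\sigma=0$ (the non-robust limit) $g$ vanishes on a half-line and the minimum root $\max_a Q(s,a)$ recovers the greedy value, which is what justifies the ``minimum $x$'' phrasing. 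The boundary exponents $p=1$ and $p=\infty$, where $\lVert\cdot\rVert_q$ is non-smooth, fall outside the KKT derivation and I would handle them either by continuity in $p$ or by the direct computations recorded in Table~\ref{tb:val}. This monotone-root packaging is essentially the content of the paper's $L_p$ water-pouring lemma, which I would invoke to make the argument rigorous.
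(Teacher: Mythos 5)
Your proposal is correct and follows essentially the same route as the paper: it reduces $(\mathcal{T}^*_{\mathcal{U}^{\texttt{s}}_p}v)(s)$ via Theorem~\ref{rs:SLpPlanning} to the simplex program $\max_{\pi_s\in\Delta_{\mathcal{A}}}\langle\pi_s,Q_s\rangle-\sigma\lVert\pi_s\rVert_q$ and then solves it by the KKT stationarity/complementary-slackness computation, which is exactly the content (and the proof) of the paper's $L_p$ water-pouring lemma, including the monotone-root argument used there to justify uniqueness and binary search. Your inline derivation of the threshold form, the identity that the optimal value equals the multiplier $\lambda$, and the deferral of $p=1,\infty$ to separate treatment all match the paper's appendix argument, so there is nothing to correct.
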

\begin{proof} The proof is in the appendix; the main steps are:
\begin{align*}
 &(\mathcal{T}^*_{\mathcal{U}^{\texttt{s}}_p}v)(s)= \max_{\pi}(\mathcal{T}^\pi_{\mathcal{U}^{\texttt{s}}_p} v)(s),  \qquad \text{(from definition)}\\
 &\text{( Using policy evaluation Theorem \ref{rs:SLpPlanning})}\\
 &=  \max_{\pi}\Bigm[(\mathcal{T}^\pi_{(P_0,R_0)}v)(s)-\bigm[\alpha_s +\gamma\beta_{s}\kappa_q(v)\bigm]\lVert\pi_s\rVert_q \Bigm]\\
 % \\ &\qquad+\sum_{a}\pi_s(a)\bigm(R_0(s,a) +\gamma \sum_{s'}P_0(s'|s,a)v(s')\bigm)\Bigm]\\
 &= \max_{\pi_s\in\Delta_\mathcal{A}}  \langle \pi_s,Q_s\rangle- \sigma\lVert \pi_s\rVert_q  \quad \text{( where $Q_s = Q(\cdot|s)$).}
\end{align*}
The solution to the above optimization problem is technically complex. Specifically, for $p=2$, the solution is known as the water filling/pouring lemma \cite{anava2016k}, we generalize it to the $L_p$ case, in the appendix.
% The rest follows from the solution of the above optimization problem, which is technically complex. For $p=2$, it is called water filling/pouring lemma \cite{anava2016k}, we generalize it to $L_p$ case in the appendix.
\end{proof}
% \begin{theorem}\label{rs:rve}(Policy improvement) The optimal robust Bellman operator $(\mathcal{T}^*_{\mathcal{U}^{\texttt{s}}_p}v)(s)$ satisfies
% \begin{align}\label{eq:rve:val}
%     \sum_{a}\bigm(Q(s,a) - x\bigm)^p\mathbf{1}\bigm( Q(s,a) \geq x\bigm)  = \sigma^p,
% \end{align}
%  that can be found using binary search between $[\max_{a}Q(s,a)-\sigma, \quad \max_{a}Q(s,a)]$,
% where $\sigma = \alpha_s + \gamma\beta_s\kappa_q(v)$, and  $Q(s,a) = R_0(s,a) + \gamma\sum_{s'} P_0(s'|s,a)v(s')$. 
 % This equation can be solved using binary search between the interval $[\max_{a}Q(s,a)-\sigma, \quad \max_{a}Q(s,a)]$.
 To better understand the nature of \eqref{eq:rve:val}, lets look at the 'sub-optimality distance' function $g$,
 \[g(x) :=\Bigm[\sum_{a}\Bigm(Q(s,a) - x\Bigm)^p\mathbf{1}\Bigm( Q(s,a) \geq x\Bigm)\Bigm]^{\frac{1}{p}}. \]
 
The $g(x)$ is the cumulative difference between $x$ and  the Q-values of actions whose Q-value is greater than $x$. The function is monotonically decreasing, with a lower bound of $\sigma$ at $x=\max_{a}Q(s,a)-\sigma$ and a value of zero for all $x\geq \max_{a}Q(s,a)$. Since,  $(\mathcal{T}^*_{\mathcal{U}^{\texttt{s}}_p}v)(s)$ is the value of $x$ at which the "sub-optimality distance" $g(x)$ is equal to the "uncertainty penalty" $\sigma$. Hence, \eqref{eq:rve:val} can be approximately solved using a binary search between the interval $[\max_{a}Q(s,a)-\sigma, \quad \max_{a}Q(s,a)]$.
 % The value $g(x)$ is the cumulative difference between $x$ and  Q-values of actions, whose Q-value is greater than $x$. It is a monotonically decreasing function, it is zero for all $x\geq \max_{a}Q(s,a)$ and at least $\sigma$ at $x=\max_{a}Q(s,a)-\sigma$. Note that $(\mathcal{T}^*_{\mathcal{U}^{\texttt{s}}_p}v)(s)$ is the value of $x$ where 'sub-optimality distance' $g(x)$ is exactly equals 'uncertainty penalty' $\sigma$. This implies that \eqref{eq:rve:val} can be approximately solved using binary search between the interval $[\max_{a}Q(s,a)-\sigma, \quad \max_{a}Q(s,a)]$. 

% We invite the readers to consider the impact of $p,\alpha_s,$ and $\beta_s$ on $(\mathcal{T}^*_{\mathcal{U}^{\texttt{s}}_p}v)(s)$. Specifically, if $\alpha_s=\beta_s=0$, then $\sigma = 0$ and $(\mathcal{T}^*_{\mathcal{U}^{\texttt{s}}_p}v)(s)=\max_{a}Q(s,a)$, similar to the non-robust case. On the other hand, if $p=\infty$, then $(\mathcal{T}^*_{\mathcal{U}^{\texttt{s}}_p}v)(s)=\max_{a}Q(s,a) -\sigma$, similar to the \texttt{sa}-rectangular case. We can further observe that when $p=1$ or $p=2$, the magnitude of $(\mathcal{T}^*_{\mathcal{U}^{\texttt{s}}_p}v)(s)$ changes. Additionally, as $\alpha_s$ and $\beta_s$ increase, $\sigma$ increases, which in turn decreases $(\mathcal{T}^*_{\mathcal{U}^{\texttt{s}}_p}v)(s)$. We ask the readers to consider the rate of this decrease.
We invite the readers to consider the dependence of $(\mathcal{T}^*_{\mathcal{U}^{\texttt{s}}_p}v)(s)$ on $p,\alpha_s,$ and $\beta_s$, specifically:
\begin{enumerate}
    \item If $\alpha_s=\beta_s =0$ then $\sigma =0$ which implies $(\mathcal{T}^*_{\mathcal{U}^{\texttt{s}}_p}v)(s)=\max_{a}Q(s,a)$, same as non-robust case.
    \item If $p=\infty$ then $(\mathcal{T}^*_{\mathcal{U}^{\texttt{s}}_p}v)(s)=\max_{a}Q(s,a) -\sigma$, as in the \texttt{sa}-rectangular case.
    \item For $p=1,2$, \eqref{eq:rve:val} becomes linear and quadratic equation respectively, hence can be solved exactly.
    \item As $\alpha_s$ and $\beta_s$ increase, $\sigma$ increases, resulting in a decrease in $(\mathcal{T}^*_{\mathcal{U}^{\texttt{s}}_p}v)(s)$ at a rate that becomes smaller as $\sigma$ increases. When $\sigma$ is sufficiently small, $(\mathcal{T}^*_{\mathcal{U}^{\texttt{s}}_p}v)(s) = \max_{a}Q(s,a)-\sigma$.
\end{enumerate}
Solution to \eqref{eq:rve:val} can be obtained in closed form for the cases of $p=1,\infty$, exactly by algorithm \ref{alg:main:f2} for $p=2$, and approximately by binary search for general $p$, as summarized in table \ref{tb:val}. 
 % As summarized in table \ref{tb:val}, solution to \eqref{eq:rve:val} can be exactly obtained in closed form for $p=1,\infty$, exactly by algorithm \ref{alg:main:f2} for $p=2$, and approximately by binary search for general $p$.  
 % Notably, it can be exactly solved for $p=1$ (algorithm \ref{alg:f1} in appendix) and algorithm \ref{alg:main:f2}.The best of both worlds are summarized in table \ref{tb:val}. 
%  % This implies that the exact policy improvement step can be done for $p=1,2,\infty$ in closed form or by a simple algorithm and inexactly for general $p$ via binary search efficiently.
% We conclude the discussion by noting that robust value iteration can be done efficiently for both $\texttt{sa}$ and \texttt{s} rectangular $L_p$ robust MDPs. In the next sections, we discuss the nature of optimal policies, time complexity of robust value iterations and experiments validating the time complexity.

In this section, we have demonstrated that robust Bellman operators can be efficiently evaluated for both $\texttt{sa}$ and \texttt{s} rectangular $L_p$ robust MDPs, thus enabling efficient robust value iteration. In the following sections, we discuss the nature of optimal policies and the time complexity of robust value iteration. Finally, we present experiments validating the time complexity of robust value iteration.
% In this section, we showed that robust Bellman operators can be evaluated efficiently for both $\texttt{sa}$ and \texttt{s} rectangular $L_p$ robust MDPs implying efficient robust value iterations. In upcoming sections, we discuss the nature of optimal policies as well as the time complexity of robust value iteration. Finally, we presented experiments validating the time complexity of robust value iteration.
\begin{algorithm}[tb]
\caption{\texttt{s}-rectangular $L_2$ robust Bellman operator\\ (see algorithm 1 of \cite{anava2016k}}
\label{alg:main:f2}
\textbf{Input}: $v, s,  x = Q(s,\cdot), $ and $\sigma = \alpha_s +\gamma\beta_{s}\kappa_q(v)$\\
% \textbf{Parameter}: Optional list of parameters\\
\textbf{Output}: $(\mathcal{T}^*_{\mathcal{U}^{\texttt{s}}_p}v)(s)$
\begin{algorithmic}[1] %[1] enables line numbers
\STATE Sort $x$ such that $x_1\geq x_2, \cdots \geq x_A$.
\STATE Set $k=0$ and $\lambda = x_1-\sigma$

\WHILE{$k \leq A-1  $ and $\lambda \leq x_k$}
    \STATE  $k = k+1$
    \STATE  \[\lambda = \frac{1}{k}\Bigm[\sum_{i=1}^{k}x_i - \sqrt{k\sigma^2 + (\sum_{i=1}^{k}x_i^2 - k\sum_{i=1}^{k}x_i)^2}\Bigm]\]
\ENDWHILE

\STATE \textbf{return} $\lambda$
\end{algorithmic}
\end{algorithm}

\begin{table*}
\caption{Optimal Policy}
  \label{tb:Opt:pi}
  \centering
  \begin{tabular}{lll}
    \toprule                   
    $\mathcal{U}$     & $\pi^*_\mathcal{U}(a|s) \propto$      & Remark \\
    \midrule
    $\mathcal{U}^{\texttt{s}}_p$ &$ A(s,a)^{p-1}\mathbf{1}(A(s,a) \geq 0)$ &  Top actions proportional to\\ & &  $(p-1)$-th power of advantage\\
    &\\
    $\mathcal{U}^s_1$    &$\mathbf{1}(A(s,a) \geq 0)$&Top  actions with uniform probability  \\
    &\\
    $\mathcal{U}^s_2$     &$A(s,a)\mathbf{1}(A(s,a) \geq 0)$ & Top   actions proportion to advantage \\ 
    &\\
    $\mathcal{U}^s_\infty$     &$  \mathbf{1}(A(s,a)=0)$&  Best action \\ &\\
    $\mathcal{U}^{\texttt{sa}}_p$    &$\mathbf{1}(A(s,a)=\max_{a}A(s,a))  $&  Best regularized action\\& \\
    $(P_0,R_0)$     &$  \mathbf{1}(A(s,a)=0)$&  Non-robust MDP: Best action \\ 
    \bottomrule
 \end{tabular}
  \begin{tabular}{l}
  where $Q(s,a) = R_0(s,a) + \gamma\sum_{s'}P_0(s'|s,a)v^*_\mathcal{U}(s')$, and $A(s,a) = Q(s,a)-v^*_\mathcal{U}(s)$.
  \end{tabular}
\end{table*}
\section{Optimal Policies}
In the previous sections, we discussed how to efficiently obtain the optimal robust value functions. This section focuses on utilizing these optimal robust value functions to derive the optimal robust policy using 
\begin{align*}
    &\pi^*_{\mathcal{U}} \in \text{arg}\max_{\pi}\mathcal{T}^\pi_{\mathcal{U}} v^*_{\mathcal{U}}.
    % &\pi^*_{\mathcal{U}}(\cdot|s) \in \text{arg}\max_{\pi}\min_{(P,R)\in\mathcal{U}}\sum_{a}\pi(a|s)[(R(s,a) +\gamma \sum_{s'}P(s'|s,a)v^*_{(P,R)}(s')
\end{align*}
This implies, the robust optimal policy $\pi^*_{\mathcal{U}}(\cdot|s)$ at state $s$, is the policy $\pi$ that maximizes   
\[\sum_{a}\pi(a|s)\min_{(P,R)\in\mathcal{U}}\Bigm[R(s,a) +\gamma \sum_{s'}P(s'|s,a)v^*_{\mathcal{U}}(s')\Bigm]. \]

 \textbf{Non-robust MDP} admits a deterministic optimal policy that maximizes the optimal Q-value $Q(s,a) := R(s,a) +\gamma \sum_{s'}P(s'|s,a)v^*_{(P,R)}(s')$. 
 
 \textbf{\texttt{sa}-rectangular robust MDPs} are known to admit a deterministic optimal robust policy \cite{Iyenger2005,Nilim2005RobustCO}. Moreover, from Theorem \ref{rs:saLprvi}, it clear that a \texttt{sa}-rectangular $L_p$ robust MDP has a deterministic optimal robust policy that maximizes the regularized Q-value $ Q(s,a) = -\alpha_{s,a} -\gamma\beta_{s,a}\kappa_q(v)  +R_0(s,a) +\gamma \sum_{s'}P_0(s'|s,a)v^*_{\mathcal{U}^{\texttt{sa}}_p}(s')$.

 \textbf{\texttt{s}-rectangular robust MDPs}: 
For this case, it is known that all optimal robust policies can be stochastic \cite{RVI}, however, it was not previously known what the nature of this stochasticity was. The result below provides the first explicit characterization of robust optimal policies.

\begin{theorem}\label{rs:srect:optimalPolicy} The  optimal robust policy $\pi^*_{\mathcal{U}^{\texttt{s}}_p}$ can be computed using optimal robust value function as:
\begin{align*}
    \pi^*_{\mathcal{U}^{\texttt{s}}_p}(a|s) \propto [ Q(s,a)-v^*_{\mathcal{U}^{\texttt{s}}_p}(s) ]^{p-1}\mathbf{1} \bigm( Q(s,a)\geq  v^*_{\mathcal{U}^{\texttt{s}}_p}(s)\bigm)
\end{align*}
where $Q(s,a) = R_0(s,a) + \gamma\sum_{s'}P_0(s'|s,a) v^*_{\mathcal{U}^{\texttt{s}}_p}(s)$.
\end{theorem}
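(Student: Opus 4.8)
The plan is to start from the defining relation $\pi^*_{\mathcal{U}^{\texttt{s}}_p}(\cdot|s)\in\arg\max_{\pi}(\mathcal{T}^\pi_{\mathcal{U}^{\texttt{s}}_p}v^*_{\mathcal{U}^{\texttt{s}}_p})(s)$ and substitute the closed form of the policy-evaluation operator from Theorem \ref{rs:SLpPlanning}. Writing $Q_s=Q(s,\cdot)$ with $Q(s,a)=R_0(s,a)+\gamma\sum_{s'}P_0(s'|s,a)v^*_{\mathcal{U}^{\texttt{s}}_p}(s')$ and the state-dependent penalty $\sigma=\alpha_s+\gamma\beta_s\kappa_q(v^*_{\mathcal{U}^{\texttt{s}}_p})$, this reduces the problem to the finite-dimensional concave program
\[
\max_{\pi_s\in\Delta_\mathcal{A}}\ \langle \pi_s,Q_s\rangle-\sigma\lVert\pi_s\rVert_q,
\]
which is exactly the inner optimization already encountered in the proof of Theorem \ref{rs:rve}. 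The objective is concave (linear minus a norm, $q\geq 1$) over the simplex, so the KKT conditions are both necessary and sufficient and every maximizer is characterized by them.

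Next I would form the Lagrangian with a multiplier $\lambda$ for the equality constraint $\sum_a\pi_s(a)=1$ and multipliers $\mu_a\geq 0$ for $\pi_s(a)\geq 0$. For $1<q<\infty$ the map $\pi_s\mapsto\lVert\pi_s\rVert_q$ is differentiable away from the origin, with $\partial_{\pi_s(a)}\lVert\pi_s\rVert_q=\lVert\pi_s\rVert_q^{1-q}\pi_s(a)^{q-1}$, so stationarity reads $Q(s,a)-\sigma\lVert\pi_s\rVert_q^{1-q}\pi_s(a)^{q-1}-\lambda+\mu_a=0$. Complementary slackness then splits the actions: where $\pi_s(a)>0$ one gets $\sigma\lVert\pi_s\rVert_q^{1-q}\pi_s(a)^{q-1}=Q(s,a)-\lambda$, which forces $Q(s,a)>\lambda$ and, after inverting and using the H\"older identity $\tfrac{1}{q-1}=p-1$, yields $\pi_s(a)\propto(Q(s,a)-\lambda)^{p-1}$; where $\pi_s(a)=0$, the stationarity condition (with $\pi_s(a)^{q-1}=0$) gives $\mu_a=\lambda-Q(s,a)\geq 0$, i.e.\ $Q(s,a)\leq\lambda$. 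Together these produce the threshold form $\pi^*_{\mathcal{U}^{\texttt{s}}_p}(a|s)\propto(Q(s,a)-\lambda)^{p-1}\mathbf{1}(Q(s,a)\geq\lambda)$.

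It remains to identify $\lambda$ with $v^*_{\mathcal{U}^{\texttt{s}}_p}(s)$. Using the stationarity relation I would substitute back into the objective,
\[
\langle\pi_s,Q_s\rangle=\lambda\textstyle\sum_a\pi_s(a)+\sigma\lVert\pi_s\rVert_q^{1-q}\sum_a\pi_s(a)^q=\lambda+\sigma\lVert\pi_s\rVert_q,
\]
so the optimal value equals $\lambda$; but this optimal value is precisely $(\mathcal{T}^*_{\mathcal{U}^{\texttt{s}}_p}v^*_{\mathcal{U}^{\texttt{s}}_p})(s)=v^*_{\mathcal{U}^{\texttt{s}}_p}(s)$. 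As an internal check that also pins down the normalization, I would verify that imposing $\sum_a\pi_s(a)=1$ together with the self-consistent value of $\lVert\pi_s\rVert_q$ collapses the norm constraint to $\sum_a(Q(s,a)-\lambda)^p\mathbf{1}(Q(s,a)\geq\lambda)=\sigma^p$, which is exactly the equation defining $(\mathcal{T}^*_{\mathcal{U}^{\texttt{s}}_p}v^*)(s)$ in Theorem \ref{rs:rve}; here the exponent collapse $(p-1)q=p$ is the key computation. Substituting $\lambda=v^*_{\mathcal{U}^{\texttt{s}}_p}(s)$ gives the advertised advantage form, since $Q(s,a)-v^*_{\mathcal{U}^{\texttt{s}}_p}(s)$ is the advantage.

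The main obstacle is handling the boundary and non-smoothness cleanly. The KKT argument above is cleanest for $1<p<\infty$; the endpoints $p=1$ ($q=\infty$, $\lVert\pi_s\rVert_q=\max_a\pi_s(a)$) and $p=\infty$ ($q=1$) must be treated separately, either by a subgradient/limiting argument or by reading off the degenerate cases $(p-1)=0$ (uniform over maximizing actions) and the best-action limit. I also need to confirm that the multiplier $\lambda$ coincides with the \emph{unique} root from Theorem \ref{rs:rve}: this relies on the strict monotonicity of $g$ established there, which rules out spurious stationary points and guarantees $\lambda=v^*_{\mathcal{U}^{\texttt{s}}_p}(s)$.
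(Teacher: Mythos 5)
Your proposal is correct and follows essentially the same route as the paper: the paper also reduces policy improvement to the simplex program $\max_{\pi_s\in\Delta_{\mathcal{A}}}\langle \pi_s,Q_s\rangle-\sigma\lVert\pi_s\rVert_q$ via Theorem \ref{rs:SLpPlanning}, and then proves exactly your KKT argument (stationarity, complementary slackness splitting active and inactive actions, the identification of the equality multiplier with the optimal value, and the exponent collapse $(p-1)q=p$ recovering the root equation of Theorem \ref{rs:rve}) as its $L_p$ water pouring lemma, from which Theorem \ref{rs:srect:optimalPolicy} follows by taking $v=v^*_{\mathcal{U}^{\texttt{s}}_p}$ and using the fixed-point property. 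The only organizational difference is that the paper isolates the KKT computation as a standalone lemma about an abstract vector $b$ and also handles your flagged edge cases ($p=1$ from scratch, $p=\infty$ by a limit) in separate corollaries.
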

% \begin{theorem}\label{rs:srect:optimalPolicy} (Optimal policy) Given the optimal robust value function $v^*_{\mathcal{U}^{\texttt{s}}_p}$, the optimal policy  $\pi = \pi^*_{\mathcal{U}^{\texttt{s}}_p}$ can be computed as  
%     \[ \pi(a|s) \propto \bigm( A(s,a)\bigm )^{p-1}\mathbf{1} \bigm( A(s,a)\geq  0\bigm ),\]
%     where $A(s,a)= Q^*_{\mathcal{U}^{\texttt{s}}_p}(s,a)-v^*_{\mathcal{U}^{\texttt{s}}_p}(s)$ and $ Q^*_{\mathcal{U}^{\texttt{s}}_p}(s,a) = R_0(s,a) + \gamma\sum_{s'}P_0(s'|s,a) v^*_{\mathcal{U}^{\texttt{s}}_p}(s)$. It is a threshold policy, that is proportional to the advantage function.
% % \end{theorem}
% The above policy only plays some top actions, as a contrast to soft Q-learning (with entropy regularization)\cite{SoftQL,EntReg1,TRPO} where optimal policy is of the form $\pi(a|s) \propto e^{\eta (Q(a|s)-v(s))}$. Like the softmax policy, these policies give more weight to actions having more advantages but unlike softmax, it doesn't play utterly useless actions. To the best of our knowledge, this kind of policy is novel in literature.
% It is a threshold policy, that takes actions with positive advantage, proportional to the advantage function. This policy is distinct from those used in soft Q-learning (with entropy regularization) \cite{SoftQL,EntReg1,TRPO}, in which an optimal policy is of the form $\pi(a|s) \propto e^{\eta (Q(a|s)-v(s))}$. While this policy still gives more weight to actions with more advantages, it does not play utterly useless actions. To the best of our knowledge, this kind of policy is novel in literature.

The above policy is a threshold policy that takes actions with a positive advantage, which is proportional to the advantage function, while giving more weight to actions with higher advantages and avoiding playing actions that are not useful. This policy is different from the optimal policy in soft-Q learning with entropy regularization, which is a softmax policy of the form $\pi(a|s) \propto e^{\eta (Q(a|s)-v(s))}$ \cite{SoftQL,EntReg1,TRPO}. To the best of our knowledge, this type of policy has not been presented in literature before.

% This policy is distinct from those used in soft Q-learning (with entropy regularization) \cite{SoftQL,EntReg1,TRPO}, in which an optimal policy is of the form $\pi(a|s) \propto e^{\eta (Q(a|s)-v(s))}$. This is a threshold policy that takes actions with positive advantage, proportional to the advantage function, while still giving more weight to actions with more advantages and avoiding playing utterly useless actions. To the best of our knowledge, this kind of policy is novel in literature.

The special cases of the above theorem for $p=1,2,\infty$ along with others are summarized in table \ref{tb:Opt:pi}.
% The notable special cases of the above theorem for $p=1,2,\infty$, are summarized along with others in table \ref{tb:Opt:pi}.  
% Model-based Q-learning algorithm based on the above results is presented in appendix \ref{app:ModelBasedAlgorithms}.  Sample-based algorithm for \texttt{s}-rectangular $L_2$ robust MDPs, is implemented in algorithm \ref{alg:SL2}.

\begin{algorithm}[tb]
\caption{Online \texttt{s}-rectangular $L_p$ robust value iteration}
\label{main:alg:SLp}
% \textbf{Input}: Initialize $Q,v$ randomly, $s_0\sim \mu,n=0$.
\textbf{Input}: Initialize $Q,v$ randomly, $s_0\sim \mu,$ and $n=0$.\\
% \textbf{Parameter}: Optional list of parameters\\
\textbf{Output}: $v = v^*_{\mathcal{U}^\texttt{s}_p}$.
\begin{algorithmic}[1] %[1] enables line numbers
\WHILE{ not converged; $n = n+1$}
    \STATE Estimate $\kappa_p(v)$ using table \ref{tb:kappa}.
    % $$\sigma =\alpha_{s_n} +\gamma\beta_{s_n}\kappa $$
    \STATE Approximate $(\mathcal{T}^*_{\mathcal{U}^\texttt{s}_p}v)(s_n)$ using table \ref{tb:val} and update
    \[v(s_n) =v(s_n) + \eta_n[(\mathcal{T}^*_{\mathcal{U}^\texttt{s}_p}v)(s_n)-v(s_n)] . \]
    \STATE Play action $a_n=a$ with probability proportional to
    \[ [Q(s_n,a)-v(s_n)]^{p-1}\mathbf{1}(Q(s_n,a) \geq v(s_n)),\] 
    and get next state $s_{n+1}$ from the environment.
    \STATE Update Q-value:
    \begin{align*}
        Q(s_n,a_n) = &Q(s_n,a_n) + \eta'_n[R(s_n,a_n) +\gamma v(s_{n+1})-Q(s_n,a_n)].
    \end{align*}
\ENDWHILE
% \STATE \textbf{output} $v = v^*_{\mathcal{U}^\texttt{s}_p}$
\end{algorithmic}
\end{algorithm}

%  Although a simplified version of algorithm \ref{alg:SL1} (\texttt{s}-rectangular $L_1$ robust MDPs) is presented in algorithm \ref{alg:SL1simplified}.\\

% All the results in this section, we presented for arbitrary value function $v$ and greedy policy $\pi$ (w.r.t. $v$). But we can take $v =v^*_{\mathcal{U}}$ and $\pi =\pi^*_{\mathcal{U}}$ to be the optimal robust value function and the optimal robust policy, in the above results, to get characteristics about optimal robust value and policy, for different uncertainty sets $\mathcal{U}$.

\begin{table*} %\label{tb:main:rlt}
  \caption{Relative running cost (time) for value iteration}
  \centering
  \begin{tabular}{lllllllllllll}
    \toprule                   
    S&A& nr& $\mathcal{U}^{sa}_1$ LP&$\mathcal{U}^{s}_1$ LP&$\mathcal{U}^{sa}_1$&$\mathcal{U}^{sa}_2$ &$\mathcal{U}^{sa}_\infty$&$\mathcal{U}^{s}_1$&$\mathcal{U}^{s}_2$&$\mathcal{U}^{s}_\infty$&$\mathcal{U}^{sa}_{10}$&$\mathcal{U}^{s}_{10}$\\
     \midrule
    10&10&1&1438&72625&1.7&1.5&1.5&1.4&2.6&1.4&5.5&33\\
   30&10&1&6616&629890&1.3&1.4&1.4&1.5&2.8&3.0&5.2&78\\   
    50&10&1&6622&4904004&1.5&1.9&1.3&1.2&2.4&2.2&4.1&41\\
    100&20&1&16714&NA&1.4&1.5&1.5&1.1&2.1&1.5&3.2&41\\
    \bottomrule
  \end{tabular}
  \begin{tabular}{l}
       nr stands for Non-robust MDP
  \end{tabular}
  \end{table*}

\section{Time complexity }
In this section, we examine the time complexity of robust value iteration:
\[ v_{n+1} := \mathcal{T}^*_\mathcal{U}v_n\]
for different $L_p$ robust MDPs assuming the knowledge of nominal values ($P_0$, $R_0$). Since, the optimal robust Bellman operator $\mathcal{T}^*_\mathcal{U}$ is $\gamma$-contraction operator \cite{RVI}, meaning that it requires only $O(\log(\frac{1}{\epsilon}))$ iterations to obtain an $\epsilon$-close approximation of the optimal robust value. The main challenge is to calculate the cost of one iteration.

% Robust value iteration is a time efficient algorithm for solving robust Markov Decision Processes (MDPs). The algorithm consists of the recurrence relation $v_{n+1} := \mathcal{T}^*_\mathcal{U}v_n$, where $\mathcal{T}^*_\mathcal{U}$ is the optimal robust Bellman operator. This recurrence has been shown to be a $\gamma$-contraction operator, meaning that the algorithm requires only $O(\log(\frac{1}{\epsilon}))$ iterations to obtain an $\epsilon$-close approximation of the optimal robust value.
% In this section, we study the time complexity of the value iteration of algorithms (algorithm \ref{alg:SALp} and algorithm \ref{alg:SLp}) for  $L_p$ robust MDPs (when nominal reward and nominal transition kernel are known) and compare it to the non-robust counterpart. Precisely, we compare the number of iterations required for value iteration to converge to $\epsilon$ close to the optimal value function for different robust MDPs. 

% Looking at Theorem \ref{rs:saLprvi} and Theorem \ref{rs:rve},

% The evaluation of the optimal robust Bellman operators presented in theorems \ref{rs:saLprvi} and \ref{rs:rve} has three main components, A) computing $\kappa_p(v)$, B) computing the Q-value from $v$, and C) evaluating optimal robust Bellman operators from Q-values. The complexity of each component depends on the value of $p$, as presented in table \ref{tb:kappa} and \ref{tb:val}. The overall complexity of the evaluation is presented in table \ref{tb:time}, with the proofs provided in appendix \ref{app:timeComplexitySection}.
The evaluation of the optimal robust Bellman operators in Theorem \ref{rs:saLprvi} and Theorem \ref{rs:rve} has three main components. A) Computing $\kappa_p(v)$, which can be done differently depending on the value of $p$, as shown in table \ref{tb:kappa}. B) Computing the Q-value from $v$, which requires $O(S^2A)$ in all cases. And finally, C) Evaluating optimal robust Bellman operators from Q-values, which requires different operations such as sorting of the Q-value, calculating the best action, and performing a binary search, etc., as shown in table \ref{tb:val}. The overall complexity of the evaluation is presented in table \ref{tb:time}, with the proofs provided in appendix \ref{app:timeComplexitySection}.

We can observe that when the state space $S$ is large, the complexity of the robust MDPs is the same as that of the non-robust MDPs, as the complexity of all robust MDPs is the same as non-robust MDPs at the limit $S\to \infty$ (keeping action space $A$ and tolerance $\epsilon$ constant). This is verified by our experiments, thus concluding that the $L_p$ robust MDPs are as easy as non-robust MDPs.

% For convex (non-rectangular) uncertainty set $\mathcal{U}$, both policy evaluation and policy improvement is strongly NP Hard \cite{RVI}.
% Comparing the complexity in table \ref{tb:time}, shows that the $L_p$ robust MDPs are not harder than non-robust MDPs. It is noteworthy to see that at the limit $S\to \infty$ (keeping action space $A$ and tolerance $\epsilon$ constant), the complexity of all robust MDPs is the same as non-robust MDPs. It is also confirmed by experiments.  
% The result is summarized in the comparison table \ref{tb:time}.
\begin{table}
  \caption{Time complexity}
  \label{tb:time}
  \centering
  \begin{tabular}{ll}
    \toprule            
    & Total cost $O$ \\ 
   \midrule
 Non-Robust MDP & $\log(1/\epsilon)S^2A$\\
 $\mathcal{U}^{\mathtt{sa}}_1$& $\log(1/\epsilon)S^2A$\\
 $\mathcal{U}^{\mathtt{sa}}_2$ &$\log(1/\epsilon)S^2A$\\
 $\mathcal{U}^{\mathtt{sa}}_\infty$&$\log(1/\epsilon)S^2A$\\
  $\mathcal{U}^{\texttt{s}}_1$& $\log(1/\epsilon)(S^2A + SA\log(A))$\\
 $\mathcal{U}^{\texttt{s}}_2$&$\log(1/\epsilon)(S^2A + SA\log(A))$\\
 $\mathcal{U}^{\texttt{s}}_\infty$ &$\log(1/\epsilon)S^2A$\\
 \midrule
 $\mathcal{U}^{\mathtt{sa}}_p$& $\log(1/\epsilon)\bigm(S^2A + S\log(S
 /\epsilon)\bigm) $\\
 $\mathcal{U}^{\texttt{s}}_p$&$\log(1/\epsilon)\bigm( S^2A+ SA\log(A/\epsilon) \bigm) $\\
 \bottomrule
 Convex $\mathcal{U}$ & Strongly NP Hard \\
 \bottomrule
 \end{tabular}
 % \begin{tabular}{l}
 % Note: The problem is strongly NP H
 % \end{tabular}
\end{table}

\section{Experiments}\label{app:experiments}
In this section, we present numerical results that demonstrate the effectiveness of our methods, verifying our theoretical claims.

Table 4 and Figure \ref{fig:asym} demonstrate the relative cost (time) of robust value iteration compared to non-robust MDP, for randomly generated kernel and reward functions with varying numbers of states $S$ and actions $A$. The results show that \texttt{s} and \texttt{sa}-rectangular MDPs are indeed costly to solve using numerical methods such as Linear Programming (LP). Our methods perform similarly to non-robust MDPs, especially for $p=1,2,\infty$. For general $p$, binary search is required for acceptable tolerance, which requires $30-50$ iterations, leading to a little longer computation time.
% Table 4 and figure \ref{fig:asym} contains the relative cost (time) of robust value iteration w.r.t. non-robust MDP, for randomly generated kernel and reward function for different the number of states $S$ and  number of actions $A$. Table 4 shows that \texttt{s} and \texttt{sa}-rectangular MDPs are indeed hard using numerical methods such as Linear Programming (LP), and our methods perform as well as non-robust MDPs, particularly for $p=1,2,\infty.$ For general $p$, we require binary search that requires $30-50$ iterations for acceptable tolerance, hence it takes more time than $p=1,2,\infty$ where it can done exactly.
\begin{figure}
      \centering
       \includegraphics[width=\linewidth]{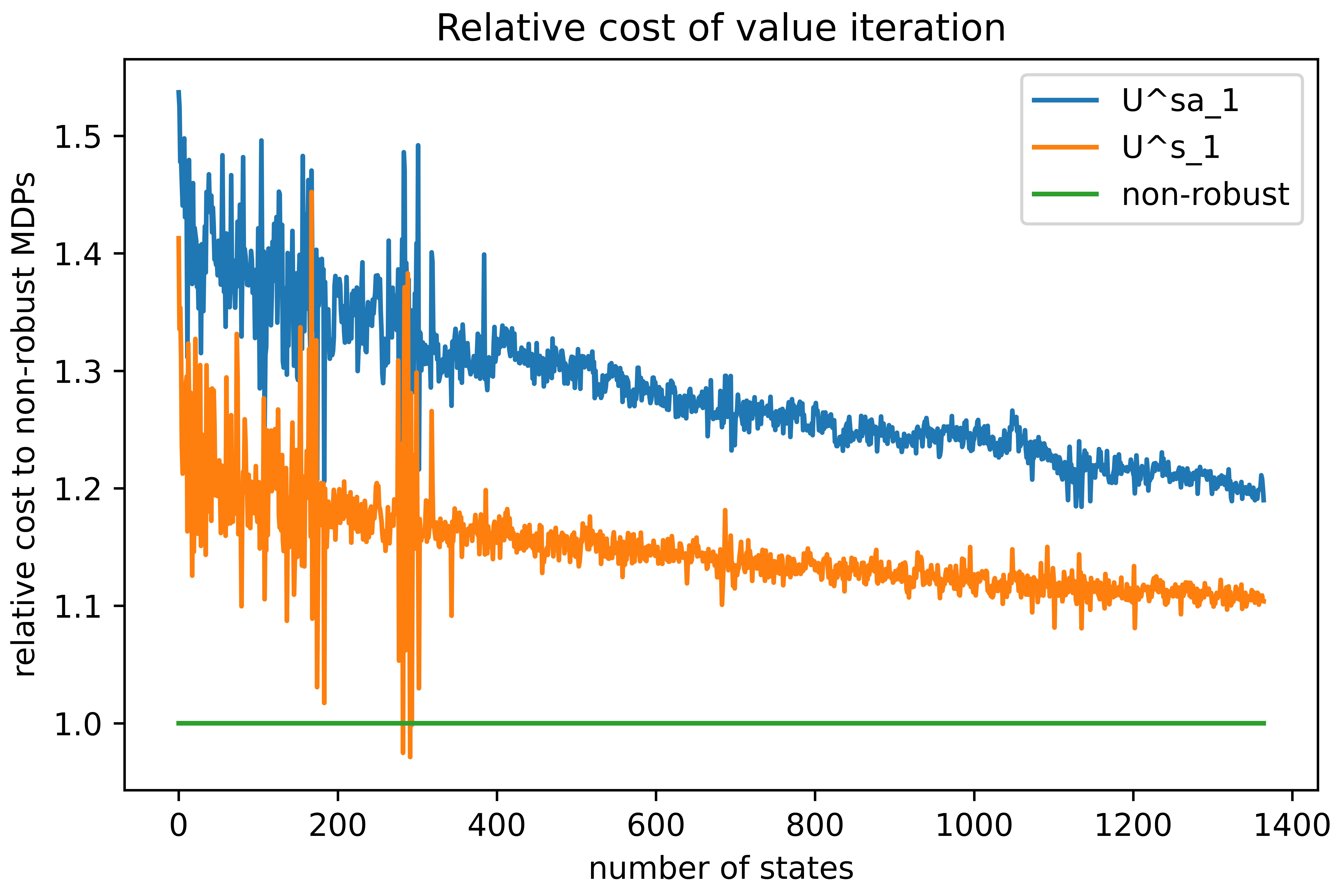}
      \caption{Relative cost of value iteration w.r.t. non-robust MDP at different $S$ with fixed $A=10$.}
      \label{fig:asym}
  \end{figure}

% As we can see from our complexity analysis, the relative cost of value iteration converges to 1 as the number of states increases keeping the number of actions fixed also confirmed by figure \ref{fig:asym}.

As our complexity analysis shows, value iteration's relative cost converges to 1 as the number of states increases while keeping the number of actions fixed. This is confirmed by Figure \ref{fig:asym}.

The rate of convergence for all the settings tested was the same as that of the non-robust setting, as predicted by the theory. The experiments ran a few times, resulting in some stochasticity in the results, but the trend is clear. Further details can be found in section \ref{app:experiments}. 
% The codes and results are available on https://github.com/******.

 % The rate of convergence for all the above settings was the same as the non-robust one, as predicted by the theory. The experiments run few runs hence there is some stochasticity in the results, but the trend is clear. We refer the reader to section \ref{app:experiments} for more details. All codes and results are available at https://github.com/******.

% \section{Model Free Extension}

\section{Conclusion and future work}\label{conclusion}
% This work presents an effective methodology for policy improvement of \texttt{s}-rectangular $L_p$ robust MDPs leading to efficient robust value iteration.  with time complexity analysis, similar to non-robust MDPs.

We present an efficient robust value iteration for \texttt{s}-rectangular $L_p$-robust MDPs. Our method can be easily adapted to an online setting, as shown in Algorithm \ref{main:alg:SLp} for \texttt{s}-rectangular $L_p$-robust MDPs.
% Our method enjoys natural extention to an online setting as presented in  Algorithm \ref{alg:SLp}  for \texttt{s}-rectangular $L_p$-robust MDPs.
% One of the key feature of our method is that it has natural extension to online settings. Online robust value iteration for \texttt{s}-rectangular $L_p$-robust MDPs is presented in algorithm \ref{alg:SLp}.
% Algorithm \ref{alg:SLp} is a two time scale algorithm, where Q-values is approximated at faster time scale and value function is approximated from Q-values at slower time scale. The $p$-mean function $\kappa_p$ can be estimated in an online fashion using batches or other sophisticated methods. The convergence of the algorithm can be guaranteed from \cite{borkarBook}, but its analysis, we leave for future  work.
Algorithm \ref{main:alg:SLp} is a two-time-scale algorithm, where the Q-values are approximated at a faster time scale and the value function is approximated from the Q-values at a slower time scale. The $p$-variance function $\kappa_p$ can be estimated in an online fashion using batches or other sophisticated methods. The convergence of the algorithm can be guaranteed from \cite{borkarBook}; however, its analysis is left for future work. 

Additionally, we introduce a novel value regularizer ($\kappa_p$) and a novel threshold policy which may help to obtain more robust and generalizable policies.
 % The $p$-mean function $\kappa_p$ can be estimated in an online fashion, providing a potential path for sample-based (model-free) algorithms for \texttt{sa}/\texttt{s}-rectangular $L_p$ robust MDPs that can be implemented by means of deep neural networks (such as DQN).

Further research could focus on other types of uncertainty sets, potentially resulting in different kinds of regularizers and optimal policies. 

\bibliography{main}
\bibliographystyle{plain}

\appendix

\section*{How to read appendix}
\begin{enumerate}

    \item Section \ref{app:relatedWork} contains related work.
    \item Section \ref{app:properties} contains additional properties and results that couldn't be included in the main section for the sake of clarity and space. Many of the results in the main paper is special cases of the results in this section. 
    \item Section \ref{app:ForbiddenTransition} contains the discussion on zero transition kernel (forbidden transitions).
    \item Section \ref{app:UCRL} contains a possible connection this work to UCRL.
    \item Section \ref{app:experiments} contains additional experimental results and a detailed discussion.
     \item All the proofs of the main body of the paper is presented in the section  \ref{app:srLp} and \ref{app:timeComplexitySection}.
    \item Section \ref{app:pvarianceSection} contains helper results for section \ref{app:srLp}. Particularly, it discusses $p$-mean function $\omega_p$ and $p$-variance function $\kappa_p$.
    \item Section \ref{app:waterPouringSection} contains helper results for section \ref{app:srLp}. Particularly, it discusses $L_p$ water pouring lemma, necessary to evaluate robust optimal Bellman operator (learning) for $\mathtt{s}$-rectangular $L_p$ robust MDPs.
    \item Section \ref{app:timeComplexitySection} contains time complexity proof for model based algorithms.
    \item Section \ref{app:SALpQL} develops Q-learning machinery for $(\mathtt{sa})$-rectangular $L_p$ robust MDPs based on the results in the main section. It is not used in the main body or anywhere else, but this provides a good understanding for algorithms proposed in section \ref{app:ModelBasedAlgorithms} for $(\mathtt{sa})$-rectangular case.
    \item Section \ref{app:ModelBasedAlgorithms} contains model-based algorithms for $\mathtt{s}$ and $(\mathtt{sa})$-rectangular $L_p$ robust MDPs. It also contains, remarks for special cases for $p=1,2,\infty$.
\end{enumerate}
\section{Related Work} \label{app:relatedWork}
\subsection*{ R-Contamination Uncertainty Robust MDPs}
The paper \cite{Rcontamination} considers the following uncertainty set for some fixed constant $0 \leq R \leq 1$,
\begin{align}
    \mathcal{P}_{sa} = \{ (1-R)(P_0)(\cdot|s,a) + RP \mid P \in \Delta_{\mathcal{S}} \}, \quad s\in \mathcal{S}, a\in\mathcal{A},
\end{align}
and  $\mathcal{P} = \otimes_{s,a}\mathcal{P}_{s,a}, \qquad \mathcal{U} = \{R_0\}\times\mathcal{P}$. The robust value function $v^\pi_\mathcal{U}$ is the fixed point of the robust Bellman operator defined as 
\begin{align}
    (\mathcal{T}^\pi_\mathcal{U}v)(s) :=& \min_{P\in\mathcal{P}}\sum_{a}\pi(a|s)[R_0(s,a) + \gamma \sum_{s'}P(s'|s,a)v(s')],\\
    =&\sum_{a}\pi(a|s)[R_0(s,a) - \gamma R\max_{s}v(s) + (1-R)\gamma \sum_{s'}P_0(s'|s,a)v(s')].
\end{align}
And the optimal robust value function $v^*_\mathcal{Ua }$ is the fixed point of the optimal robust Bellman operator defined as 
\begin{align}
    (\mathcal{T}^*_\mathcal{U}v)(s) :=& \max_{\pi}\min_{P\in\mathcal{P}}\sum_ {a}\pi(a|s)[R_0(s,a) + \gamma (1-R)\sum_{s'}P(s'|s,a)v(s')],\\
    =&\max_{a}[R_0(s,a) - \gamma R\max_{s}v(s) + \gamma (1-R)\sum_{s'}P_0(s'|s,a)v(s')].
\end{align}
Since, the uncertainty set is \texttt{sa}-rectangular, hence the map is a contraction \cite{Nilim2005RobustCO}, so the robust value iteration here, will also converge linearly similar to non-robust MDPs. It is also possible to obtain Q-learning as following
\begin{align}
    Q_{n+1}(s,a) = R_0(s,a)  - \gamma R\max_{s,a}Q_n(s,a) + \gamma(1-R) \sum_{s'}P_0(s'|s,a)\max_{s'}Q_n(s',a').
\end{align}
Convergence of the above Q-learning follows from the contraction of robust value iteration. Further, it is easy to see that model-free Q-learning can be obtained from the above.

A follow-up work \cite{PG_RContamination} proposes a policy gradient method for the same.
\begin{proposition}(Theorem 3.3 of \cite{PG_RContamination}) Consider a class of policies $\Pi$ satisfying Assumption 3.2 of \cite{PG_RContamination}. The gradient of the robust return is given by
\begin{align*}
    \nabla \rho^{\pi_\theta} = &\frac{\gamma R}{(1-\gamma)(1-\gamma +\gamma R)}\sum_{s,a}d^{\pi_\theta}_\mu(s,a)\nabla{\pi_\theta}(a|s)Q^{\pi_\theta}_{\mathcal{U}}(s,a)  \\
    &\qquad +  \frac{1}{1-\gamma +\gamma R}\sum_{s,a}d^{\pi_\theta}_{s_\theta}(s,a)\nabla{\pi_\theta}(a|s)Q^{\pi_\theta}_{\mathcal{U}}(s,a) ,
\end{align*}
where $s_\theta\in \text{arg}\max v^{\pi_\theta}_{\mathcal{U}}(s) $, and $Q^\pi_\mathcal{U}(s,a) = \sum_{a}\pi(a|s)\bigm[R_0(s,a) - \gamma R \max_{s}v^\pi_\mathcal{U}(s) + \gamma (1-R)\sum_{s'}P_0(s'|s,a)v^\pi_\mathcal{U}(s')\bigm].$
\end{proposition}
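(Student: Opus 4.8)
The plan is to obtain the gradient by differentiating the R-contamination robust fixed-point equation implicitly, converting the resulting linear system into occupancy-measure sums exactly as in the ordinary policy gradient theorem, while treating the extremal-state term by an envelope argument. First I would put the robust Bellman fixed point into matrix form. Writing $R^\pi_0(s)=\sum_a\pi_\theta(a|s)R_0(s,a)$ and $P^\pi_0(s'|s)=\sum_a\pi_\theta(a|s)P_0(s'|s,a)$, the identity recalled just above in the excerpt gives, at $v=v^{\pi_\theta}_{\mathcal{U}}$,
\begin{align*}
v = R^\pi_0 - \gamma R\, v(s_\theta)\,\mathbf{1} + \gamma(1-R)P^\pi_0 v, \qquad s_\theta\in\arg\max_s v(s).
\end{align*}
Since $\max_s v$ is a single scalar multiplying $\mathbf{1}$, this is a non-robust Bellman equation with effective discount $\gamma(1-R)$ whose reward is shifted by the constant $-\gamma R\,v(s_\theta)$, and the robust return is $\rho^{\pi_\theta}_{\mathcal{U}}=\langle\mu,v\rangle$.

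Second, I would differentiate both sides with respect to $\theta$. The enabling observation is that $\sum_a\partial_\theta\pi_\theta(a|s)=0$, so any $a$-independent constant may be inserted inside the action-sum for free; inserting exactly $-\gamma R\max_{s'}v(s')$ converts the inhomogeneous term into $b(s):=\sum_a\partial_\theta\pi_\theta(a|s)\,Q^{\pi_\theta}_{\mathcal{U}}(s,a)$, which is precisely the $\nabla\pi\,Q_{\mathcal{U}}$ factor appearing in the claim. For the extremal term I would invoke Danskin's theorem, valid under the regularity of Assumption 3.2 (which keeps the maximizing state $s_\theta$ locally unique), to write $\partial_\theta\max_s v(s)=(\partial_\theta v)(s_\theta)$. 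With $M:=(I-\gamma(1-R)P^\pi_0)^{-1}$ and $M\mathbf{1}=\tfrac{1}{1-\gamma+\gamma R}\mathbf{1}$, the differentiated equation rearranges to
\begin{align*}
\partial_\theta v = Mb - \frac{\gamma R}{1-\gamma+\gamma R}\,(\partial_\theta v)(s_\theta)\,\mathbf{1}.
\end{align*}

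Third, I would close the self-consistency: evaluating the last display at the single state $s_\theta$ produces one scalar equation for $(\partial_\theta v)(s_\theta)$ in terms of $(Mb)(s_\theta)$, which I solve explicitly and substitute back into $\langle\mu,\partial_\theta v\rangle$, leaving $\langle\mu,Mb\rangle$ plus a multiple of $(Mb)(s_\theta)$. Expanding each resolvent as $M=\sum_t(\gamma(1-R))^t(P^\pi_0)^t$, I would then recognize $\langle\mu,Mb\rangle$ and $(Mb)(s_\theta)$ as discounted occupancy sums with initial distributions $\mu$ and the point mass at $s_\theta$ respectively; inserting $b(s)=\sum_a\partial_\theta\pi_\theta(a|s)Q^{\pi_\theta}_{\mathcal{U}}(s,a)$ yields the two claimed terms
\begin{align*}
\sum_{s,a}d^{\pi_\theta}_\mu(s,a)\,\nabla\pi_\theta(a|s)\,Q^{\pi_\theta}_{\mathcal{U}}(s,a)\quad\text{and}\quad\sum_{s,a}d^{\pi_\theta}_{s_\theta}(s,a)\,\nabla\pi_\theta(a|s)\,Q^{\pi_\theta}_{\mathcal{U}}(s,a),
\end{align*}
after collecting the normalization constants.

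The main obstacle is step three: unlike an ordinary MDP, the scalar $v(s_\theta)$ couples every state through the $\mathbf{1}$ term, so the gradient cannot be read off a single resolvent, and one must instead solve the induced fixed point for $(\partial_\theta v)(s_\theta)$ self-consistently; this coupling is exactly what splits the gradient into a $\mu$-anchored and an $s_\theta$-anchored occupancy term. Pinning the precise prefactors $\tfrac{\gamma R}{(1-\gamma)(1-\gamma+\gamma R)}$ and $\tfrac{1}{1-\gamma+\gamma R}$ then reduces to careful bookkeeping of the geometric-series normalizations together with the exact definition of $d^{\pi_\theta}$ fixed in Assumption 3.2. I would also need the maximizing state to vary non-pathologically so that Danskin's theorem applies; this is the role of that assumption and the one place where smoothness of the robust objective, and hence the whole derivation, could break down.
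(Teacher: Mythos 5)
First, a point of context: the paper never proves this proposition — it is quoted (Theorem 3.3 of \cite{PG_RContamination}) in the related-work appendix without proof — so the only question is whether your derivation stands on its own. In outline it does, and it is in fact the standard proof strategy for this result: implicit differentiation of the robust fixed point whose only nonstandard feature is the rank-one coupling through the extremal state, the free insertion of an $a$-independent constant using $\sum_a\partial_\theta\pi_\theta(a|s)=0$, an envelope step at $s_\theta$ (exactly what the uniqueness in Assumption 3.2 buys), a scalar self-consistency solve, and the resolvent-to-occupancy conversion with $M\mathbf{1}=\tfrac{1}{1-\gamma+\gamma R}\mathbf{1}$.

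The genuine gap sits in the step you defer as ``careful bookkeeping'': carried out, it does not yield the displayed constants, because the proposition as transcribed here is internally inconsistent and your plan, executed faithfully against the $\max$ version you copied, produces something else. With $v=R^\pi_0-\gamma R\,v(s_\theta)\mathbf{1}+\gamma(1-R)P^\pi_0v$ and $s_\theta\in\arg\max_s v(s)$, your self-consistency at $s_\theta$ reads $\partial_\theta m\bigl(1+\tfrac{\gamma R}{1-\gamma+\gamma R}\bigr)=(Mb)(s_\theta)$, hence
\begin{align*}
\nabla\rho^{\pi_\theta}=\langle\mu,Mb\rangle-\frac{\gamma R}{1-\gamma+2\gamma R}\,(Mb)(s_\theta),
\end{align*}
a \emph{negative} extremal-state correction with denominator $1-\gamma+2\gamma R$ — not the claim. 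The claimed constants arise only from the actual R-contamination adversary, which, being a minimizer, contributes $+\gamma R\min_{s'}v(s')$ with $s_\theta\in\arg\min_s v(s)$; there the self-consistency factor is $1-\tfrac{\gamma R}{1-\gamma+\gamma R}=\tfrac{1-\gamma}{1-\gamma+\gamma R}$, which is what generates the $\tfrac{1}{1-\gamma}$. Even then, the prefactors come out attached to the \emph{opposite} occupancy measures from the display: $\tfrac{1}{1-\gamma+\gamma R}$ belongs on the $\mu$-anchored sum and $\tfrac{\gamma R}{(1-\gamma)(1-\gamma+\gamma R)}$ on the $s_\theta$-anchored one. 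A quick sanity check confirms this: at $R=0$ the displayed formula degenerates to $\tfrac{1}{1-\gamma}\sum_{s,a}d^{\pi_\theta}_{s_\theta}(s,a)\nabla\pi_\theta(a|s)Q^{\pi_\theta}_{\mathcal{U}}(s,a)$, which is not the classical policy gradient theorem, whereas the swapped assignment reduces correctly. (The displayed $Q^\pi_{\mathcal{U}}$ also carries a spurious $\sum_a\pi(a|s)$.) So your method is the right proof of the source theorem, but asserting that the prefactors ``reduce to bookkeeping'' hides exactly the computation that reveals the statement, as printed, cannot be derived; a complete writeup must flag and correct the transcription ($\min$ for $\max$, and the swapped coefficients) or it will fail at your step three.
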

The work shows that the proposed robust policy gradient method converges to the global optimum asymptotically under direct policy parameterization.
% The work shows that the proposed robust policy gradient method converges to the global optimum asymptotically under direct policy parameterization. It also develops a smoothed robust policy gradient method that achieves an $\epsilon$-global optimum with complexity O$(\epsilon^{−3})$.\\

The uncertainty set considered here, is \texttt{sa}-rectangular, as uncertainty in each state-action is independent, hence the regularizer term $(\gamma R\max_{s}v(s))$ is independent of policy, and the optimal (and greedy) policy is deterministic. It is unclear, how the uncertainty set can be generalized to the $s$-rectangular case. Observe that the above results resemble very closely our \texttt{sa}-rectangular $L_1$ robust MDPs results.

\subsection*{Twice Regularized MDPs}
The paper \cite{derman2021twice} converts robust MDPs to twice regularized MDPs, and proposes a gradient based policy iteration method for solving them. 

\begin{proposition} (corollary 3.1 of \cite{derman2021twice}) (\texttt{s}-rectangular reward robust policy evaluation) Let the uncertainty set be $\mathcal{U} = (R_0+\mathcal{R})\times\{P_0\}$, where $\mathcal{R}_s =\{ r_s\in\mathbf{R}^{\mathcal{A}}\mid \lVert r_s\rVert \leq \alpha_{s}\}$ for all $s\in\mathcal{S}$. Then the robust value function $v^\pi_{\mathcal{U}}$ is the optimal solution to the convex optimization problem:
\[ \max_{v\in\mathbf{R}^\mathcal{A}}\langle \mu,v\rangle \quad s.t.\quad v(s)\leq (\mathcal{T}^\pi_{R_0,P_0}v)(s) -\alpha_{s}\lVert\pi_s\rVert, \qquad \forall s\in\mathcal{S}.\]
\end{proposition}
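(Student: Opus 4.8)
The plan is to reduce the robust value function to the unique fixed point of a concrete reward-regularized Bellman operator, and then to recognize the displayed program as the standard ``one-sided'' linear-programming characterization of that fixed point.

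First I would evaluate the robust Bellman operator in closed form. Because the kernel is frozen at $P_0$ and the reward noise is \texttt{s}-rectangular, the minimization over $R\in R_0+\mathcal{R}$ decouples state by state, and the only noise-dependent term at state $s$ is $\min_{\lVert r_s\rVert\leq\alpha_s}\langle\pi_s,r_s\rangle$. By Cauchy--Schwarz (equivalently, the reward-only $L_2$ specialization of Theorem \ref{rs:SLpPlanning}) this equals $-\alpha_s\lVert\pi_s\rVert$, attained at $r_s=-\alpha_s\pi_s/\lVert\pi_s\rVert$. Hence
\[(\mathcal{T}^\pi_{\mathcal{U}}v)(s)=(\mathcal{T}^\pi_{(P_0,R_0)}v)(s)-\alpha_s\lVert\pi_s\rVert,\]
so that the constraint of the program reads exactly $v\preceq\mathcal{T}^\pi_{\mathcal{U}}v$, and $v^\pi_{\mathcal{U}}$ is the fixed point of this same operator.

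Next I would establish the ordering half. The operator $\mathcal{T}^\pi_{\mathcal{U}}$ is monotone: it is a pointwise minimum of affine maps whose linear parts have non-negative coefficients ($\gamma P_0\geq 0$, weighted by $\pi\geq 0$), so $u\preceq v$ implies $\mathcal{T}^\pi_{\mathcal{U}}u\preceq\mathcal{T}^\pi_{\mathcal{U}}v$. Given any feasible $v$, i.e. $v\preceq\mathcal{T}^\pi_{\mathcal{U}}v$, applying the operator repeatedly and using monotonicity yields the increasing chain $v\preceq\mathcal{T}^\pi_{\mathcal{U}}v\preceq(\mathcal{T}^\pi_{\mathcal{U}})^2v\preceq\cdots$, which converges to $v^\pi_{\mathcal{U}}$ by the $\gamma$-contraction recorded in the preliminaries. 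Therefore every feasible point satisfies $v\preceq v^\pi_{\mathcal{U}}$.

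Finally I would conclude using non-negativity of $\mu$: since $\mu\in\Delta_{\mathcal{S}}$ has non-negative entries, $v\preceq v^\pi_{\mathcal{U}}$ gives $\langle\mu,v\rangle\leq\langle\mu,v^\pi_{\mathcal{U}}\rangle$ for every feasible $v$, while $v^\pi_{\mathcal{U}}$ is itself feasible because it meets the constraint with equality. Hence $v^\pi_{\mathcal{U}}$ is an optimal solution. I expect the main obstacle to be the monotone-iteration step: it is where feasibility, a one-sided inequality, must be upgraded to the global ordering $v\preceq v^\pi_{\mathcal{U}}$, and it relies on coupling monotonicity with the contraction to force the iterates to converge to the fixed point; by comparison the duality computation and the closing $\mu\geq 0$ argument are routine.
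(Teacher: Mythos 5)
Your proof is correct, but there is an important mismatch of expectations here: the paper never proves this proposition at all. It appears only in the related-work appendix, quoted verbatim as Corollary 3.1 of \cite{derman2021twice}, so there is no in-paper proof to compare against; the burden of proof is discharged by citation. Judged on its own terms, your argument is a valid and self-contained derivation. Its first step, the identity $(\mathcal{T}^\pi_{\mathcal{U}}v)(s)=(\mathcal{T}^\pi_{(P_0,R_0)}v)(s)-\alpha_s\lVert\pi_s\rVert$, is exactly the reward-only, $p=q=2$ specialization of the paper's own policy-evaluation machinery (Theorem \ref{rs:SLpPlanning}, proved in the appendix via a Lagrangian/H\"older argument); as you observe, for a pure reward ball with frozen kernel, Cauchy--Schwarz suffices and the kernel-noise term $\gamma\beta_s\kappa_q(v)$ simply disappears. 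The second half of your argument, upgrading feasibility $v\preceq\mathcal{T}^\pi_{\mathcal{U}}v$ to the global bound $v\preceq v^\pi_{\mathcal{U}}$ by combining monotonicity of the operator with its $\gamma$-contraction, and then closing with $\mu\succeq 0$, is the standard linear-programming characterization of the fixed point of a monotone contraction; this argument appears nowhere in the present paper, which works with fixed-point iteration directly, and it is also a more elementary route than the convex-conjugacy derivation in \cite{derman2021twice}. What your route buys is transparency: every step uses only objects already set up in the preliminaries (monotonicity, contraction, fixed-point identity). One small caveat: what you actually establish is that $v^\pi_{\mathcal{U}}$ is \emph{an} optimal solution, namely the componentwise-largest feasible point; for it to be \emph{the} (unique) optimal solution as the statement asserts, one needs $\mu$ to have full support, since a state with $\mu(s)=0$ that is unreachable under $(\pi,P_0)$ could have its value lowered without affecting feasibility or the objective. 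This imprecision is inherited from the original statement rather than introduced by your proof.
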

It derives the policy gradient for reward robust MDPs to obtain the optimal robust policy $\pi^*_\mathcal{U}$.

\begin{proposition} (Proposition 3.2 of \cite{derman2021twice}) (\texttt{s}-rectangular reward robust policy gradient) Let the uncertainty set be $\mathcal{U} = (R_0+\mathcal{R})\times\{P_0\}$, where $\mathcal{R}_s =\{ r_s\in\mathbf{R}^{\mathcal{A}}\mid \lVert r_s\rVert \leq \alpha_{s}\}$ for all $s\in\mathcal{S}$. Then the gradient of the reward robust objective $ \rho^\pi_{\mathcal{U}} := \langle \mu, v^\pi_{\mathcal{U}}\rangle$ is given by
\[ \nabla \rho^\pi_\mathcal{U} = \mathbf{E}_{(s,a)\sim d^\pi_{P_0}}\Bigm[ \nabla\ln(\pi(a|s))\bigm(Q^\pi_{\mathcal{U}}(s,a) -\alpha_{s}\frac{\pi(a|s)}{\lVert \pi_s \rVert}\bigm)\Bigm],\]
where $Q^\pi_{\mathcal{U}}(s,a):=\min_{(R,P)\in\mathcal{U}}[R(s,a)+\gamma \sum_{s'}P(s'|s,a)v^\pi_{\mathcal{U}}(s')].$
\end{proposition}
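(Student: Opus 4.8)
The plan is to differentiate the robust Bellman fixed-point equation for $v^\pi_{\mathcal{U}}$, which the reward-robust policy-evaluation result furnishes explicitly, and then solve the resulting linear recursion for $\nabla v^\pi_{\mathcal{U}}$. First I would record the fixed point. Since the kernel is pinned at $P_0$ and only the reward is uncertain, the policy-evaluation result (the $\beta_s=0$ specialization of Theorem \ref{rs:SLpPlanning}, equivalently the tight form of the preceding reward-robust evaluation proposition) gives, for $v := v^\pi_{\mathcal{U}}$,
\[ v(s) = \sum_a \pi(a|s)\Big[ R_0(s,a) + \gamma \sum_{s'} P_0(s'|s,a) v(s') \Big] - \alpha_s \lVert \pi_s \rVert. \]
Here the regularizer $-\alpha_s\lVert\pi_s\rVert$ is a policy-dependent, state-local reward penalty. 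Because $\pi_s$ is a probability vector it never vanishes, so $v$ is a smooth function of the policy parameters $\theta$ (it equals $(I-\gamma P^\pi)^{-1}$ applied to a smooth vector), which legitimizes differentiating the fixed-point relation.

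Next I would apply $\nabla_\theta$ to both sides. The chain rule yields $\nabla\lVert\pi_s\rVert = \sum_a \frac{\pi(a|s)}{\lVert\pi_s\rVert}\nabla\pi(a|s)$, so the penalty contributes exactly $-\alpha_s\frac{\pi(a|s)}{\lVert\pi_s\rVert}$ per action. Writing $P^\pi(s'|s) := \sum_a \pi(a|s)P_0(s'|s,a)$ for the induced state-transition matrix and collecting terms, the differentiated equation becomes the linear recursion $\nabla v(s) = h(s) + \gamma\sum_{s'}P^\pi(s'|s)\nabla v(s')$, where
\[ h(s) = \sum_a \nabla\pi(a|s)\Big[ R_0(s,a) - \alpha_s\tfrac{\pi(a|s)}{\lVert\pi_s\rVert} + \gamma\sum_{s'}P_0(s'|s,a)v(s') \Big]. \]

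Third, I would recast $h(s)$ into the advertised form and solve the recursion. The per-coordinate minimum of $r_s(a)$ over the $L_2$ ball of radius $\alpha_s$ is $-\alpha_s$, so $Q^\pi_{\mathcal{U}}(s,a) = R_0(s,a) - \alpha_s + \gamma\sum_{s'}P_0(s'|s,a)v(s')$, which differs from the bracket in $h(s)$ only by the $a$-independent constant $-\alpha_s$. Exploiting baseline invariance, $\sum_a\nabla\pi(a|s) = \nabla\sum_a\pi(a|s) = 0$, I may add this constant freely, replacing $R_0(s,a)$ by $Q^\pi_{\mathcal{U}}(s,a)$; the log-derivative identity $\nabla\pi(a|s)=\pi(a|s)\nabla\ln\pi(a|s)$ then gives $h(s) = \sum_a \pi(a|s)\nabla\ln\pi(a|s)\big[Q^\pi_{\mathcal{U}}(s,a) - \alpha_s\frac{\pi(a|s)}{\lVert\pi_s\rVert}\big]$. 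Solving the recursion gives $\nabla v = (I-\gamma P^\pi)^{-1}h$, hence $\nabla\rho^\pi_{\mathcal{U}} = \langle\mu,\nabla v\rangle = \mu^\top(I-\gamma P^\pi)^{-1}h$. Recognizing $\mu^\top(I-\gamma P^\pi)^{-1}$ as (proportional to, up to the usual $(1-\gamma)$ normalization convention) the discounted state occupancy and folding in $\pi(a|s)$ to form the joint occupancy $d^\pi_{P_0}$ yields exactly the claimed expectation.

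The main obstacle is conceptual rather than computational: the target formula is stated with the robust Q-value $Q^\pi_{\mathcal{U}}$ (which carries the $-\alpha_s$ offset), whereas differentiating the value Bellman equation directly produces the nominal reward $R_0(s,a)$. Reconciling the two rests on the baseline invariance $\sum_a\nabla\pi(a|s)=0$ — the same device underlying the classical policy-gradient theorem — and spotting that the constant offset is absorbable there is the crux. A secondary subtlety is to keep distinct the \emph{joint} worst-case reward allocation that produces the $\lVert\pi_s\rVert$ penalty in $v^\pi_{\mathcal{U}}$ and the \emph{per-action} worst case defining $Q^\pi_{\mathcal{U}}$; these are genuinely different, so one must avoid conflating $v^\pi_{\mathcal{U}}(s)$ with $\sum_a\pi(a|s)Q^\pi_{\mathcal{U}}(s,a)$.
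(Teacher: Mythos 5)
Your proof is correct. Note that the paper you were given never proves this statement — it merely quotes Proposition 3.2 of \cite{derman2021twice} in its related-work appendix — so there is no in-paper argument to compare against; your route (differentiate the regularized fixed-point equation $v(s)=\sum_a\pi(a|s)[R_0(s,a)+\gamma\sum_{s'}P_0(s'|s,a)v(s')]-\alpha_s\lVert\pi_s\rVert$, use the baseline invariance $\sum_a\nabla\pi(a|s)=0$ to absorb the $a$-independent offset $-\alpha_s$ and rewrite the bracket as $Q^\pi_{\mathcal{U}}(s,a)$, then invert $I-\gamma P^\pi$ to produce the occupancy measure) is the standard policy-gradient-theorem adaptation that the cited source also follows. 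Your two flagged subtleties — smoothness of $\lVert\pi_s\rVert$ away from the origin, and the distinction between the joint worst case yielding $-\alpha_s\lVert\pi_s\rVert$ in $v^\pi_{\mathcal{U}}$ versus the per-action worst case yielding $-\alpha_s$ in $Q^\pi_{\mathcal{U}}$ (so $v^\pi_{\mathcal{U}}(s)\neq\sum_a\pi(a|s)Q^\pi_{\mathcal{U}}(s,a)$) — are exactly the right points to guard.
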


\begin{proposition} (Corollary 4.1 of \cite{derman2021twice}) (\texttt{s}-rectangular general robust policy evaluation) Let the uncertainty set be $\mathcal{U} = (R_0+\mathcal{R})\times\{P_0+\mathcal{P}\}$, where $\mathcal{R}_s =\{ r_s\in\mathbf{R}^{\mathcal{A}}\mid \lVert r_s\rVert \leq \alpha_{s}\}$  and $\mathcal{P}_s =\{ P_s\in\mathbf{R}^{\mathcal{S}\times\mathcal{A}}\mid \lVert P_s\rVert \leq \beta_{s}\}$ for all $s\in\mathcal{S}$. Then the robust value function $v^\pi_{\mathcal{U}}$ is the optimal solution to the convex optimization problem:
\[ \max_{v\in\mathbf{R}^\mathcal{A}}\langle \mu,v\rangle \quad s.t.\quad v(s)\leq (\mathcal{T}^\pi_{R_0,P_0}v)(s) -\alpha_{s}\lVert\pi_s\rVert -\gamma\beta_{s}\lVert v\rVert\lVert\pi_s\rVert, \qquad \forall s\in\mathcal{S}.\]
\end{proposition}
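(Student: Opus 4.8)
The plan is to prove the proposition in two stages. First I would put the robust Bellman operator into closed form, showing
$(\mathcal{T}^\pi_{\mathcal{U}}v)(s) = (\mathcal{T}^\pi_{(P_0,R_0)}v)(s) - \alpha_s\lVert\pi_s\rVert - \gamma\beta_s\lVert v\rVert\lVert\pi_s\rVert$,
so that the stated constraint is exactly the componentwise inequality $v\preceq \mathcal{T}^\pi_{\mathcal{U}}v$. Then I would invoke the classical linear-programming characterization of a value function to identify $v^\pi_{\mathcal{U}}$ as the maximizer of $\langle\mu,v\rangle$ over that feasible region. This is the $L_2$ specialization of Theorem \ref{rs:SLpPlanning} with the simplex condition dropped, which is exactly why the value regularizer here is the plain norm $\lVert v\rVert$ rather than $\kappa_q(v)$.

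For the closed form, I would use \texttt{s}-rectangularity to decouple the inner minimization state by state, so that at state $s$ it suffices to minimize over $(r_s,P_s)\in\mathcal{R}_s\times\mathcal{P}_s$. After extracting the nominal term $(\mathcal{T}^\pi_{(P_0,R_0)}v)(s)$, the remaining noise contribution separates (because $r_s$ and $P_s$ are independently constrained) into a reward part $\min_{\lVert r_s\rVert\le\alpha_s}\langle\pi_s,r_s\rangle$ and a kernel part $\gamma\min_{\lVert P_s\rVert\le\beta_s}\langle M,P_s\rangle$ with $M(s',a)=\pi(a|s)v(s')$. The reward part is a dual-norm evaluation and, since the $L_2$ norm is self-dual, equals $-\alpha_s\lVert\pi_s\rVert$. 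The kernel part equals $-\gamma\beta_s\lVert M\rVert$, and the product structure of $M$ factorizes the Frobenius norm as $\lVert M\rVert=\lVert\pi_s\rVert\,\lVert v\rVert$, giving $-\gamma\beta_s\lVert\pi_s\rVert\lVert v\rVert$. Summing the three contributions yields the claimed operator.

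For the LP stage, I would record that, for $\beta$ small enough, $\mathcal{T}^\pi_{\mathcal{U}}$ is a monotone $\gamma$-contraction whose unique fixed point is $v^\pi_{\mathcal{U}}$, as in the \texttt{s}-rectangular preliminaries. The fixed point is feasible, since $v^\pi_{\mathcal{U}}=\mathcal{T}^\pi_{\mathcal{U}}v^\pi_{\mathcal{U}}$ satisfies the constraint with equality. Conversely, any feasible $v$ obeys $v\preceq\mathcal{T}^\pi_{\mathcal{U}}v$; applying the monotone operator repeatedly gives $v\preceq(\mathcal{T}^\pi_{\mathcal{U}})^n v$ for all $n$, and letting $n\to\infty$ (contraction) yields $v\preceq v^\pi_{\mathcal{U}}$ componentwise. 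Since $\mu\ge 0$, every feasible $v$ then satisfies $\langle\mu,v\rangle\le\langle\mu,v^\pi_{\mathcal{U}}\rangle$, so the feasible point $v^\pi_{\mathcal{U}}$ attains the maximum, and is the unique maximizer when $\mu>0$.

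The main obstacle is the kernel-noise computation combined with justifying the monotonicity needed in the LP stage. The factorization $\lVert M\rVert=\lVert\pi_s\rVert\lVert v\rVert$ is clean only because both balls are $L_2$; the subtler point is that, once the simplex condition is dropped, the perturbed kernels $P_0+P_s$ need not be stochastic and the regularizer $-\gamma\beta_s\lVert v\rVert$ is not globally monotone in $v$, so the monotone-iteration step cannot be applied naively on all of $\mathbb{R}^{\mathcal{S}}$. I would resolve this by working in the small-$\beta$ regime implicit in the hypotheses, where $\mathcal{T}^\pi_{\mathcal{U}}$ remains a sup-norm contraction with fixed point $v^\pi_{\mathcal{U}}$, and if necessary restricting the monotonicity argument to the order interval around $v^\pi_{\mathcal{U}}$ on which it is retained.
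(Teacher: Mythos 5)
This proposition is not proved in the paper at all: it is quoted from Corollary~4.1 of \cite{derman2021twice} as related work, and the text immediately below it explicitly cautions that the result needs extra assumptions (Assumption~5.1 of \cite{derman2021twice}) and substantial additional work to make the $R^2$ Bellman operator a contraction. So there is no paper-internal proof to match; judged on its own terms, your first stage is correct and is essentially the computation the paper performs for its own Theorem~\ref{rs:SLpPlanning}: decouple the inner minimization by \texttt{s}-rectangularity, evaluate the reward noise by self-duality of the $L_2$ norm to get $-\alpha_s\lVert\pi_s\rVert$, and factorize the rank-one matrix $M(s',a)=\pi(a|s)v(s')$ to get $-\gamma\beta_s\lVert\pi_s\rVert\lVert v\rVert$. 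You also correctly locate why $\lVert v\rVert$ appears here instead of $\kappa_q(v)$: with the simplex condition dropped, the kernel step is a plain dual-norm evaluation rather than the constrained minimization of Lemma~\ref{regfn}.

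The genuine gap is in your LP stage, and neither of your proposed escape hatches closes it. Monotonicity of $v\mapsto(\mathcal{T}^\pi_{\mathcal{U}}v)(s)=(\mathcal{T}^\pi_{(P_0,R_0)}v)(s)-\alpha_s\lVert\pi_s\rVert-\gamma\beta_s\lVert v\rVert\lVert\pi_s\rVert$ fails structurally, not just globally: for $v\neq 0$ the partial derivative with respect to $v(s')$ is
\[\gamma\sum_{a}\pi(a|s)P_0(s'|s,a)\;-\;\gamma\beta_s\lVert\pi_s\rVert\frac{v(s')}{\lVert v\rVert},\]
which is strictly negative whenever $v(s')>0$ and $P_0(s'|s,a)=0$ for all $a$ (forbidden transitions, which the hypotheses allow) --- and this holds for \emph{every} $\beta_s>0$, however small. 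So the small-$\beta$ regime buys you the sup-norm contraction (that is what Assumption~5.1 of \cite{derman2021twice} is for) but not monotonicity, unless you add a strict-positivity condition such as $\sum_a\pi(a|s)P_0(s'|s,a)\geq\beta_s\lVert\pi_s\rVert$ for all $s,s'$, which is nowhere in the statement; and restricting to an order interval around $v^\pi_{\mathcal{U}}$ does not help either, because the non-monotone directions persist at the fixed point itself. Equivalently, the minimizing noise $p^*\propto -M$ drives entries of $P_0+p^*$ negative exactly where $P_0$ vanishes, so $\mathcal{T}^\pi_{\mathcal{U}}$ is a minimum of affine maps that are not all monotone, and the induction $v\preceq(\mathcal{T}^\pi_{\mathcal{U}})^n v$ collapses at its first step. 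For the same reason, your premise that the fixed point of this operator equals $\min_{(P,R)\in\mathcal{U}}v^\pi_{(P,R)}$ cannot be imported from the \texttt{s}-rectangular preliminaries of \cite{RVI}, which presuppose genuine (nonnegative, row-stochastic) kernels; establishing it in this setting is precisely the extra work \cite{derman2021twice} must do, and avoiding this pathology is exactly why the present paper imposes the simplex condition and takes $\beta$ small enough that $P_0+\mathcal{P}$ contains only valid kernels --- at the price of the regularizer becoming $\kappa_q(v)$ rather than $\lVert v\rVert$.
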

Same as the reward robust case, the paper tries to find a policy gradient method to obtain the optimal robust policy. Unfortunately, the dependence of regularizer terms on value makes it a very difficult task. Hence it proposes the $R^2$MPI algorithm (algorithm 1 of ~\cite{derman2021twice}) for the purpose that optimizing the greedy step via projection onto the simplex using a black box solver. Note that the above proposition is not same as our policy evaluation (although it looks similar), it requires some extra assumptions (assumption 5.1 ~\cite{derman2021twice}) and lot of work ensure $R^2$ Bellman operator is contraction etc. In our case, we directly evaluate robust Bellman operator that has already proven to be a contraction, hence we don't require any extra assumption nor any other work as ~\cite{derman2021twice}. 

Our work makes improvements over this work by explicitly solving both policy evaluation and policy improvement in general robust MDPs. It also makes more realistic assumptions on the transition kernel uncertainty set. 

\subsection*{Regularizer solves Robust MDPs}
The work \cite{MaxEntRL} looks in the opposite direction than we do. It investigates the impact of the popularly used entropy regularizer on robustness. It finds that MaxEnt can be used to maximize a lower bound on a certain robust RL objective (reward robust).  

As we noticed that $\lVert \pi_s \rVert_q$ behaves like entropy in our regularization. Further, our work also deals with uncertainty in transition kernel in addition to the uncertainty in reward function.

\subsection*{Upper Confidence RL}
The upper confidence setting in \cite{UCRL,UCRL2} is very similar to our $L_p$ robust setting. We refer to this discussion in section \ref{app:UCRL}.

\section{S-rectangular: More Properties}\label{app:properties}
 
\begin{definition} We begin with the following notational definitions.
    \begin{enumerate}
        \item  $Q$-value at value function $v$ is defined as 
        \[Q^v(s,a) := R_0(s,a) + \gamma\sum_{s'}P_0(s'|s,a) v(s').\]
        \item Optimal $Q$-value is defined as 
        \[Q^*_{\mathcal{U}}(s,a) = R_0(s,a) + \gamma\sum_{s'}P_0(s'|s,a) v^*_{\mathcal{U}}(s')\]
        \item With little abuse of notation, $Q(s,a_i)$ shall denote the $i$th best value in state $s$, that is
        \[Q(s,a_1)\geq Q(s,a_2)\geq,\cdots,\geq Q(s,a_A).\]
        \item $\pi^v_\mathcal{U}$ denotes the greedy policy at value function $v$, that is
         \[\mathcal{T}^{*}_{\mathcal{U}}v = \mathcal{T}^{\pi^v_{\mathcal{U}}}_{\mathcal{U}}v. \]
         \item $\chi_p(s)$ denotes the number of active actions in state $s$ in \texttt{s}-rectangular $L_p$ robust MDPs, defined as
        \[\chi_p(s) :=\bigm\lvert \{a \mid \pi^*_{\mathcal{U}^s_p}(a|s) \geq 0\}\bigm\rvert .\]
        \item $\chi_p(p,s)$ denotes the number of active actions in state $s$ at value function $v$ in \texttt{s}-rectangular $L_p$ robust MDPs, defined as
        \[\chi_p(v,s) :=\bigm\lvert \{a \mid \pi^v_{\mathcal{U}^s_p}(a|s) \geq 0\}\bigm\rvert .\]
    \end{enumerate}
\end{definition}
We saw above that optimal policy in \texttt{s}-rectangular robust MDPs may be stochastic. The action that has a positive advantage is active and the rest are inactive. Let $\chi_p(s)$ be the number of active actions in state $s$, defined as 
\begin{align}
    \chi_p(s) :=\bigm\lvert \{a \mid \pi^*_{\mathcal{U}^s_p}(a|s) \geq 0\}\bigm\rvert =\bigm\lvert \{a \mid Q^*_{\mathcal{U}^s_p}(s,a) \geq v^*_{\mathcal{U}^s_p}(s)\}\bigm\rvert.
\end{align}
Last equality comes from Theorem \ref{rs:srect:optimalPolicy}. One direct relation between Q-value and value function is given by 
\begin{align}
    v^*_{\mathcal{U}^s_p}(s) = \sum_{a}\pi^*_{\mathcal{U}^s_p}(a|s)\Bigm[-\bigm(\alpha_s +\gamma\beta_{s}\kappa_q(v)\bigm)\lVert\pi^*_{\mathcal{U}^s_p}(\cdot|s)\rVert_q  +Q^*_{\mathcal{U}^s_p}(s,a)\Bigm].
\end{align}
The above relation is very convoluted compared to non-robust and  \texttt{sa}-rectangular robust cases. The property below illuminates an interesting relation.
\begin{property} \label{rs:Opt:EVfQv}(Optimal Value vs Q-value) $v^*_{\mathcal{U}^s_p}(s)$ is bounded by the Q-value of $\chi_p(s)$th and $(\chi_p(s)+1)$th actions, that is , 
\[Q^*_{\mathcal{U}^s_p}(s, a_{\chi_p(s)+1}) < v^*_{\mathcal{U}^s_p}(s) \leq Q^*_{\mathcal{U}^s_p}(s, a_{\chi_p(s)}).\]
\end{property}
This special case of the property \ref{rs:EVfQv}, similarly table \ref{tb:Opt:ValVsQ} is special case of table \ref{tb:ValVsQ}.
\begin{table}[H]
\caption{Optimal value function and  Q-value}
  \label{tb:Opt:ValVsQ}
  \centering
  \begin{tabular}{lll}
    \toprule                   
     $v^*(s) = \max_{a}Q^*(s,a) $      & Best value& \\
    $ v^*_{\mathcal{U}^{sa}_p}(s) = \max_a[\alpha_{s,a}-\gamma\beta_{s,a}\kappa_q(v^*_{\mathcal{U}^{sa}_p})-Q^*_{\mathcal{U}^{sa}_p}(s,a) ]$ & Best regularized value&\\
    $ Q^*_{\mathcal{U}^s_p}(s, a_{\chi_p(s)+1}) < v^*_{\mathcal{U}^s_p}(s) \leq Q^*_{\mathcal{U}^s_p}(s, a_{\chi_p(s)})$ & Sandwich!&\\
    \bottomrule
 \end{tabular}
  \begin{tabular}{l}
  where $v^*, Q^*$ is the optimal value function and Q-value respectively \\
  of non-robust MDP.
  \end{tabular}
\end{table}
The same is true for the non-optimal Q-value and value function.

\begin{theorem}\label{rs:srect:greedyPolicy} (Greedy policy) The greedy policy  $\pi^v_{\mathcal{U}^s_p}$ is a threshold policy, that is proportional to the advantage function, that is 
    \[ \pi^v_{\mathcal{U}^s_p}(a|s) \propto \bigm( Q^v(s,a)-(\mathcal{T}^*_{\mathcal{U}^s_p}v)(s)\bigm )^{p-1}\mathbf{1} \bigm(  Q^v(s,a)\geq (\mathcal{T}^*_{\mathcal{U}^s_p}v)(s)\bigm ).\]
   
\end{theorem}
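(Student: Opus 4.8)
The plan is to reduce the greedy policy to the very same constrained optimization that underlies policy improvement, and then solve it explicitly. By the definition of the greedy policy $\pi^v_{\mathcal{U}^s_p}$, namely $\mathcal{T}^*_{\mathcal{U}^s_p}v = \mathcal{T}^{\pi^v_{\mathcal{U}^s_p}}_{\mathcal{U}^s_p}v$, together with the closed-form policy evaluation of Theorem \ref{rs:SLpPlanning}, the greedy policy at state $s$ is any maximizer of
\[
f(\pi_s) := \langle \pi_s, Q^v_s\rangle - \sigma\lVert\pi_s\rVert_q, \qquad \pi_s \in \Delta_{\mathcal{A}},
\]
where $Q^v_s = Q^v(s,\cdot)$ and $\sigma = \alpha_s + \gamma\beta_s\kappa_q(v)$. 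This is exactly the objective appearing in the proof of Theorem \ref{rs:rve}, so the characterization of $\pi^v_{\mathcal{U}^s_p}$ is obtained as a by-product of solving the same problem, now for an arbitrary $v$ rather than the fixed point $v^*_{\mathcal{U}^s_p}$ (of which Theorem \ref{rs:srect:optimalPolicy} is the special case).

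Since $\lVert\cdot\rVert_q$ is convex, $f$ is concave, and the feasible set is the simplex, so first-order (KKT) conditions are necessary and sufficient. First I would attach a multiplier $\lambda$ to the equality constraint $\sum_a\pi(a|s)=1$ and multipliers to the nonnegativity constraints. For an action with $\pi(a|s)>0$ the gradient of the $q$-norm is $\pi(a|s)^{q-1}/\lVert\pi_s\rVert_q^{q-1}$, so stationarity reads $Q^v(s,a)-\lambda = \sigma\,\pi(a|s)^{q-1}/\lVert\pi_s\rVert_q^{q-1}$. Solving for $\pi(a|s)$ and using the Hölder identity $1/(q-1)=p-1$ yields $\pi(a|s)\propto (Q^v(s,a)-\lambda)^{p-1}$, while complementary slackness forces $\pi(a|s)=0$ whenever $Q^v(s,a)<\lambda$, producing the threshold indicator $\mathbf{1}(Q^v(s,a)\ge\lambda)$.

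It then remains to identify the multiplier $\lambda$ with $(\mathcal{T}^*_{\mathcal{U}^s_p}v)(s)$. Substituting the stationarity relation back into $f$ and using $\sum_a\pi(a|s)=1$ gives
\[
\langle \pi_s, Q^v_s\rangle - \sigma\lVert\pi_s\rVert_q = \lambda,
\]
so $\lambda$ equals the optimal value $\max_{\pi_s\in\Delta_{\mathcal{A}}}f(\pi_s) = (\mathcal{T}^*_{\mathcal{U}^s_p}v)(s)$. A short consistency check, combining the normalization constraint with $q(p-1)=p$, confirms that this $\lambda$ is precisely the root $x$ of the sub-optimality equation \eqref{eq:rve:val}, so the characterization is fully compatible with Theorem \ref{rs:rve}.

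The main obstacle is the non-smoothness of $\lVert\cdot\rVert_q$ on the boundary of the simplex: the gradient formula used above is valid only where $\pi(a|s)>0$, and at inactive actions one must argue with subgradients (or, equivalently, restrict to the active set and verify the boundary conditions of the $L_p$ water-pouring lemma). Carefully justifying that the active set is exactly $\{a: Q^v(s,a)\ge\lambda\}$ — including the edge cases where $Q^v(s,a)$ coincides with the threshold and where $\sigma$ is large enough that only the top action survives — is the delicate part, and is exactly where the $L_p$ water-pouring lemma does the real work.
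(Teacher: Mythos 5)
Your proposal is correct and follows essentially the same route as the paper: the paper likewise reduces the greedy policy, via Theorem \ref{rs:SLpPlanning}, to maximizing $\langle \pi_s, Q^v_s\rangle - \sigma\lVert\pi_s\rVert_q$ over the simplex, and resolves that problem with precisely the Lagrangian/KKT argument you sketch (stationarity on the active set, complementary slackness for the threshold, and identification of the multiplier with the optimal value $(\mathcal{T}^*_{\mathcal{U}^s_p}v)(s)$), packaged as the $L_p$ water-pouring lemma (Lemma \ref{LpwaterPouring}). The only cosmetic difference is that the paper factors the KKT analysis into that standalone lemma and cites its policy-form conclusion, whereas you re-derive it inline.
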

The above theorem is proved in the appendix, and Theorem \ref{rs:srect:optimalPolicy} is its special case. So is table \ref{tb:Opt:pi} special case of table \ref{tb:pi}.
\begin{table}[H]
\caption{Greedy policy at value function $v$}
  \label{tb:pi}
  \centering
  \begin{tabular}{lll}
    \toprule                   
    $\mathcal{U}$     & $\pi^v_\mathcal{U}(a|s) \propto$      & remark \\
    \midrule
    $\mathcal{U}^s_p$ &$ (Q^v(s,a)- (\mathcal{T}^*_{\mathcal{U}}v)(s))^{p-1}\mathbf{1}(A^v_\mathcal{U}(s,a)\geq0)$ &  top actions proportional to\\ & &  $(p-1)$th power of its advantage\\
    &\\
    $\mathcal{U}^s_1$    &$\frac{ \mathbf{1}(A^v_\mathcal{U}(s,a)\geq0)}{\sum_{a}\mathbf{1}(A^v_\mathcal{U}(s,a)\geq0)}$&top  actions with uniform probability  \\
    &\\
    $\mathcal{U}^s_2$     &$\frac{A^v_\mathcal{U}(s,a)\mathbf{1}A^v_\mathcal{U}(s,a)\geq0)}{\sum_{a}A^v_\mathcal{U}(s,a)\mathbf{1}(A^v_\mathcal{U}(s,a)\geq0)}$ & top   actions proportion to advantage \\ 
    &\\
    $\mathcal{U}^s_\infty$     &$  \text{arg}\max_{a\in\mathcal{A}}Q^v(s,a)$&  best action \\ &\\
    $\mathcal{U}^{\mathtt{sa}}_p$    &$  \text{arg}\max_{a}[- \alpha_{sa} -\gamma\beta_{sa} \kappa_q(v) +Q^v(s,a)]$&  best action \\
    \bottomrule
 \end{tabular}
  \begin{tabular}{l}
  where $A^v_\mathcal{U}(s,a) = Q^v(s,a) -(\mathcal{T}^*_\mathcal{U}v)(s)$ and $Q^v(s,a) = R_0(s,a) + \gamma\sum_{s'}P_0(s'|s,a)v(s')$.
  \end{tabular}
\end{table}
The above result states that the greedy policy takes actions that have a positive advantage, so we have.
\begin{align}
    \chi_p(v,s) :=\bigm\lvert \{a \mid \pi^v_{\mathcal{U}^s_p}(a|s) \geq 0\}\bigm\rvert =\bigm\lvert \{a \mid Q^v(s,a) \geq (\mathcal{T}^*_{\mathcal{U}^s_p})v(s)\}\bigm\rvert.
\end{align}
\begin{property} \label{rs:EVfQv}(Greedy Value vs Q-value) $(\mathcal{T}^*_{\mathcal{U}^s_p}v)(s)$ is bounded by the Q-value of $\chi_p(v,s)$th and $(\chi_p(v,s)+1)$th actions, that is , 
\[Q^v(s, a_{\chi_p(v,s)+1}) < (\mathcal{T}^*_{\mathcal{U}^s_p}v)(s) \leq Q^v(s, a_{\chi_p(v,s)}).\]
\end{property}
\begin{table}[H]
\caption{ Greedy value function and  Q-value}
  \label{tb:ValVsQ}
  \centering
  \begin{tabular}{ll}
    \toprule                   
     $(\mathcal{T}^*v)(s) = \max_{a}Q^v(s,a) $      & Best value \\
    $ (\mathcal{T}^*_{\mathcal{U}^{sa}_p})v(s) = \max_a[\alpha_{s,a}-\gamma\beta_{s,a}\kappa_q(v)-Q^v(s,a) ]$ & Best regularized value\\
    $ Q^v(s, a_{\chi_p(v,s)+1}) < (\mathcal{T}^*_{\mathcal{U}^s_p})v(s) \leq Q^v(s, a_{\chi_p(v,s)})$ & Sandwich!\\
    \bottomrule
 \end{tabular}
  \begin{tabular}{l}
  where   $Q^v(s,a_1)\geq,\cdots,\geq Q^v(s,a_A).$
  \end{tabular}
\end{table}
The property below states that we can compute the number of active actions $\chi_p(v,s)$ (and $\chi_p(s)$) directly without computing greedy (optimal) policy.
\begin{property} $\chi_p(v,s)$ is number of actions that has positive advantage, that is 
 \[\chi_p(v,s) := \max\{k\mid \sum_{i=1}^k\bigm(Q^v(s,a_i) - Q^v(s,a_k)\bigm)^p  \leq \sigma^p \},\]
 where $\sigma = \alpha_s + \gamma\beta_s\kappa_q(v)$, and $Q^v(s,a_1)\geq Q^v(s,a_2),\geq \cdots \geq Q(s,a_A).$ 
\end{property}
When uncertainty radiuses ($\alpha_s,\beta_s$) are zero (essentially $\sigma =0$ ), then $\chi_p(v,s) = 1, \forall v,s$, that means, greedy policy taking the best action. In other words, all the robust results reduce to non-robust results as discussed in section \ref{sec:MDPs} as the uncertainty radius becomes zero.

\begin{algorithm}[H]
\caption{Algorithm to compute \texttt{s}-rectangular $L_p$ robust optimal Bellman Operator}\label{alg:fp}
\begin{algorithmic} [1]
 \STATE \textbf{Input:} $\sigma = \alpha_s +\gamma\beta_s\kappa_q(v), \qquad Q(s,a) = R_0(s,a) + \gamma\sum_{s'} P_0(s'|s,a)v(s')$.
 \STATE \textbf{Output} $(\mathcal{T}^*_{\mathcal{U}^s_p}v)(s), \chi_p(v,s)$
\STATE Sort $Q(s,\cdot)$ and label actions such that $Q(s,a_1)\geq Q(s,a_2), \cdots$.
\STATE Set initial value guess $\lambda_1 = Q(s,a_1)-\sigma$ and counter $k=1$.
\WHILE{$k \leq A-1  $ and $\lambda_k \leq Q(s,a_k)$}
    \STATE Increment counter: $k = k+1$
    \STATE Take $\lambda_k$ to be a solution of the following  
    \begin{equation}\label{eq:alg:fp:lk}
    \sum_{i=1}^{k}\bigm(Q(s,a_i) - x\bigm)^p  = \sigma^p, \quad\text{and}\quad x\leq Q(s,a_k).    
    \end{equation}
\ENDWHILE
\STATE Return: $\lambda_k, k $
\end{algorithmic}
\end{algorithm}
\section{Revisiting kernel noise assumption} \label{app:ForbiddenTransition}
\subsection*{Sa-Rectangular Uncertainty}
Suppose at state $s$, we know that it is impossible to have transition (next) to some states (forbidden states $F_{s,a}$) under some action. That is, we have the transition uncertainty set $\mathcal{P}$ and nominal kernel $P_0$  such that 
\begin{align}
   P_0(s'|s,a) =  P(s'|s,a) = 0, \quad \forall P\in\mathcal{P},\forall s' \in F_{s,a}.
\end{align}
Then we define, the kernel noise as 
\begin{align}
    \mathcal{P}_{s,a} = \{ P \mid \lVert P\rVert_p = \beta_{s,a}, \quad \sum_{s'}P(s')=0, \quad P(s")=0, \forall s"\in F_{s,a} \}.
\end{align}
In this case, our $p$-variance function is redefined as 
\begin{align}
    \kappa_p(v,s,a) =& \min_{\lVert P\rVert_p = \beta_{s,a}, \quad  \sum_{s'}P(s')=0, \quad P(s")=0, \quad \forall s"\in F_{s,a}}\langle P,v\rangle\\
    =& \min_{\omega\in\mathbb{R}} \lVert u-\omega\mathbf{1}\rVert_p,\qquad \text{where $u(s) = v(s)\mathbf{1}(s\notin F_{s,a})$.} \\
    =& \kappa_p(u)
\end{align}
This basically says, we consider value of only those states that is allowed (not forbidden) in calculation of $p$-variance. For example, we have
\begin{align}
    \kappa_\infty(v,s,a) & = \frac{\max_{s\notin F_{s,a}}v(s)-\min_{s\notin F_{s,a}}v(s)}{2}.\\
    \end{align}

So theorem 1 of the main paper can be re-stated as 
 \begin{theorem}(Restated) $(\mathtt{Sa})$-rectangular $L_p$ robust Bellman operator is equivalent to reward regularized (non-robust) Bellman operator. That is, using $\kappa_p$ above, we have 
\begin{equation*}\begin{aligned}
    (\mathcal{T}^\pi_{\mathcal{U}^{\mathtt{sa}}_p} v)(s)  =& \sum_{a}\pi(a|s)[  -\alpha_{s,a} -\gamma\beta_{s,a}\kappa_q(v,s,a)  +R_0(s,a) +\gamma \sum_{s'}P_0(s'|s,a)v(s')],\\
    (\mathcal{T}^*_{\mathcal{U}^{\mathtt{sa}}_p} v)(s)  =& \max_{a\in\mathcal{A}}[  -\alpha_{s,a} -\gamma\beta_{s,a}\kappa_q(v,s,a)  +R_0(s,a) +\gamma \sum_{s'}P_0(s'|s,a)v(s')].
\end{aligned}\end{equation*}
\end{theorem}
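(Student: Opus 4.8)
The plan is to observe that the forbidden-transition version differs from Theorem~\ref{rs:saLprvi} only in the kernel-noise minimization, so I would reuse the overall structure of that proof and re-examine a single step. First I would invoke \texttt{sa}-rectangularity to decompose the inner minimization over $\mathcal{U}^{\mathtt{sa}}_p = (P_0 + \mathcal{P}) \times (R_0 + \mathcal{R})$ into independent per-$(s,a)$ problems, writing $R(s,a) = R_0(s,a) + R_{s,a}$ and $P(\cdot|s,a) = P_0(\cdot|s,a) + P_{s,a}$. The reward piece is unchanged: $\min_{\lvert R_{s,a}\rvert \le \alpha_{s,a}} R_{s,a} = -\alpha_{s,a}$. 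This leaves the kernel piece $\gamma \min_{P_{s,a} \in \mathcal{P}_{s,a}} \langle P_{s,a}, v\rangle$ as the only term where the new support constraint $P_{s,a}(s'') = 0$ for $s'' \in F_{s,a}$ enters.

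Second, I would analyze this restricted kernel minimization. Since $P_{s,a}$ is supported on the non-forbidden states, $\langle P_{s,a}, v \rangle = \langle P_{s,a}, u \rangle$ with $u(s') = v(s')\mathbf{1}(s' \notin F_{s,a})$. The zero-sum constraint $\sum_{s'} P_{s,a}(s') = 0$ lets me subtract any multiple of $\mathbf{1}$ for free, so $\langle P_{s,a}, u\rangle = \langle P_{s,a}, u - \omega\mathbf{1}\rangle$ for all $\omega$. Applying Hölder's inequality on the non-forbidden coordinates gives the lower bound $\langle P_{s,a}, u - \omega\mathbf{1}\rangle \ge -\beta_{s,a}\lVert u-\omega\mathbf{1}\rVert_q$ (norm taken over non-forbidden states), and optimizing over $\omega$ yields $-\beta_{s,a}\kappa_q(v,s,a)$, matching the claimed penalty.

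Third, and this is the step I expect to be the main obstacle, I would establish tightness: the Hölder bound must be achieved by a feasible $P^*_{s,a}$ that simultaneously is supported off $F_{s,a}$, sums to zero, and has $\lVert P^*_{s,a}\rVert_p = \beta_{s,a}$. I would take $\omega^*$ to be the optimal shift (the $q$-mean of $v$ over non-forbidden states) and build $P^*_{s,a}$ proportional to $-\operatorname{sign}(u-\omega^*\mathbf{1})\lvert u-\omega^*\mathbf{1}\rvert^{q-1}$ on non-forbidden coordinates and zero on $F_{s,a}$. The identity $(q-1)p = q$ gives the Hölder equality case, while the first-order stationarity condition defining $\omega^*$, namely $\sum_{s'\notin F_{s,a}}\operatorname{sign}(v(s')-\omega^*)\lvert v(s')-\omega^*\rvert^{q-1}=0$, is exactly what forces $P^*_{s,a}$ to sum to zero; thus feasibility and optimality hold together. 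This is the only place the argument genuinely departs from Theorem~\ref{rs:saLprvi}, where the minimization ranged over all coordinates; here I must check the equality case survives the support restriction.

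Finally, assembling the pieces gives $(\mathcal{T}^\pi_{\mathcal{U}^{\mathtt{sa}}_p}v)(s) = \sum_a \pi(a|s)\bigm[-\alpha_{s,a} - \gamma\beta_{s,a}\kappa_q(v,s,a) + R_0(s,a) + \gamma\sum_{s'}P_0(s'|s,a)v(s')\bigm]$. For the optimal operator I would write $\mathcal{T}^*_{\mathcal{U}^{\mathtt{sa}}_p}v = \max_\pi \mathcal{T}^\pi_{\mathcal{U}^{\mathtt{sa}}_p}v$; the bracketed regularized Q-value no longer depends on $\pi$, so maximizing the linear functional $\sum_a \pi(a|s)[\cdots]$ over the simplex $\Delta_{\mathcal{A}}$ is attained at a vertex, producing the $\max_{a\in\mathcal{A}}$ form. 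Setting $F_{s,a} = \emptyset$ recovers Theorem~\ref{rs:saLprvi}, which serves as a sanity check.
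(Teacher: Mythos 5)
Your proposal is correct, and it follows the same overall architecture as the paper (rectangular decomposition, trivial reward-noise step, reduction of the kernel noise to a norm-constrained linear minimization with a zero-sum constraint, then a vertex argument for the $\max_\pi$), but you handle the key inner step by a genuinely different argument. The paper evaluates the kernel-noise minimization via Lemma \ref{regfn}, a Lagrangian/KKT computation, and for the forbidden-transition version it gives essentially no proof at all: it simply redefines the $p$-variance over allowed states and asserts that the theorems carry over. You instead prove the restricted minimization directly: after using the zero-sum constraint to shift by $\omega\mathbf{1}$, H\"older's inequality gives the lower bound $-\beta_{s,a}\kappa_q(v,s,a)$, and you exhibit an explicit feasible certificate $P^*_{s,a}\propto -\mathrm{sign}(u-\omega^*\mathbf{1})\lvert u-\omega^*\mathbf{1}\rvert^{q-1}$ supported off $F_{s,a}$ attaining it, noting that the stationarity condition defining the optimal shift $\omega^*$ is exactly the zero-sum condition on $P^*_{s,a}$. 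This buys two things: the argument is more elementary than the paper's KKT derivation, and it genuinely fills the step the paper glosses over, namely that the H\"older equality case survives the support restriction. One further point in your favor: the paper's shorthand $\kappa_p(v,s,a)=\kappa_p(u)$ with $u(s')=v(s')\mathbf{1}(s'\notin F_{s,a})$ is imprecise, because keeping the zeroed forbidden coordinates inside the norm would shift the minimizing $\omega$; your formulation, with the norm taken over non-forbidden coordinates only, is the correct reading and is the one consistent with the paper's own closed form $\kappa_\infty(v,s,a)=\frac{\max_{s\notin F_{s,a}}v(s)-\min_{s\notin F_{s,a}}v(s)}{2}$.
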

\subsubsection*{S-Rectangular Uncertainty}
This notion can also be applied to \texttt{s}-rectanular uncertainty, but with little caution. Here, we define forbidden states in state $s$ to be $F_s$  (state dependent) instead of state-action dependent in \texttt{sa}-rectangular case. Here, we define $p$-variance as 
\begin{align}
    \kappa_p(v,s) = \kappa_p(u), \qquad \text{where $u(s) = v(s)\mathbf{1}(s\notin F_s)$.  }
\end{align}
So the theorem 2 can be restated as 
\begin{theorem}(restated) (Policy Evaluation) \texttt{S}-rectangular $L_p$ robust Bellman operator is equivalent to reward regularized (non-robust) Bellman operator, that is 
\begin{equation*}
    (\mathcal{T}^\pi_{\mathcal{U}^s_p} v)(s)  =   -\Bigm(\alpha_s +\gamma\beta_{s}\kappa_q(v,s)\Bigm)\lVert\pi(\cdot|s)\rVert_q  +\sum_{a}\pi(a|s)\Bigm(R_0(s,a) +\gamma \sum_{s'}P_0(s'|s,a)v(s')\Bigm)
\end{equation*}
where $\kappa_p$ is defined above and $\lVert \pi(\cdot|s)\rVert _q$ is $q$-norm of the vector $\pi(\cdot|s)\in\Delta_{\mathcal{A}}$.
\end{theorem}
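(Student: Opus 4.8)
The plan is to mirror the proof of Theorem~\ref{rs:SLpPlanning}, tracking the single modification introduced by the forbidden-state constraint. Starting from the definition $(\mathcal{T}^\pi_{\mathcal{U}^s_p}v)(s) = \min_{(P,R)\in\mathcal{U}^s_p}\sum_a \pi(a|s)[R(s,a) + \gamma\sum_{s'}P(s'|s,a)v(s')]$, I would substitute the \texttt{s}-rectangular decomposition $R = R_0 + \delta_R$, $P = P_0 + \delta_P$ with $\delta_R \in \mathcal{R}_s$ and $\delta_P \in \mathcal{P}_s$. Since the nominal term $\sum_a \pi(a|s)[R_0(s,a) + \gamma\sum_{s'}P_0(s'|s,a)v(s')]$ is independent of the noise and the two noise sets are constrained independently, the minimum splits into the nominal Bellman term plus a reward-noise minimization and a kernel-noise minimization.

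For the reward noise, minimizing $\langle \pi_s, \delta_R\rangle$ over $\lVert\delta_R\rVert_p \le \alpha_s$ is a direct H\"older computation giving $-\alpha_s\lVert\pi_s\rVert_q$, with tightness from the equality case of H\"older's inequality. For the kernel noise, the objective is $\gamma \sum_a \pi(a|s)\langle \delta_P(\cdot,a), v\rangle$ subject to the joint budget $\sum_{s',a}\lvert\delta_P(s',a)\rvert^p \le \beta_s^p$, the per-action simplex condition $\sum_{s'}\delta_P(s',a)=0$, and now the forbidden-state support condition $\delta_P(s'',a)=0$ for $s''\in F_s$. The support condition confines each column $\delta_P(\cdot,a)$ to the allowed coordinates, so $\langle \delta_P(\cdot,a),v\rangle$ only sees $u=v\mathbf{1}(\cdot\notin F_s)$; this is precisely the reduction that replaces $\kappa_q(v)$ by $\kappa_q(v,s)=\kappa_q(u)$, the $q$-variance over the allowed states.

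I would evaluate the kernel term in two nested H\"older steps. First, fixing the per-action budget $b_a := \lVert\delta_P(\cdot,a)\rVert_p$, the simplex condition lets me write $\langle \delta_P(\cdot,a),v\rangle = \langle \delta_P(\cdot,a), v-\omega\mathbf{1}\rangle$ for every $\omega$; optimizing the H\"older bound over $\omega$ yields the inner minimum $-b_a\,\kappa_q(v,s)$, where the minimizing shift $\omega^\star$ is exactly the one defining $\kappa_q$ over the allowed states. Second, I minimize over the budget allocation: since $\kappa_q(v,s)\ge 0$ and $\pi(a|s)\ge 0$, the problem becomes $\max_{\sum_a b_a^p \le \beta_s^p}\sum_a \pi(a|s)b_a = \beta_s\lVert\pi_s\rVert_q$, again by H\"older. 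Collecting the nominal, reward-noise, and kernel-noise contributions gives $-(\alpha_s+\gamma\beta_s\kappa_q(v,s))\lVert\pi_s\rVert_q + \sum_a\pi(a|s)[R_0(s,a)+\gamma\sum_{s'}P_0(s'|s,a)v(s')]$, as claimed.

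The main obstacle is the achievability (tightness) in the inner per-action step: the H\"older-extremal direction $\delta_P(\cdot,a)$ must simultaneously satisfy the simplex constraint $\sum_{s'}\delta_P(s',a)=0$ and vanish on $F_s$. The resolution is that at the optimal shift $\omega^\star$ in the definition of $\kappa_q(v,s)$, the first-order optimality condition for $\min_\omega\lVert u-\omega\mathbf{1}\rVert_q$ over the allowed coordinates forces the extremal $\delta_P(\cdot,a)$ to have zero sum on those coordinates, so it lands in the feasible set; this is the forbidden-state analogue of the helper lemma invoked in Theorem~\ref{rs:SLpPlanning}, and once it is in place the rest is routine.
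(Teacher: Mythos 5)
Your proof is correct, and its overall skeleton matches the paper's: split nominal from noise, handle the reward noise by H\"older, and handle the kernel noise by a nested optimization (inner per-action minimization, outer budget allocation over actions via H\"older). Where you genuinely diverge is in the inner step. The paper proves it through Lemma~\ref{regfn}, a Lagrangian/KKT computation characterizing $\min_{\langle c,\mathbf{1}\rangle=0,\,\lVert c\rVert_p\le\epsilon}\langle c,v\rangle$, and then, for the forbidden-transition variant, simply declares in Section~\ref{app:ForbiddenTransition} that one "replaces the old $p$-variance with the new one," with no verification that the argument survives the extra support constraint. You instead use the shift-invariance $\langle c,v\rangle=\langle c,v-\omega\mathbf{1}\rangle$, take the H\"older upper bound, optimize the bound over $\omega$, and close the loop by observing that the H\"older-equality direction at the optimal shift $\omega^\star$ automatically has zero sum --- precisely because the first-order condition $\sum_{s'\notin F_s}\mathrm{sign}(v(s')-\omega^\star)\lvert v(s')-\omega^\star\rvert^{q-1}=0$ defining $\omega^\star$ is that zero-sum condition. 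This buys two things: it is more elementary than the KKT derivation (no multipliers, no stationarity bookkeeping), and it makes the forbidden-transition case genuinely routine, since the whole argument runs coordinate-wise on the allowed set and tightness is preserved there --- the very point the paper leaves implicit. One caution: the correct outcome of your reduction is the $q$-variance computed \emph{only over the allowed coordinates}, i.e.\ $\min_{\omega}\bigl(\sum_{s'\notin F_s}\lvert v(s')-\omega\rvert^q\bigr)^{1/q}$, which is what your argument actually produces and what the paper's $\kappa_\infty$ example intends; the paper's formula $\kappa_q(u)$ with $u$ zero-padded over all of $\mathcal{S}$ is not literally the same quantity (the padded zeros contribute $\lvert\omega\rvert^q$ terms), so your reading is the right one and you should state it that way rather than via the zero-padded vector.
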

All the other results (including theorem 4), we just need to replace the old $p$-variance function with new $p$-variance function appropriately.

\section{Application to UCRL}\label{app:UCRL}
In robust MDPs, we consider the minimization over uncertainty set to avoid risk. When we want to discover the underlying kernel by exploration, then we seek optimistic policy, then we consider the maximization over uncertainty set \cite{UCRL,UCRL2,CUCRL}. We refer the reader to the step 3 of the UCRL algorithm \cite{UCRL}, which seeks to find
\begin{align}
    \text{arg}\max_{\pi}\max_{R,P \in\mathcal{U}}\langle \mu ,v^{\pi}_{P,R}\rangle,
\end{align}
where \[\mathcal{U} =\{(R,P)\mid \lvert R(s,a)-R_0(s,a)\rvert \leq \alpha_{s,a},\lvert P(s'|s,a)-P_0(s'|s,a)\rvert \leq \beta_{s,a,s'}, P\in(\Delta_\mathcal{S})^{\mathcal{S}\times\mathcal{A}} \}\] for current estimated  kernel $P_0$ and reward function $R_0$. We refer section 3.1.1 and step 4 of the UCRL 2 algorithm of \cite{UCRL2}, which seeks to find 
\begin{align}
    \text{arg}\max_{\pi}\max_{R,P \in\mathcal{U}}\langle \mu ,v^{\pi}_{P,R}\rangle,
\end{align}
where 
\begin{align*}
    \mathcal{U} =&\{(R,P)\mid \lvert R(s,a)-R_0(s,a)\rvert \leq \alpha_{s,a},\\ &\qquad \lVert P(\cdot|s,a)-P_0(\cdot|s,a)\rVert_1 \leq \beta_{s,a}, P\in(\Delta_\mathcal{S})^{\mathcal{S}\times\mathcal{A}} \}
\end{align*}
The uncertainty radius $\alpha,\beta$ depends on the number of samples of different transitions and observations of the reward. The paper \cite{UCRL} doesn't explain any method to solve the above problem. UCRL 2 algorithm \cite{UCRL2}, suggests to solve it by linear programming that can be very slow. We show that it can be solved by our methods.

The above problem can be tackled as 
following
\begin{align}
    \max_{\pi}\max_{R,P\in\mathcal{U}^{sa}_p}\langle \mu ,v^{\pi}_{P,R}\rangle.
\end{align}
We can define, optimistic Bellman operators as  
\begin{align}
    \Hat{\mathcal{T}}^\pi_{\mathcal{U}}v := \max_{R,P\in\mathcal{U}}v^{\pi}_{P,R},\qquad 
    \Hat{\mathcal{T}}^*_{\mathcal{U}}v :=\max_{\pi} \max_{R,P\in\mathcal{U}}v^{\pi}_{P,R}.
\end{align}
The well definition and contraction of the above optimistic operators may follow directly from their pessimistic (robust) counterparts. We can evaluate above optimistic operators as
\begin{align}
    &(\Hat{\mathcal{T}}^\pi_{\mathcal{U}^{sa}_p}v)(s) = \sum_{a}\pi(a|s)\bigm[R_0(s,a) + \alpha_{s,a}+ \beta_{s,a}\gamma\kappa_q(v) + \sum_{s'}P_0(s'|s,a)v(s')\bigm],\\
    &(\Hat{\mathcal{T}}^*_{\mathcal{U}^{sa}_p}v)(s) = \max_{a}\bigm[R_0(s,a) + \alpha_{s,a}+ \beta_{s,a}\gamma\kappa_q(v) + \sum_{s'}P_0(s'|s,a)v(s')\bigm].
\end{align}
The uncertainty radiuses $\alpha,\beta$ and nominal values $P_0,R_0$ can be found by similar analysis by \cite{UCRL,UCRL2}. We can get the Q-learning from the above results as 
\begin{align}
    Q(s,a) \to R_0(s,a) - \alpha_{s,a} -\gamma\beta_{s,a}\kappa_q(v) +\gamma\sum_{s'}P_0(s'|s,a)\max_{a'}Q(s',a'), 
\end{align}
where $v(s) = \max_{a}Q(s,a)$. From law of large numbers, we know that uncertainty radiuses $\alpha_{s,a},\beta_{s,a}$ behaves as $O(\frac{1}{\sqrt{n}})$ asymptotically with number of iteration $n$. This resembles very closely to UCB VI algorithm \cite{UCRLVI}.
We emphasize that similar optimistic operators can be defined and evaluated for s-rectangular uncertainty sets too.

\section{Q-Learning for sa-rectangular MDPs}\label{app:SALpQL}
In view of Theorem \ref{rs:saLprvi}, we can define $Q^\pi_{\mathcal{U}^{\mathtt{sa}}_p}$, the robust Q-values under policy $\pi$ for $(\mathtt{sa})$-rectangular $L_p$ constrained uncertainty set $\mathcal{U}^{\mathtt{sa}}_p$ as
\begin{equation}\begin{aligned}
    &Q^\pi_{\mathcal{U}^{\mathtt{sa}}_p}(s,a) := -\alpha_{s,a} -\gamma\beta_{s,a}\kappa_q(v^\pi_{\mathcal{U}^{\mathtt{sa}}_p})  +R_0(s,a) +\gamma \sum_{s'}P_0(s'|s,a)v^\pi_{\mathcal{U}^{\mathtt{sa}}_p}(s').
\end{aligned}\end{equation}
This implies that we have the following relation between robust Q-values and robust value function, same as its non-robust counterparts,
\begin{equation}
    v^\pi_{\mathcal{U}^{\mathtt{sa}}_p}(s) = \sum_{a}\pi(a|s)Q^\pi_{\mathcal{U}^{\mathtt{sa}}_p}(s,a).
\end{equation}
Let $Q^*_{\mathcal{U}^{\mathtt{sa}}_p}$ denote the optimal robust Q-values associated with optimal robust value $v^*_{\mathcal{U}^{\mathtt{sa}}_p}$, given as
\begin{equation}\begin{aligned}\label{eq:saLpQ}
    &Q^*_{\mathcal{U}^{\mathtt{sa}}_p}(s,a) := -\alpha_{s,a} -\gamma\beta_{s,a}\kappa_q(v^*_{\mathcal{U}^{\mathtt{sa}}_p})  +R_0(s,a) +\gamma \sum_{s'}P_0(s'|s,a)v^*_{\mathcal{U}^{\mathtt{sa}}_p}(s').
\end{aligned}\end{equation}
It is evident from Theorem \ref{rs:saLprvi} that optimal robust value and optimal robust Q-values satisfies the following relation, same as its non-robust counterparts,
\begin{equation}\begin{aligned}\label{eq:saLpv}
     v^*_{\mathcal{U}^{\mathtt{sa}}_p}(s') = \max_{a\in\mathcal{A}}Q^*_{\mathcal{U}^{\mathtt{sa}}_p}(s,a).
 \end{aligned}\end{equation}
Combining \ref{eq:saLpv} and \ref{eq:saLpQ}, we have optimal robust Q-value recursion as follows
\begin{equation}\begin{aligned}
  &Q^*_{\mathcal{U}^{\mathtt{sa}}_p}(s,a) = -\alpha_{s,a} -\gamma\beta_{s,a}\kappa_q(v^*_{\mathcal{U}^{\mathtt{sa}}_p})  +R_0(s,a) +\gamma \sum_{s'}P_0(s'|s,a)\max_{a\in\mathcal{A}}Q^*_{\mathcal{U}^{\mathtt{sa}}_p}(s,a).
\end{aligned}\end{equation}
The above robust Q-value recursion enjoys similar properties as its non-robust counterparts. 
\begin{corollary}($(\mathtt{sa})$-rectangular $L_p$ regularized Q-learning) Let
\begin{equation*}\begin{aligned}
    Q_{n+1}(s,a)  =  R_0(s,a)- \alpha_{sa} -\gamma\beta_{sa}\kappa_q(v_n) + \gamma\sum_{s'}P_0(s'|s,a)\max_{a\in\mathcal{A}}Q_n(s',a),
\end{aligned}\end{equation*}
where  $ v_{n}(s) = \max_{a\in\mathcal{A}}Q_{n}(s,a) $, then $Q_n$ converges to $Q^*_{\mathcal{U}^{\mathtt{sa}}_p}$ linearly. 
\end{corollary}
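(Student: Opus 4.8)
The plan is to reduce the Q-learning recursion to the robust value iteration $v_{n+1} = \mathcal{T}^*_{\mathcal{U}^{\mathtt{sa}}_p} v_n$, whose linear convergence is already guaranteed by the $\gamma$-contraction of the optimal robust Bellman operator, and then to transfer this convergence from the value iterates to the Q-iterates. No independent contraction argument for the full Q-operator is needed; convergence is inherited from the value iteration.

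First I would set $v_n(s) := \max_{a} Q_n(s,a)$, as the statement prescribes, and compute $v_{n+1}$. Taking $\max_a$ of both sides of the recursion and substituting $\max_{a'}Q_n(s',a') = v_n(s')$ inside the sum over $s'$ yields $v_{n+1}(s) = \max_a[R_0(s,a) - \alpha_{s,a} - \gamma\beta_{s,a}\kappa_q(v_n) + \gamma\sum_{s'}P_0(s'|s,a)v_n(s')]$, which by Theorem \ref{rs:saLprvi} is exactly $(\mathcal{T}^*_{\mathcal{U}^{\mathtt{sa}}_p} v_n)(s)$. Hence $v_{n+1} = \mathcal{T}^*_{\mathcal{U}^{\mathtt{sa}}_p} v_n$ is precisely the robust value iteration, and since $\mathcal{T}^*_{\mathcal{U}^{\mathtt{sa}}_p}$ is a $\gamma$-contraction in $\lVert\cdot\rVert_\infty$ with fixed point $v^*_{\mathcal{U}^{\mathtt{sa}}_p}$, we obtain $\lVert v_n - v^*_{\mathcal{U}^{\mathtt{sa}}_p}\rVert_\infty \leq \gamma^n \lVert v_0 - v^*_{\mathcal{U}^{\mathtt{sa}}_p}\rVert_\infty$.

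Next I would transfer this to the Q-iterates. Rewriting the recursion in terms of $v_n$ gives $Q_{n+1}(s,a) = R_0(s,a) - \alpha_{s,a} - \gamma\beta_{s,a}\kappa_q(v_n) + \gamma\sum_{s'}P_0(s'|s,a)v_n(s')$, and comparing with the defining relation \eqref{eq:saLpQ} for $Q^*_{\mathcal{U}^{\mathtt{sa}}_p}$ yields $Q_{n+1}(s,a) - Q^*_{\mathcal{U}^{\mathtt{sa}}_p}(s,a) = -\gamma\beta_{s,a}[\kappa_q(v_n) - \kappa_q(v^*_{\mathcal{U}^{\mathtt{sa}}_p})] + \gamma\sum_{s'}P_0(s'|s,a)[v_n(s') - v^*_{\mathcal{U}^{\mathtt{sa}}_p}(s')]$. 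I would bound the second term by $\gamma\lVert v_n - v^*_{\mathcal{U}^{\mathtt{sa}}_p}\rVert_\infty$ (since $P_0(\cdot|s,a)$ is a probability distribution) and the first term using that $\kappa_q(v) = \min_\omega \lVert v - \omega\mathbf{1}\rVert_q$ is a seminorm and hence $1$-Lipschitz in $\lVert\cdot\rVert_q$, so $\lvert\kappa_q(v_n) - \kappa_q(v^*_{\mathcal{U}^{\mathtt{sa}}_p})\rvert \leq \lVert v_n - v^*_{\mathcal{U}^{\mathtt{sa}}_p}\rVert_q \leq S^{1/q}\lVert v_n - v^*_{\mathcal{U}^{\mathtt{sa}}_p}\rVert_\infty$. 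Combining, $\lVert Q_{n+1} - Q^*_{\mathcal{U}^{\mathtt{sa}}_p}\rVert_\infty \leq C\,\lVert v_n - v^*_{\mathcal{U}^{\mathtt{sa}}_p}\rVert_\infty \leq C\gamma^n\lVert v_0 - v^*_{\mathcal{U}^{\mathtt{sa}}_p}\rVert_\infty$ with $C = \gamma(1 + S^{1/q}\max_{s,a}\beta_{s,a})$, which is the claimed linear convergence.

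The routine verifications are the commutation of $\max_a$ with the recursion and the norm-equivalence estimate $\lVert\cdot\rVert_q \leq S^{1/q}\lVert\cdot\rVert_\infty$. The main point to get right is the Lipschitz control of the regularizer $\kappa_q(v_n)$: because it couples all coordinates of $v_n$ through a seminorm, one must argue that its contribution cannot spoil the geometric rate. This is exactly what the $1$-Lipschitz (non-expansiveness) property of $\kappa_q$ supplies, and since $\lVert v_n - v^*_{\mathcal{U}^{\mathtt{sa}}_p}\rVert_\infty$ already decays geometrically, the regularizer only inflates the multiplicative constant $C$ and leaves the rate $\gamma$ intact.
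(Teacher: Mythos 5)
Your proposal is correct and follows the same route the paper intends: the recursion collapses to robust value iteration $v_{n+1} = \mathcal{T}^*_{\mathcal{U}^{\mathtt{sa}}_p} v_n$ via Theorem \ref{rs:saLprvi}, and linear convergence is inherited from the $\gamma$-contraction of the optimal robust Bellman operator. The paper in fact states this corollary without any explicit proof (it merely asserts that the recursion ``enjoys similar properties as its non-robust counterparts''), so your additional step --- the $1$-Lipschitz (seminorm) bound $\lvert\kappa_q(v_n)-\kappa_q(v^*_{\mathcal{U}^{\mathtt{sa}}_p})\rvert \leq \lVert v_n - v^*_{\mathcal{U}^{\mathtt{sa}}_p}\rVert_q \leq S^{1/q}\lVert v_n - v^*_{\mathcal{U}^{\mathtt{sa}}_p}\rVert_\infty$, which transfers convergence of the value iterates to the Q-iterates with only an inflated constant --- is a genuine and worthwhile completion of the argument rather than a departure from it.
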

Observe that the above Q-learning equation is exactly the same as non-robust MDP except the reward penalty. Recall that $\kappa_1(v) = 0.5(\max_{s}v(s) - \min_{s}v(s))$ is difference between peak to peak values and $\kappa_2(v)$ is variance of $v$, that can be easily estimated. Hence, model free algorithms for $(\mathtt{sa})$-rectangular $L_p$ robust MDPs for $p=1,2$, can be derived easily from the above results.
% With appropriate step size  sequence $\eta_n$ and exploration strategy $\epsilon_n$, the algorithm \ref{alg:SAL1} converges to optimal Q values \cite{odeBorkar}. 
This implies that $(\mathtt{sa})$-rectangular $L_1$ and $L_2$ robust MDPs are as easy as non-robust MDPs.

\section{Model Based Algorithms}\label{app:ModelBasedAlgorithms}
% \subsection{Synchronous Algorithms for general $p$}
In this section, we assume that we know the nominal transitional kernel and nominal reward function. Algorithm \ref{alg:SALp}, algorithm \ref{alg:SLp} is model based algorithm for $(\mathtt{sa})$-rectangular  and $\mathtt{s}$ rectangular $L_p$ robust MDPs respectively. It is explained in the algorithms, how to get deal with specail cases $(p=1,2,\infty)$ in a easy way.

\begin{algorithm}[H]
\caption{Model Based Q-Learning Algorithm for SA Rectangular $L_p$ Robust MDP}\label{alg:SALp}
\begin{algorithmic} [1]
\STATE \textbf{Input}: $\alpha_{s,a},\beta_{s,a}$ are uncertainty radius in reward and transition kernel respectively in state $\mathtt{s}$ and action $a$. Transition kernel $P$ and reward vector $R$. Take initial $Q$-values $Q_0$ randomly and $v_0(s) = \max_{a}Q_0(s,a).$\\
\WHILE{ not converged}
    \STATE Do binary search in $[\min_{s}v_n(s), \max_{s}v_n(s)]$ to get $q$-mean $\omega_n$, such that
    \begin{equation}\label{alg:SLP:eq:kappa}
        \sum_{s}\frac{(v_n(s)-\omega_n)}{|v_n(s)-\omega_n|}|v_n(s) - \omega_n|^{\frac{1}{p-1}} = 0.
    \end{equation}
    \STATE Compute $q$-variance: $\qquad \kappa_n = \lVert v-\omega_n\rVert_q$.
    \STATE Note: For $p=1,2,\infty$, we can compute $\kappa_n$ exactly in closed from, see table \ref{tb:kappa}.
\FOR{$s \in \mathcal{S}$ }
    \FOR{$a \in \mathcal{A}$ }
        \STATE Update Q-value as \[Q_{n+1}(s,a) = R_0(s,a)  - \alpha_{sa} -\gamma\beta_{sa}\kappa_n + \gamma \sum_{s'}P_0(s'|s,a)\max_{a}Q_n(s',a).\]
    \ENDFOR
    \STATE Update value as \[v_{n+1}(s) = \max_{a}Q_{n+1}(s,a).\]
\ENDFOR
    \[n \to n+1\]
\ENDWHILE
\end{algorithmic}
\end{algorithm}

\begin{algorithm}[H]
\caption{Model Based Algorithm for S Rectangular $L_p$ Robust MDP}\label{alg:SLp}
\begin{algorithmic} [1]
\STATE Take initial $Q$-values $Q_0$ and value function $v_0$ randomly. 
\STATE \textbf{Input}: $\alpha_{s},\beta_{s}$ are uncertainty radius in reward and transition kernel respectively in state $\mathtt{s}$.\\
\WHILE{ not converged}
\STATE Do binary search in $[\min_{s}v_n(s), \max_{s}v_n(s)]$ to get $q$-mean $\omega_n$, such that
    \begin{equation}\label{alg:SLP:eq:kappa}
        \sum_{s}\frac{(v_n(s)-\omega_n)}{|v_n(s)-\omega_n|}|v_n(s) - \omega_n|^{\frac{1}{p-1}} = 0.
    \end{equation}
\STATE Compute $q$-variance: $\qquad \kappa_n = \lVert v-\omega_n\rVert_q$.
\STATE Note: For $p=1,2,\infty$, we can compute $\kappa_n$ exactly in closed from, see table \ref{tb:kappa}.
\FOR{$s \in \mathcal{S}$ }
    \FOR{$a \in \mathcal{A}$ }
        \STATE Update Q-value as \begin{equation}\label{alg:SLP:eq:Qupdate}
        Q_{n+1}(s,a) =R_0(s,a) +\gamma\sum_{s'}P_0(s'|s,a) v_{n+1}(s').
        \end{equation}
    \ENDFOR
    \STATE Sort actions in decreasing order of the Q-value, that is
    \begin{equation}\label{alg:SLP:eq:Qsort}
    Q_{n+1}(s,a_i)\geq Q_{n+1}(s,a_{i+1}).
    \end{equation}
    \STATE Value evaluation:
    \begin{equation}\label{alg:SLP:eq:valEval}
    v_{n+1}(s) = x \quad \text{such that }\quad (\alpha_s +\gamma\beta_{s}\kappa_n)^{p} = \sum_{Q_{n+1}(s,a_i)\geq x}|Q_{n+1}(s,a_i) - x|^{p}.
    \end{equation}
    \STATE Note: We can compute $v_{n+1}(s)$ exactly in closed from for $p=\infty$ and for $p=1,2$, we can do the same using algorithm \ref{alg:f1},\ref{alg:f2} respectively, see table \ref{tb:val}.
\ENDFOR
\[n \to n+1\]
\ENDWHILE
\end{algorithmic}
\end{algorithm}

\begin{algorithm}[H]
\caption{Model based algorithm for \texttt{s}-recantangular $L_1$ robust MDPs}\label{alg:Mb:SL1}
\begin{algorithmic} [1]
\STATE Take initial value function $v_0$ randomly and start the counter $n=0$. 
\WHILE{ not converged}
\STATE Calculate $q$-variance: $ \qquad \kappa_n = \frac{1}{2}\bigm[\max_{s}v_n(s) - \min_{s}v_n(s)\bigm]$
\FOR{$s \in \mathcal{S}$ }
    \FOR{$a \in \mathcal{A}$ }
        \STATE Update Q-value as \begin{equation}\label{alg:SLP:eq:Qupdate}
        Q_{n}(s,a) =R_0(s,a) +\gamma\sum_{s'}P_0(s'|s,a) v_{n}(s').
        \end{equation}
    \ENDFOR
    \STATE Sort actions in state $s$, in decreasing order of the Q-value, that is
    \begin{equation}\label{alg:SLP:eq:Qsort}
    Q_{n}(s,a_1)\geq Q_{n}(s,a_{2}),\cdots \geq Q_{n}(s,a_A).
    \end{equation}
    \STATE Value evaluation:
    \begin{equation}\label{alg:SLP:eq:valEval}
    v_{n+1}(s) = \max_{m}\frac{\sum_{i=1}^m Q_{n}(s,a_i) - \alpha_s -\beta_s\gamma\kappa_n}{m}.
    \end{equation}
    \STATE Value evaluation can also be done using algorithm \ref{alg:f1}.
\ENDFOR
\[n \to n+1\]
\ENDWHILE
\end{algorithmic}
\end{algorithm}

\section{Experiments}\label{app:experiments}
The table 4 contains relative cost (time) of robust value iteration w.r.t. non-robust MDP, for randomly generated kernel and reward function with the number of states $S$ and the number of action $A$.

\begin{table}\label{tb:rlt}
  \caption{Relative running cost (time) for value iteration}
  \centering
  \begin{tabular}{llllll}
    \toprule                   
    $\mathcal{U}$     & S=10 A=10 & S=30 A=10 & S=50 A=10 & S=100 A=20     & remark \\
    \midrule
    non-robust & 1 &1 &1 &1 &      \\&\\
    $\mathcal{U}^{sa}_{\infty}$ by LP & 1374 &2282 &2848 &6930 &   lp    \\&\\
    $\mathcal{U}^{sa}_{1}$ by LP & 1438 &6616 &6622 &16714 &      lp\\&\\
    $\mathcal{U}^{s}_{1}$ by LP& 72625 &629890 &4904004 &NA &  lp/minimize     \\&\\
    $\mathcal{U}^{sa}_{1}$& 1.77 &1.38 &1.54 &1.45 & closed form     \\&\\
    $\mathcal{U}^{sa}_{2}$& 1.51 &1.43 &1.91 &1.59 & closed form     \\&\\
    $\mathcal{U}^{sa}_{\infty}$& 1.58 &1.48 &1.37 &1.58 &    closed form  \\&\\
    $\mathcal{U}^{s}_{1}$& 1.41 &1.58 &1.20 &1.16 &closed form      \\&\\
    $\mathcal{U}^{s}_{2}$& 2.63 &2.82 &2.49 &2.18 &     closed form \\&\\
    $\mathcal{U}^{s}_{\infty}$& 1.41 &3.04 &2.25 &1.50 &    closed form  \\&\\
    $\mathcal{U}^{sa}_{5}$& 5.4 &4.91 &4.14 &4.06 &  binary search     \\&\\
    $\mathcal{U}^{sa}_{10}$& 5.56 &5.29 &4.15 &3.26 &binary search      \\&\\
    $\mathcal{U}^{s}_{5}$& 33.30 &89.23 &40.22 &41.22 & binary search     \\&\\
    $\mathcal{U}^{s}_{10}$& 33.59 &78.17 &41.07 &41.10 &    binary search  \\&\\
    \bottomrule
  \end{tabular}
  \begin{tabular}{l}
       lp stands for scipy.optimize.linearprog
  \end{tabular}
  \end{table}

\subsection*{Notations} 
S : number of state, 
A: number of actions, 
$\mathcal{U}^{sa}_p$ LP:   Sa rectangular $L_p$ robust MPDs by Linear Programming, 
$\mathcal{U}^{s}_p$ LP:  S rectangular $L_p$ robust MPDs by Linear Programming and other numerical methods, 
$\mathcal{U}^{sa/s}_{p=1,2,\infty}$ : Sa/S rectangular $L_1/L_2/L_\infty$ robust MDPs by closed form method (see table 2, theorem 3)
$\mathcal{U}^{sa/s}_{p=5,10}$ : Sa/S rectangular $L_5/L_{10}$ robust MDPs by binary search (see table 2, theorem 3 of the paper)

\subsection*{Observations}
1. Our method for s/sa rectangular $L_1/L_2/L_\infty$ robust MDPs takes almost same (1-3 times) the time as non-robust MDP for one iteration of value iteration. This confirms our complexity analysis (see table 4 of the paper)
2.  Our binary search method for sa rectangular $L_5/L_{10}$ robust MDPs takes around $4-6$ times more time than non-robust counterpart. This is due to extra iterations required to find p-variance function $\kappa_p(v)$ through binary search.
3.  Our binary search method for s rectangular $L_5/L_{10}$ robust MDPs takes around $30-100$ times more time than non-robust counterpart. This is due to extra iterations required to find p-variance function $\kappa_p(v)$ through binary search and Bellman operator.
4.  One common feature of our method is that time complexity scales moderately as guranteed through our complexity analysis.
5.  Linear programming methods for sa-rectangualr $L_1/L_\infty$ robsust MDPs take atleast 1000 times more than our methods for small state-action space, and it scales up very fast.
6. Numerical methods (Linear programming for minimization over uncertianty and 'scipy.optimize.minimize' for maximization over policy) for s-rectangular $L_1$ robust MDPs take 4-5 order more time than our mehtods (and non-robust MDPs) for very small state-action space, and scales up too fast. The reason is obvious, as it has to solve two optimization, one minimization over uncertainty and other maximization over policy, whereas in the sa-rectangular  case, only minimization over uncertainty is required.  This confirms that s-rectangular uncertainty set is much more challenging.

\subsection*{Rate of convergence}
The rate of convergence for all were approximately the same as $0.9 = \gamma$, as predicted by theory. And it is well illustrated by the relative rate of convergence w.r.t. non-robust by the table \ref{tb:roc}.

\begin{table}\label{tb:roc}
  \caption{Relative running cost (time) for value iteration}
  \centering
  \begin{tabular}{llll}
    \toprule                   
    $\mathcal{U}$     & S=10 A=10 &  S=100 A=20     & remark \\
    \midrule
    non-robust & 1 &1 &      \\&\\
    $\mathcal{U}^{sa}_{1}$& 0.999 &0.999 & closed form     \\&\\
    $\mathcal{U}^{sa}_{2}$&0.999 &0.999 & closed form     \\&\\
    $\mathcal{U}^{sa}_{\infty}$& 1.000 &0.998  &    closed form  \\&\\
    $\mathcal{U}^{s}_{1}$& 0.999 &0.999 &
    closed form      \\&\\
    $\mathcal{U}^{s}_{2}$& 0.999 &0.999 &    closed form \\&\\
    $\mathcal{U}^{s}_{\infty}$& 1.000 &0.998 &    closed form  \\&\\
    $\mathcal{U}^{sa}_{5}$& 0.999 &0.995 & binary search     \\&\\
    $\mathcal{U}^{sa}_{10}$& 1.000 &0.999  &binary search      \\&\\
    $\mathcal{U}^{s}_{5}$& 1.000 &0.999  & binary search     \\&\\
    $\mathcal{U}^{s}_{10}$& 1.000 &0.995 &   binary search  \\&\\
    \bottomrule
  \end{tabular}
  \end{table}
 
In the above experiments, Bellman updates for sa/s rectangular $L_1/L_2/L_\infty$ were done in closed form, and for $L_5/L_{10}$ were done by binary search as suggested by table 2 and theorem 3. 

Note: Above experiments' results are for few runs, hence containing some stochasticity but the general trend is clear. In the final version, we will do averaging of many runs to minimize the stochastic nature. Results for many different runs can be found at https://github.com/******.

Note that the above experiments were done without using too much parallelization. There is ample scope to fine-tune and improve the performance of robust MDPs. The above experiments confirm the theoretical complexity provided in Table 4 of the paper. The codes and results can be found at https://github.com/******.

\subsection*{Experiments parameters}  Number of states $S$ (variable), number of actions $A$ (variable), transition kernel and reward function generated randomly,  discount factor $0.9$,  uncertainty radiuses =$0.1$ (for all states and action, just for convenience ),  number of iterations = 100,  tolerance for binary search =  $0.00001$
\subsection*{Hardware} The experiments are done on the following hardware: Intel(R) Core(TM) i5-4300U CPU @ 1.90GHz 64 bits, memory 7862MiB
Software: Experiments were done in python, using numpy, scipy.optimize.linprog for Linear programmig for policy evalution in s/sa rectangular robust MDPs, scipy.optimize.minize and scipy.optimize.LinearConstraints for policy improvement in s-rectangular $L_1$ robust MDPs.

\section{Extension to Model Free Settings}
Extension of Q-learning (in section \ref{app:SALpQL} ) for \texttt{sa}-rectangular MDPs to model free setting can easily done similar to \cite{Rcontamination}, also policy gradient method can be obtained as \cite{PG_RContamination}. The only thing, we need to do, is to be able to compute/estimate $\kappa_q$ online. It can be estimated using an ensemble (samples). Further,   $\kappa_2$ can be estimated by the estimated mean and the estimated second moment. $\kappa_\infty$ can be estimated by tracking maximum and minimum values.

For \texttt{s}-rectangular case too, we can obtain model-free algorithms easily, by estimating $\kappa_q$ online and keeping track of Q-values and value function. The convergence analysis may be similar to \cite{Rcontamination}, especially for \texttt{sa}-rectangular case, and for the other, it would be two time scale, which can be dealt with techniques in \cite{borkarBook}. We leave this for future work. It would be interesting to obtain policy gradient methods for this case, which we believe can be obtained from the policy evaluation theorem.

\section{p-variance}
\label{app:pvarianceSection}
Recall that $\kappa_p$ is defined as follows 
\[\kappa_p(v) =\min_{w} \lVert v-\omega\mathbf{1}\rVert_p = \lVert v-\omega_p\rVert_p.\]
Now, observe that
\begin{equation}\begin{aligned}
&\frac{\partial \lVert v -\omega\rVert_p}{\partial \omega}  = 0\\
\implies &  \sum_{s}sign(v(s)-\omega)|v(s) - \omega|^{p-1} = 0,\\
\implies &  \sum_{s}sign(v(s)-\omega_p(v))|v(s) - \omega_q(v)|^{p-1} = 0.
\end{aligned}
\end{equation}
For $p=\infty$,  we have
\begin{equation}\begin{aligned}
&\lim_{p\to\infty}\Bigm\lvert\sum_{s}sign\bigm(v(s)-\omega_\infty(v)\bigm)\bigm\lvert v(s) - \omega_\infty(v)\bigm\rvert^{p}\Bigm\rvert^\frac{1}{p} = 0 \\
=&\bigm(\max_{s}\lvert v(s) - \omega_\infty(v)\rvert\bigm)\lim_{p\to\infty}\Bigm\lvert\sum_{s}sign\bigm(v(s)-\omega_\infty(v)\bigm)\Bigm(\frac{\lvert v(s) - \omega_\infty(v)\bigm\rvert}{\max_{s}\lvert v(s) - \omega_\infty(v)\rvert}\Bigm)^{p}\Bigm\rvert^\frac{1}{p} \\
&\text{Assuming $\max_{s}\lvert v(s) - \omega_\infty(v)\rvert \neq 0$ otherwise $\omega_\infty = v(s)=v(s'),\quad \forall s,s'$ }\\
\implies&\lim_{p\to\infty}\Bigm\lvert\sum_{s}sign\bigm(v(s)-\omega_\infty(v)\bigm)\Bigm(\frac{\lvert v(s) - \omega_\infty(v)\bigm\rvert}{\max_{s}\lvert v(s) - \omega_\infty(v)\rvert}\Bigm)^{p}\Bigm\rvert^\frac{1}{p} = 0 \\
&\text{To avoid technical complication, we assume $\max_{s}v(s)>v(s)< \min_{s}v(s), \quad \forall s$}\\
\implies&\lim_{p\to\infty} \lvert\max_{s} v(s) - \omega_\infty(v)\rvert =\lim_{p\to\infty}\lvert\min_{s} v(s) - \omega_\infty(v)\rvert\\
\implies& \max_{s} v(s) - \lim_{q\to\infty}\omega_\infty(v) =-(\min_{s} v(s) - \lim_{p\to\infty}\omega_\infty(v)),\qquad \text{(managing signs)}\\
\implies&\lim_{p\to\infty}\omega_\infty(v) = \frac{\max_{s}v(s) +\min_{s}v(s)}{2}.
\end{aligned}\end{equation}

\begin{equation}\begin{aligned}
    \kappa_\infty(v) =& \lVert v-\omega_{\infty}\mathbf{1}\rVert_\infty\\
    =& \lVert v-\frac{\max_{s}v(s) + \min_{s}v(s)}{2}\mathbf{1}\rVert_\infty, \qquad \text{(putting in value of $\omega_\infty$)}\\
    =&\frac{\max_{s}v(s) - \min_{s}v(s)}{2}
\end{aligned}\end{equation}

For $p=2$, we have
\begin{equation}\begin{aligned}
    \kappa_2(v) =& \lVert v-\omega_{2}\mathbf{1}\rVert_2\\
    =& \lVert v-\frac{\sum_{s}v(s)}{S}\mathbf{1}\rVert_2,\\
    =&\sqrt{\sum_{s}(v(s) -\frac{\sum_{s}v(s)}{S})^2}
\end{aligned}\end{equation}

For $p=1$, we have
\begin{equation}\begin{aligned}
&\sum_{s\in\mathcal{S}} sign\bigm(v(s) - \omega_1(v)\bigm) = 0 \\
\end{aligned}\end{equation}
Note that there may be more than one values of $\omega_1(v)$ that satisfies the above equation and each solution does equally good job (as we will see later). So we will pick one ( is median of $v$) according to our convenience as
\[ \omega_1(v) = \frac{v(s_{\lfloor (S+1)/2\rfloor}) +v(s_{\lceil (S+1)/2\rceil})}{2} \quad \text{where} \quad v(s_i)\geq v(s_{i+1}) \quad \forall i.\] 

\begin{equation}\begin{aligned}
    \kappa_1(v) =& \lVert v-\omega_{1}\mathbf{1}\rVert_1\\
    =& \lVert v-med(v)\mathbf{1}\rVert_1, \qquad \text{(putting in value of $\omega_0$, see table \ref{tb:mean})}\\
    =&\sum_{s}\lvert v(s)-med(v)\rvert\\
    =&\sum_{i=1}^{\lfloor (S+1)/2\rfloor} (v(s)-med(v))  +\sum_{\lceil (S+1)/2\rceil}^{S} (med(v) -v(s)) \\
    =&\sum_{i=1}^{\lfloor (S+1)/2\rfloor} v(s)  -\sum_{\lceil (S+1)/2\rceil}^{S} v(s) 
\end{aligned}\end{equation}
where 
$med(v) := \frac{v(s_{\lfloor (S+1)/2\rfloor}) +v(s_{\lceil (S+1)/2\rceil})}{2} \quad \text{where} \quad v(s_i)\geq v(s_{i+1}) \quad \forall i$ is median of $v$. The results are summarized in table \ref{tb:kappa} and \ref{tb:mean}.
\begin{table}
  \caption{$p$-mean, where $v(s_i)\geq v(s_{i+1}) \quad \forall i.$}
  \label{tb:mean}
  \centering
  \begin{tabular}{lll}
    \toprule                   
    $x$     & $\omega_x(v)$     & remark \\
    \midrule
    $p$ & $\sum_{s}sign(v(s)-\omega_p(v))\lvert v(s) - \omega_p(v)\rvert^{\frac{1}{p-1}} = 0 $ & Solve by binary search\\\\
    $1$     & $\frac{v(s_{\lfloor (S+1)/2\rfloor}) +v(s_{\lceil (S+1)/2\rceil})}{2} $      & Median  \\\\
    2     &  $\frac{\sum_{s}v(s)}{S}    $ & Mean\\\\
    $\infty$     & $\frac{\max_{s}v(s) + \min_{s}v(s)}{2}$      &  Average of peaks  \\
    \bottomrule
  \end{tabular}
\end{table}

% \begin{table}
%   \caption{$p$-variance}
%   \label{tb:kappa}
%   \centering
%   \begin{tabular}{lll}
%     \toprule                   
%     $x$     & $\kappa_x(v)$     & remark \\
%     \midrule
%     $p$ & $\lVert v-\omega_{\frac{1}{p-1}}\mathbf{1}\rVert_q $  &  $q$-th variance w.r.t. $\frac{1}{p-1}$-th mean    \\&\\
%     1     & $\frac{\max_{s}v(s) - \min_{s}v(s)]}{2}$ & Peak to peak difference     \\&\\
%     2     & $\sqrt{\sum_{s}(v(s) -\frac{\sum_{s}v(s)}{S})^2}$      & Variance  \\&\\
%     $\infty$     &$ \sum_{i=1}^{\lfloor S/2\rfloor}v(s_i) - \sum_{i =\lceil S/2\rceil}^{S}v(s_i)) \quad \text{where} \quad v(s_i)\geq v(s_{i+1}) \quad \forall i.$      & Top half minus bottom half  \\
%     \bottomrule
%   \end{tabular}
% \end{table}

\subsection{p-variance function  and kernel noise}

\begin{lemma}
\label{regfn}
$q$-variance function $\kappa_q$ is the solution of the following optimization problem (kernel noise),
\[\kappa_q(v) = -\frac{1}{\epsilon}\min_{c}\langle c,v\rangle, \qquad \lVert c\rVert_p\leq \epsilon, \qquad \sum_{s}c(s) = 0.\]
\end{lemma}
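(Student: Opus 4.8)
The plan is to show that the optimal value of the minimization, $\min_{c}\langle c,v\rangle$ over the feasible set $\{c:\lVert c\rVert_p\le\epsilon,\ \langle c,\mathbf{1}\rangle=0\}$, equals exactly $-\epsilon\kappa_q(v)$; dividing by $-\epsilon$ then yields the stated identity. (The minimum is attained since the feasible set is compact and the objective is linear.) I would establish the two matching inequalities separately, the lower bound by duality and the upper bound by producing an explicit optimal $c$.

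For the lower bound I would use the zero-sum constraint to insert a free shift: since $\langle c,\mathbf{1}\rangle=0$, for every $\omega\in\mathbb{R}$ we have $\langle c,v\rangle=\langle c,v-\omega\mathbf{1}\rangle$. Applying H\"older's inequality with conjugate exponents $p,q$ together with $\lVert c\rVert_p\le\epsilon$ gives $\langle c,v\rangle\ge-\lVert c\rVert_p\lVert v-\omega\mathbf{1}\rVert_q\ge-\epsilon\lVert v-\omega\mathbf{1}\rVert_q$. Since this holds for all $\omega$, minimizing the right-hand side over $\omega$ yields $\langle c,v\rangle\ge-\epsilon\kappa_q(v)$ for every feasible $c$, hence $\min_c\langle c,v\rangle\ge-\epsilon\kappa_q(v)$.

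For the matching upper bound I would exhibit a feasible $c$ attaining this value. Let $\omega_q$ be the optimal $q$-mean, so that $u:=v-\omega_q\mathbf{1}$ satisfies $\lVert u\rVert_q=\kappa_q(v)$, and set $c(s):=-\epsilon\,\operatorname{sign}(u(s))\lvert u(s)\rvert^{q-1}/\lVert u\rVert_q^{q-1}$, the H\"older dual witness for $u$. Three verifications are needed: (i) $\lVert c\rVert_p=\epsilon$, which follows from the exponent identity $(q-1)p=q$, giving $\sum_s\lvert c(s)\rvert^p=\epsilon^p\lVert u\rVert_q^{-q}\sum_s\lvert u(s)\rvert^q=\epsilon^p$; (ii) $\langle c,u\rangle=-\epsilon\lVert u\rVert_q^{-(q-1)}\sum_s\lvert u(s)\rvert^q=-\epsilon\lVert u\rVert_q=-\epsilon\kappa_q(v)$, using $\operatorname{sign}(u(s))\,u(s)=\lvert u(s)\rvert$; and (iii) feasibility of the zero-sum constraint $\langle c,\mathbf{1}\rangle=-\epsilon\lVert u\rVert_q^{-(q-1)}\sum_s\operatorname{sign}(u(s))\lvert u(s)\rvert^{q-1}=0$. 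Once (iii) holds, $\langle c,v\rangle=\langle c,u\rangle+\omega_q\langle c,\mathbf{1}\rangle=\langle c,u\rangle=-\epsilon\kappa_q(v)$, which meets the lower bound and completes the argument.

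The main obstacle, and the conceptual heart of the proof, is step (iii): the fact that the H\"older dual witness $c$ automatically respects the simplex constraint $\sum_s c(s)=0$. This is precisely the first-order optimality condition characterizing the minimizer $\omega_q$ of $\omega\mapsto\lVert v-\omega\mathbf{1}\rVert_q$, namely $\sum_s\operatorname{sign}(v(s)-\omega_q)\lvert v(s)-\omega_q\rvert^{q-1}=0$, which is derived in the $p$-variance section (Appendix \ref{app:pvarianceSection}). In other words, the stationarity of the $q$-mean is exactly what forces the worst-case noise direction to be orthogonal to $\mathbf{1}$, and this is what links the two optimizations: subtracting the \emph{optimal} shift $\omega_q$ rather than an arbitrary $\omega$ is what makes the H\"older bound simultaneously tight and feasible. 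I would also dispatch the degenerate case where $v$ is constant, in which $\kappa_q(v)=0$, $u=\mathbf{0}$, and both sides vanish with $c=\mathbf{0}$ feasible.
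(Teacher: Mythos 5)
Your proof is correct, but it takes a genuinely different route from the paper's. The paper attacks the noise problem directly with Lagrangian/KKT machinery: it writes the Lagrangian in $c$, uses stationarity to express the optimal objective value as $-\mu p\epsilon^p=-\epsilon\lVert v+\lambda\rVert_q$, and then shows the zero-sum constraint forces the multiplier $\lambda$ to satisfy $\sum_s\operatorname{sign}(v(s)+\lambda)\lvert v(s)+\lambda\rvert^{1/(p-1)}=0$, i.e.\ $-\lambda$ is a stationary (hence, by convexity, global) minimizer of $\omega\mapsto\lVert v-\omega\mathbf{1}\rVert_q$, so the value is $-\epsilon\kappa_q(v)$. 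You instead give a primal--dual certificate argument: H\"older plus the shift-invariance $\langle c,v\rangle=\langle c,v-\omega\mathbf{1}\rangle$ yields the lower bound $-\epsilon\kappa_q(v)$ for every feasible $c$, and the H\"older-equality vector built from $u=v-\omega_q\mathbf{1}$ is shown to be feasible --- the zero-sum constraint holding precisely because of the first-order condition defining $\omega_q$ --- and to attain the bound. The crux is the same in both arguments (the identification of the zero-sum constraint on the worst-case noise with the stationarity of the $q$-mean), but the logical direction is reversed: the paper derives the $\omega$-stationarity as an output of KKT on the $c$-problem, while you take the $\omega$-stationarity (established in Appendix \ref{app:pvarianceSection}) as input and construct the optimal $c$ from it. Your version buys rigor and economy: it avoids the paper's formal manipulations such as dividing by $\lvert c^*(s)\rvert$ (unjustified at coordinates where $c^*(s)=0$) and implicit attainment/regularity assumptions, and it cleanly separates the two inequalities. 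What it gives up is that the witness must be guessed rather than derived, and, like the paper, it needs separate care at the endpoints: your exponent-$(q-1)$ witness degenerates at $q=\infty$ (i.e.\ $p=1$), a case the paper also patches by a separate ad hoc verification, and at $q=1$ the sign condition must be interpreted via subgradients (ties at the median).
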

\begin{proof}
Writing Lagrangian $L$, as
\[L := \sum_{s}c(s)v(s) + \lambda\sum_{s}c(s) + \mu(\sum_{s}\lvert c(s)\rvert^p -\epsilon^p),\]
where $\lambda \in\mathbb{R}$ is the multiplier for the constraint $\sum_{s}c(s) = 0$ and $\mu \geq 0$ is the multiplier for the inequality constraint $\lVert c\lVert_q \leq \epsilon.$ Taking its derivative, we have
\begin{equation}\begin{aligned}
    \frac{\partial L}{\partial c(s)} = v(s) + \lambda + \mu p \lvert c(s)\rvert^{p-1}\frac{c(s)}{\lvert c(s)\rvert}
\end{aligned}\end{equation}
From the KKT (stationarity)  condition, the solution $c^
*$ has zero derivative, that is 
\begin{equation}\begin{aligned}\label{app:eq:kappa:LagDer}
     v(s) + \lambda + \mu p \lvert c^*(s)\rvert^{p-1}\frac{c^*(s)}{\lvert c^*(s)\rvert} = 0, \qquad \forall s\in\mathcal{S}.
\end{aligned}\end{equation}

Using Lagrangian derivative equation \eqref{app:eq:kappa:LagDer}, we have
\begin{equation}\begin{aligned}
    & v(s) + \lambda + \mu p \lvert c^*(s)\rvert^{p-1}\frac{c^*(s)}{\lvert c^*(s)\rvert} = 0 \\
    \implies & \sum_{s}c^*(s)[v(s)  + \lambda + \mu p \lvert c^*(s)\rvert^{p-1}\frac{c^*(s)}{\lvert c^*(s)\rvert}] = 0, \qquad \text{(multiply with $c^*(s)$ and summing )}\\
     \implies & \sum_{s}c^*(s)v(s)  + \lambda\sum_{s}c^*(s) + \mu p \sum_{s}\lvert c^*(s)\rvert^{p-1}\frac{(c^*(s))^2}{\lvert c^*(s)\rvert} = 0\\
    \implies & \langle c^*,v\rangle + \mu p \sum_{s}\lvert c^*(s)\rvert^{p} = 0 \qquad \text{(using $\sum_{s}c^*(s) =0$ and $(c^*(s))^2 = \lvert c^*(s)\rvert^2$ )}\\
    \implies & \langle c^*,v\rangle = -\mu p \epsilon^p, \qquad \text{(using $\sum_{s}\lvert c^*(s)\rvert^p= \epsilon^p$ ).} 
\end{aligned}\end{equation}
It is easy to see that $\mu \geq 0$, as minimum value of the objective must not be positive ( at $c=0$, the objective value is zero).
Again we use Lagrangian derivative \eqref{app:eq:kappa:LagDer} and try to get the objective value  ($-\mu p \epsilon^p$) in terms of $\lambda$, as 
\begin{equation}\begin{aligned}
    &v(s) + \lambda + \mu p \lvert c^*(s)\rvert^{p-1}\frac{c^*(s)}{\lvert c^*(s)\rvert} = 0\\
    \implies &\lvert c^*(s)\rvert^{p-2}c^*(s) =-\frac{v(s) + \lambda}{\mu p}, \qquad \text{(re-arranging terms)} \\
    \implies &\sum_{s}|(\lvert c^*(s)\rvert^{p-2}c^*(s))|^{\frac{p}{p-1}} =\sum_{s}|-\frac{v(s) + \lambda}{\mu p}|^{\frac{p}{p-1}}, \qquad \text{(doing $\sum_s\lvert\cdot\rvert^\frac{p}{p-1}$ )} \\
    \implies & \lVert c^*\rVert^p_p =\sum_{s}|-\frac{v(s) + \lambda}{\mu p}|^{\frac{p}{p-1}} = \sum_{s}|\frac{v(s) + \lambda}{\mu p}|^{q} =\frac{\lVert v + \lambda\rVert^q_q}{|\mu p|^q} \\
    \implies & |\mu p|^q\lVert c^*\rVert^p_p = \lVert v + \lambda\rVert^q_q , \qquad \text{(re-arranging terms)}\\
    \implies & |\mu p|^q \epsilon^p=\lVert v + \lambda\rVert^q_q, \qquad \text{(using $\sum_{s}\lvert c^*(s)\rvert^p= \epsilon^p$ )}\\
    \implies & \epsilon (\mu p \epsilon^{p/q})=  \epsilon \lVert v + \lambda\rVert_q\qquad \text{(taking $\frac{1}{q}$the power then multiplying with $\epsilon$)} \\
    \implies &\mu p \epsilon^p=\epsilon \lVert v + \lambda\rVert_q.
\end{aligned}\end{equation}

Again, using Lagrangian derivative \eqref{app:eq:kappa:LagDer} to solve for $\lambda$, we have
\begin{equation}\begin{aligned}
    &v(s) + \lambda + \mu p \lvert c^*(s)\rvert^{p-1}\frac{c^*(s)}{\lvert c^*(s)\rvert} = 0\\
    \implies &\lvert c^*(s)\rvert^{p-2}c^*(s) =-\frac{v(s) + \lambda}{\mu p} , \qquad \text{(re-arranging terms)}\\
    \implies & \lvert c^*(s)\rvert = |\frac{v(s) + \lambda}{\mu p}|^{\frac{1}{p-1}},  \quad \text{(looking at absolute value)}\\
    \qquad\qquad &\quad \text{and}\quad \frac{c^*(s)}{\lvert c^*(s)\rvert} = -\frac{v(s)+\lambda}{|v(s)+\lambda|}, \quad \text{(looking at sign: and note $\mu,p\geq 0$)}\\
    \implies &  \sum_{s}\frac{c^*(s)}{\lvert c^*(s)\rvert}\lvert c^*(s)\rvert = -\sum_{s}\frac{v(s)+\lambda}{|v(s)+\lambda|}|\frac{v(s) + \lambda}{\mu p}|^{\frac{1}{p-1}}, \qquad \text{(putting back)}\\
    \implies &  \sum_{s}c^*(s) = -\sum_{s}\frac{v(s)+\lambda}{|v(s)+\lambda|}|\frac{v(s) + \lambda}{\mu p}|^{\frac{1}{p-1}}, \\
    \implies & \sum_{s}\frac{v(s)+\lambda}{|v(s)+\lambda|}|v(s) + \lambda|^{\frac{1}{p-1}} = 0, \qquad \text{( using $\sum_i c^*(s) = 0$)}
\end{aligned}\end{equation}
Combining everything, we have 
\begin{equation}\label{kappa}\begin{aligned}
    &-\frac{1}{\epsilon}\min_{c}\langle c,v\rangle, \qquad \lVert c\rVert_p\leq \epsilon, \qquad \sum_{s}c(s) = 0\\
    = &\lVert v -\lambda\rVert_q, \quad\text{such that}\quad \sum_{s}sign(v(s)-\lambda)|v(s) - \lambda|^{\frac{1}{p-1}} = 0.
\end{aligned}\end{equation}
Now, observe that
\begin{equation}\begin{aligned}
&\frac{\partial \lVert v -\lambda\rVert_q}{\partial \lambda}  = 0\\
\implies &  \sum_{s}sign(v(s)-\lambda)|v(s) - \lambda|^{\frac{1}{p-1}} = 0,\\
\implies & \kappa_q(v) = \lVert v -\lambda\rVert_q, \quad\text{such that}\quad \sum_{s}sign(v(s)-\lambda)|v(s) - \lambda|^{\frac{1}{p-1}} = 0.
\end{aligned}
\end{equation}
The last equality follows from the convexity of p-norm $\lVert \cdot \rVert_q$, where every local minima is global minima. 
% \subsection{Regularizer function $\kappa_p$ for some special cases}

% Putting, $p=2$ in \eqref{kappa}, we have
% \begin{equation}\begin{aligned}
%      \kappa_2(v) &= \rVertv + \lambda\rVert_2, \quad\text{s.t.}\quad \sum_{s}(v(s)+\lambda) = 0\\
%      &= \rVertv + \lambda\rVert_2, \quad\text{s.t.}\quad \lambda = -\frac{\sum_{s}(v(s)}{S}\\
%      &=\sqrt{\sum_{s}(v(s) -\frac{\sum_{s}v(s)}{S})^2}
% \end{aligned}
% \end{equation}

% Putting, $p=\infty$ in \eqref{kappa}, we have
% \begin{equation}\begin{aligned}
%      \kappa_\infty(v) &= \rVertv + \lambda\rVert_1, \quad\text{such that}\quad \lim_{p\to\infty}\sum_{s}(v(s)+\lambda)|v(s) + \lambda|^{\frac{2-p}{p-1}} = 0\\
%      &=\rVertv + \lambda\rVert_1, \quad\text{such that}\quad \sum_{s}\frac{v(s)+\lambda}{|v(s) + \lambda|} = 0\\
%      &\qquad\text{(constraint is that half terms (sign) should be positive and other half negative, it suggests median value )}\\
%     &= \sum_{i=1}^{\lfloor S/2\rfloor}v(s_i) - \sum_{i =\lceil S/2\rceil}^{S}v(s_i)) \quad \text{where} \quad v(s_i)\geq v(s_{i+1}) \quad \forall i.\\
% \end{aligned}
% \end{equation}
For the sanity check, we re-derive things for $p=1$ from scratch.
For $p=1$, we have
\begin{equation}\begin{aligned}
&-\frac{1}{\epsilon}\min_{c}\langle c,v\rangle, \qquad \lVert c\rVert_1\leq \epsilon, \qquad \sum_{s}c(s) = 0.\\
=& - \frac{1}{2}(\min_{s}v(s) - \max_{s}v(s))\\
=& \kappa_1(v).   
\end{aligned}\end{equation}
It is easy to see the above result, just by inspection.
\end{proof}

\subsection{Binary search for p-mean and estimation of p-variance}\label{app:BSkappa}
If the function $f:[-B/2,B/2] \to \mathbb{R}, B\in\mathbb{R}$ is monotonic (WLOG let it be monotonically decreasing) in a bounded domain, and it has a unique root $x^*$ s.t. $f(x^*) = 0$. Then we can find $x$ that is an $\epsilon$-approximation $x^*$ (i.e. $\lVert x - x^*\rVert \leq \epsilon$ ) in $O(B/\epsilon)$ iterations. Why?  Let $x_0=0$ and
\[x_{n+1} := \begin{cases} \frac{-B + x_n}{2} & \text{if}\quad f(x_n) > 0\\
\frac{B + x_n}{2} & \text{if}\quad f(x_n) < 0\\
 x_n & \text{if}\quad f(x_n) = 0\\\end{cases}.\]
It is easy to observe that $\lVert x_n - x^*\rVert \leq B(1/2)^n$. This is proves the above claim. This observation will be referred to many times.\\
Now, we move to the main claims of the section.

\begin{proposition}\label{prp:SLpValMeanEval}
The function   \[h_p(\lambda):=\sum_{s}sign\bigm(v(s)-\lambda\bigm)\bigm\lvert v(s) - \lambda\bigm\rvert^{p}\] is monotonically strictly decreasing and also has a root in the range $[\min_{s}v(s),\max_{s}v(s)]$.
\end{proposition}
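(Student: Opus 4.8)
The plan is to recognize $h_p$ as a sum of shifted copies of the single odd function $\phi(t) := \mathrm{sign}(t)\,|t|^{p}$, and to reduce both claims to elementary properties of $\phi$. First I would record that, for $p>0$, the map $\phi$ is continuous on all of $\mathbb{R}$ (the only point in question is $t=0$, where both one-sided limits equal $\phi(0)=0$) and \emph{strictly increasing}: on $t>0$ it equals $t^{p}$ and on $t<0$ it equals $-(-t)^{p}$, each strictly increasing, while $\phi(t)<0$ for $t<0$ and $\phi(t)>0$ for $t>0$, so strict monotonicity is preserved across the origin as well.

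Next I would write $h_p(\lambda)=\sum_{s}\phi\bigl(v(s)-\lambda\bigr)$. For each fixed $s$ the inner map $\lambda\mapsto v(s)-\lambda$ is strictly decreasing, and composing a strictly decreasing map with the strictly increasing $\phi$ yields a strictly decreasing (and continuous) function of $\lambda$; a finite sum of such functions is again strictly decreasing and continuous. This settles the monotonicity claim.

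For existence of a root I would evaluate $h_p$ at the two endpoints. At $\lambda=\min_{s}v(s)$ every argument $v(s)-\lambda$ is $\ge 0$, so every summand is $\ge 0$, giving $h_p(\min_{s}v(s))\ge 0$; symmetrically $h_p(\max_{s}v(s))\le 0$. If $v$ is constant the two endpoints coincide and $h_p$ vanishes there, so that common value is the root; otherwise at least one summand is strictly positive (respectively strictly negative) at the lower (respectively upper) endpoint, so the two endpoint values have strictly opposite signs, and continuity together with the intermediate value theorem produces a root in $[\min_{s}v(s),\max_{s}v(s)]$. Strict monotonicity moreover makes this root unique.

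I expect no serious obstacle here; the only delicate point is verifying that $\phi$ retains strict monotonicity and continuity through the kink at the origin, which is why I would isolate $\phi$ and treat the three sign regimes explicitly rather than differentiate $h_p$ directly, since its derivative degenerates when $p>1$ and fails to exist wherever some $v(s)=\lambda$.
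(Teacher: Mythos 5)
Your proof is correct, and it reaches the same two milestones as the paper's — monotonicity, then endpoint signs plus the intermediate value theorem — but it establishes monotonicity by a genuinely different, calculus-free route. The paper simply differentiates: it writes $\frac{dh_p}{d\lambda}(\lambda) = -p\sum_{s}\lvert v(s)-\lambda\rvert^{p-1}\leq 0$ and concludes, then evaluates $h_p$ at $\min_s v(s)$ and $\max_s v(s)$ exactly as you do. Your decomposition of $h_p$ into shifted copies of the odd map $\phi(t)=\mathrm{sign}(t)\lvert t\rvert^{p}$ buys two things the paper's one-line derivative does not: it is valid for all $p>0$, including $0<p<1$ where $h_p$ is not differentiable at the points $\lambda=v(s)$, and it yields \emph{strict} decrease directly, whereas a derivative bound of $\leq 0$ only gives non-increase unless one adds the observation that the derivative can vanish at no more than one value of $\lambda$. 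You also handle the degenerate constant-$v$ case explicitly, which the paper glosses over (harmlessly, since the weak inequalities $h_p(\min_s v(s))\geq 0 \geq h_p(\max_s v(s))$ already suffice for the IVT). One small correction to your closing remark: for $p\geq 1$ the derivative of $h_p$ does exist everywhere (for $p>1$ it merely vanishes at the kinks, and for $p=1$ the function is affine); it is only for $p<1$ that differentiability genuinely fails at points where some $v(s)=\lambda$. This does not affect your argument, which never uses derivatives.
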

\begin{proof}
\begin{equation}\begin{aligned}
    &h_p(\lambda) = \sum_{s}\frac{v(s)-\lambda}{|v(s) - \lambda|}|v(s) - \lambda|^{p}\\
    &\frac{dh_p}{d\lambda}(\lambda) = -p\sum_{s}|v(s) - \lambda|^{p-1} \leq 0,\qquad \forall p\geq 0.
\end{aligned}\end{equation}
Now, observe that $h_p(\max_{s}v(s)) \leq 0$ and $h_p(\min_{s}v(s)) \geq 0$, hence by $h_p$ must have a root in the range $[\min_{s}v(s),\max_{s}v(s)]$ as the function is continuous.
\end{proof}

The above proposition ensures that a root $\omega_p(v)$ can be easily found by binary search between $[\min_{s}v(s),\max_{s}v(s)]$.

Precisely, $\epsilon$ approximation of $\omega_p(v)$ can be found in  $O(\log(\frac{\max_{s}v(s)-\min_{s}v(s)}{\epsilon}))$ number of iterations of binary search. And one evaluation of the function $h_p$ requires $O(S)$ iterations. And we have finite state-action space and bounded reward hence WLOG we can assume $\lvert\max_{s}v(s)\rvert, \lvert\min_{s}v(s)\rvert$ are bounded by a constant.  Hence, the complexity to approximate $\omega_p$ is $O(S\log(\frac{1}{\epsilon}))$.

 Let  $\hat{\omega}_{p}(v)$ be an $\epsilon$-approximation  of $\omega_p(v)$, that is 
\[\bigm\lvert \omega_p(v) - \hat{\omega}_p(v) \bigm\rvert \leq \epsilon.\]

And let $\hat{\kappa}_p(v)$ be approximation of $\kappa_p(v)$ using approximated mean, that is, 
\[\hat{\kappa}_p(v) := \lVert v -\hat{\omega}_{p}(v)\mathbf{1}\rVert_p.\]
Now we will show that $\epsilon$ error in calculation of $p$-mean $\omega_p$, induces $O(\epsilon)$ error in estimation of $p$-variance $\kappa_p$. Precisely,
\begin{equation}\begin{aligned}
    \Bigm\lvert \kappa_p(v)-\hat{\kappa}_p(v)\Bigm\rvert =& \Bigm\lvert \bigm\lVert v -\omega_{p}(v)\mathbf{1}\bigm\rVert_p -\bigm\lVert v -\hat{\omega}_{p}(v)\mathbf{1}\bigm\rVert_p\Bigm\rvert \\
    \leq& \bigm\lVert \omega_{p}(v)\mathbf{1}-\hat{\omega}_{p}(v)\mathbf{1}\bigm\rVert_p,\qquad \text{(reverse triangle inequality)}\\ 
    =&  \bigm\lVert\mathbf{1}\bigm\rVert_p\bigm\lvert \omega_{p}(v)-\hat{\omega}_{p}(v)\bigm\rvert\\
    \leq&  \bigm\lVert\mathbf{1}\bigm\rVert_p\epsilon\\
    = &S^\frac{1}{p}\epsilon \leq S\epsilon. 
    % =& \begin{cases}\epsilon \qquad &\text{if}\quad $q=\infty$ \\
    % S\epsilon \quad &\text{else}.\end{cases}
\end{aligned}\end{equation}
% An $\epsilon$ approximation of $p$-variance $\kappa_1(v)$ can be calculated in $O(S\log(\frac{1}{\epsilon}))$ iterations. But we may not need to do it, because we can compute $\kappa_1$ exactly in $O(S)$ (see table \ref{tb:kappa}). 
For general $p$, an $\epsilon$ approximation of $\kappa_p(v)$ can be calculated in 
$O(S\log(\frac{S}{\epsilon})$ iterations. Why? We will estimate mean $\omega_p$ to an $\epsilon/S$ tolerance (with cost $O(S\log(\frac{S}{\epsilon})$ ) and then approximate the $\kappa_p$ with this approximated mean (cost $O(S)$).

\section{ Lp Water Filling/Pouring  lemma}\label{app:waterPouringSection}
In this section, we are going to discuss the following optimization problem,
\[\max_{c}-\alpha \lVert c\rVert_q + \langle c,b\rangle \qquad \text{such that }\qquad \sum_{i=1}^Ac_i = 1,\quad c_i\geq 0,\quad \forall i\]
where $\alpha\geq 0$, referred as $L_p$-water pouring problem. We are going to assume WLOG that $b$ is sorted component wise, that is $b_1\geq b_2,\cdots\geq b_A.$ The above problem for $p=2$, is studied in \cite{anava2016k}. The approach we are going to solve the problem is as follows: a) Write Lagrangian b) Since the problem is convex, any solutions of KKT condition is global maximum. c) Obtain conditions using KKT conditions.

\begin{lemma}\label{LpwaterPouring}
Let $b \in\mathbb{R}^A$ be such that its components are in decreasing order (i,e $b_{i}\geq b_{i+1}$), $\alpha\geq 0$ be any non-negative constant, and 
\begin{equation}\label{app:eq:LpwaterPouringlemma}
 \zeta_p := \max_{c}-\alpha \lVert c\rVert_q + \langle c,b\rangle \qquad \text{such that }\qquad \sum_{i=1}^Ac_i = 1,\quad c_i\geq 0,\quad \forall i,    
\end{equation}
and let $c^*$ be a solution to the above problem.
 Then
\begin{enumerate}
    \item \label{app:wp:order} Higher components of $b$, gets higher weight in $c^*$. In other words, $c^*$ is also sorted component wise in descending order, that is \[c^*_1 \geq c^*_2,\cdots, \geq c^*_A.\] 
    \item \label{app:wp:zeta}The value $\zeta_p$ satisfies the following equation
    \[\alpha^{p} = \sum_{ b_i\geq \zeta_p}(b_i - \zeta_p)^{p}\]
    \item \label{app:wpl:policy} The solution $c$ of \eqref{app:eq:LpwaterPouringlemma}, is related to $\zeta_p$ as
    \begin{equation*}
        c_i = \frac{(b_i - \zeta_p)^{p-1}\mathbf{1}(b_i\geq\zeta_p)}{\sum_{s}(b_i - \zeta_p)^{p-1}\mathbf{1}(b_i\geq \zeta_p)}
    \end{equation*} 
    \item \label{app:wp:chi}Observe that the top $\chi_p:=\max\{i|b_i\geq\zeta_p\} $ actions are active and rest are passive. The number of active actions can be calculated as 
\[\{ k | \alpha^{p} \geq \sum_{i=1}^{k}(b_i - b_k)^{p}\} = \{1,2,\cdots,\chi_p\}.\]
% \item If action $k$ in passive then all sebsequent actions are passive. That is 
% \[ k > \chi_p \implies k+1 >\]
\item Things can be re-written as
\[c_i  \propto \begin{cases}(b_i - \zeta_p)^{p-1} & \text{if}\quad i\leq \chi_p\\0 & \text{else}
\end{cases}\qquad \text{and}\qquad \alpha^{p} = \sum_{i=1}^{ \chi_p}(b_i - \zeta_p)^{p}\]

% \item \label{app:wp:lambda}Let $\lambda_k$ as solution of the following problem, 
% \begin{equation}
%     \alpha^{p} = \sum_{ i=1}^{k}(b_i -\lambda)^{p},\qquad \text{and} \qquad b_k\geq \lambda,
% \end{equation}
% if feasible otherwise let $\lambda_k = -\infty$
% then \[\zeta_p = \max_{k}\lambda_k = \lambda_{\chi_p}.\] Furthermore, $\lambda_k$ are increasing till it collaps to $-\infty$, that is 
% \[\lambda_1\leq \lambda_2,\cdots,\leq \lambda_{\chi_p} ,\qquad \lambda_{k} = -\infty, \forall k > \chi_p.\]
\item \label{app:wp:zetaBinarySearch}The function $\sum_{b_i \geq x}(b_i - x)^p $ is monotonically decreasing in $x$, hence the root $\zeta_p$ can be calculated efficiently by binary search between $[b_1 -\alpha, b_1]$.
    
\item \label{app:wp:zetaBound} Solution is sandwiched as follows
\[b_{\chi_p+1} \leq \zeta_p \leq b_{\chi_p}\]
\item \label{app:wp:chiSol}  $k\leq \chi_p$ if and only if there exist the solution of the following,
\[ \sum_{i=1}^k(b_i-x)^p = \alpha^p \quad \text{and}\quad x \leq b_k.\]

\item \label{app:wp:greedyInclusion} If action $k$ is active  and there is greedy increment hope then action $k+1$ is also active. That is 
    \[k \leq \chi_p \quad\text{and}\quad\lambda_k \leq b_{k+1} \implies k+1\leq \chi_p,\]
    where 
    \[ \sum_{i=1}^k(b_i-\lambda_k)^p = \alpha^p \quad \text{and}\quad \lambda_k \leq b_k.\]
\item \label{app:wp:stoppingCondition} If action $k$ is active, and there is no greedy hope and then action $k+1$ is not active. That is,\[k \leq \chi_p \quad\text{and}\quad\lambda_k > b_{k+1} \implies k+1> \chi_p,\]
    where 
    \[ \sum_{i=1}^k(b_i-\lambda_k)^p = \alpha^p \quad \text{and}\quad \lambda_k \leq b_k.\]
And this implies $k = \chi_p.$
\end{enumerate}
\end{lemma}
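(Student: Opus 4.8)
The plan is to treat \eqref{app:eq:LpwaterPouringlemma} as a convex program and extract every item from its KKT system. First I would observe that $c\mapsto -\alpha\lVert c\rVert_q + \langle c,b\rangle$ is concave (a linear term minus a norm) and the feasible set $\{c:\sum_i c_i=1,\ c_i\geq 0\}$ is convex, so this is a concave maximization and the KKT conditions are both necessary and sufficient; hence any KKT point is a global maximizer. Writing the Lagrangian $L=-\alpha\lVert c\rVert_q+\langle c,b\rangle+\lambda(1-\sum_i c_i)+\sum_i\nu_i c_i$ with $\nu_i\geq 0$, and using that for $1<q<\infty$ the $q$-norm is differentiable with $\partial_{c_i}\lVert c\rVert_q=c_i^{q-1}/\lVert c\rVert_q^{q-1}$ (which equals $0$ when $c_i=0$, resolving the only delicate smoothness point), stationarity reads $b_i-\lambda-\alpha (c_i)^{q-1}/\lVert c\rVert_q^{q-1}+\nu_i=0$.

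From complementary slackness $\nu_i c_i=0$ I would separate two regimes: for an active coordinate ($c_i^*>0$, $\nu_i=0$) we get $b_i-\lambda=\alpha(c_i^*)^{q-1}/\lVert c^*\rVert_q^{q-1}>0$, while for an inactive one ($c_i^*=0$) we get $b_i\leq\lambda$. Identifying $\lambda$ with $\zeta_p$, solving the active equation for $c_i^*$, and invoking the H\"older identities $q=\tfrac{p}{p-1}$, $\tfrac{1}{q-1}=p-1$, $(q-1)(p-1)=1$ and $(p-1)q=p$ gives $c_i^*\propto(b_i-\zeta_p)^{p-1}$; normalizing by $\sum_i c_i^*=1$ yields Item~\ref{app:wpl:policy}. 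Since $x\mapsto x^{p-1}$ is increasing on $[0,\infty)$ and $b$ is sorted, $c^*$ is sorted too, which is Item~\ref{app:wp:order}. Substituting these weights into $\lVert c^*\rVert_q^q=\sum_i(c_i^*)^q$ and simplifying with the same exponent arithmetic collapses the unknown normalizing constant and the factor $\lVert c^*\rVert_q$, leaving
\[\alpha^p=\sum_{b_i\geq\zeta_p}(b_i-\zeta_p)^p,\]
which is Item~\ref{app:wp:zeta}.

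With $\zeta_p$ characterized, the remaining items are monotonicity bookkeeping. Setting $g(x):=\sum_{b_i\geq x}(b_i-x)^p$, a term-by-term derivative shows $g$ is continuous and strictly decreasing on $(-\infty,b_1]$ and vanishes for $x\geq b_1$; together with $g(b_1-\alpha)\geq\alpha^p$ this localizes the unique root $\zeta_p$ to $[b_1-\alpha,b_1]$ and justifies the binary search (Item~\ref{app:wp:zetaBinarySearch}). Defining $\chi_p=\max\{i:b_i\geq\zeta_p\}$ makes the active set exactly $\{1,\dots,\chi_p\}$, and the sandwich $b_{\chi_p+1}\leq\zeta_p\leq b_{\chi_p}$ (Item~\ref{app:wp:zetaBound}) is immediate. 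For Item~\ref{app:wp:chi} the key observation is that, because $b$ is sorted, $g(b_k)=\sum_{i=1}^{k}(b_i-b_k)^p$ exactly (the omitted terms vanish on ties), so $k\leq\chi_p\iff b_k\geq\zeta_p\iff g(b_k)\leq g(\zeta_p)=\alpha^p$, which is the stated criterion; Item~5 is then the assembled restatement of Items~\ref{app:wp:zeta}--\ref{app:wp:chi}.

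Finally, for Items~\ref{app:wp:chiSol}--\ref{app:wp:stoppingCondition} I would work with $h_k(x):=\sum_{i=1}^k(b_i-x)^p$, continuous and strictly decreasing for $x\leq b_k$, with $h_k(b_k)=g(b_k)$ and $h_k(x)\to\infty$ as $x\to-\infty$. By the intermediate value theorem a root $x\leq b_k$ of $h_k=\alpha^p$ exists iff $h_k(b_k)\leq\alpha^p$, i.e. iff $k\leq\chi_p$, giving Item~\ref{app:wp:chiSol}. For the greedy step, with $\lambda_k$ the root on the first $k$ coordinates, comparing $\sum_{i=1}^{k+1}(b_i-b_{k+1})^p$ to $\sum_{i=1}^k(b_i-\lambda_k)^p=\alpha^p$ term by term (the $(k+1)$-th term being zero) shows the former is $\leq\alpha^p$ when $\lambda_k\leq b_{k+1}$ and strictly $>\alpha^p$ when $\lambda_k>b_{k+1}$, which are exactly Items~\ref{app:wp:greedyInclusion} and~\ref{app:wp:stoppingCondition} through the criterion of Item~\ref{app:wp:chi}. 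I expect the main obstacle to lie in the first two paragraphs: handling the non-smoothness of $\lVert c\rVert_q$ at the boundary of the simplex, and above all executing the exponent arithmetic that turns the $q$-norm stationarity condition into clean $p$-th power statements, since a single slip in the H\"older identities would derail every later item. The tie-breaking in the sorted-$b$ arguments (strict versus weak inequalities) is the secondary place demanding care.
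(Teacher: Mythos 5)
Your proposal is correct and follows essentially the same route as the paper's proof: a Lagrangian/KKT analysis with the identical exponent arithmetic ($1/(q-1)=p-1$, $(p-1)q=p$) yielding items 2--3, followed by the same monotonicity/intermediate-value bookkeeping for items 4--10; the only structural difference is that the paper proves item 1 independently via a rearrangement argument, while you obtain it as a corollary of the explicit form of $c^*$, which is equally valid. The one step you state without justification is the identification $\lambda=\zeta_p$ of the simplex multiplier with the optimal value --- this is not a generic fact but follows (as the paper shows) by multiplying the stationarity condition by $c_i^*$ and summing, using $\sum_i c_i^*=1$, complementary slackness, and $\sum_i (c_i^*)^q/\lVert c^*\rVert_q^{q-1}=\lVert c^*\rVert_q$ (equivalently, Euler's relation for the degree-1 homogeneous objective), so you should make that short computation explicit.
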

\begin{proof}
 \begin{enumerate}
    \item  Let \[f(c): = -\alpha\lVert c\rVert_q + \langle b,c\rangle.\]
    Let $c$ be any vector, and $c'$ be rearrangement $c$ in descending order. Precisely,
    \[c'_{k} := c_{i_k} , \quad \text{where}\quad c_{i_1}\geq c_{i_2},\cdots,\geq c_{i_A}.\]
    Then it is easy to see that $f(c')\geq f(c).$ And the claim follows.
    
     \item Writting Lagrangian of the optimization problem, and its derivative,
\begin{equation}\begin{aligned}
    &L = -\alpha \lVert c\rVert_q + \langle c,b\rangle + \lambda(\sum_{i}c_i -1) + \theta_ic_i\\
    &\frac{\partial L}{\partial c_i} = -\alpha \lVert c\rVert_q^{1-q}|c_i|^{q-2}c_i + b_i + \lambda + \theta_i,
\end{aligned}\end{equation}
$\lambda\in\mathbb{R}$ is multiplier for equality constraint $\sum_{i}c_i = 1$ and $\theta_1,\cdots,\theta_A \geq 0$ are multipliers for inequality constraints $c_i\geq 0,\quad \forall i\in [A].$ Using KKT (stationarity) condition, we have
\begin{equation}\label{app:eq:Lpwp:st}
     -\alpha \lVert c^*\rVert_q^{1-q}|c^*_i|^{q-2}c^*_i + b_i + \lambda + \theta_i = 0\\
\end{equation}
Let $\mathcal{B}:=\{i | c^*_i > 0\}$, then
\begin{equation}\begin{aligned}
    &\sum_{i\in\mathcal{B}} c^*_i[-\alpha \lVert c^*\rVert_q^{1-q}|c^*_i|^{q-2}c^*_i + b_i + \lambda ] = 0\\
    \implies &-\alpha \lVert c^*\rVert_q^{1-q}\lVert c^*\rVert^q_q + \langle c^*, b \rangle + \lambda  = 0, \qquad \text{(using $\sum_i c^*_i = 1$ and $(c^*_i)^2 = |c^*_i|^2$)}\\
     \implies &-\alpha \lVert c^*\rVert_q + \langle c^*, b \rangle + \lambda  = 0\\
     \implies &-\alpha \lVert c^*\rVert_q + \langle c^*, b \rangle  =- \lambda, \qquad \text{(re-arranging)} 
\end{aligned}\end{equation}

Now again using \eqref{app:eq:Lpwp:st},  we have 
\begin{equation}\begin{aligned}
   &-\alpha \lVert c^*\rVert_q^{1-q}|c^*_i|^{q-2}c^*_i + b_i + \lambda + \theta_i = 0\\
    \implies& \alpha\lVert c^*\rVert_q^{1-q}|c^*_i|^{q-2}c^*_i =  b_i + \lambda +\theta_i , \qquad \forall i, \qquad \text{(re-arranging)}
 \end{aligned}\end{equation}
 Now, if $i\in\mathcal{B}$ then $\theta_i = 0$ from complimentry slackness, so we have 
 \[\alpha\lVert c^*\rVert_q^{1-q}|c^*_i|^{q-2}c^*_i =  b_i + \lambda > 0 , \qquad \forall i\in\mathcal{B}\]
 by definition of $\mathcal{B}$. Now, if for some $i$, $b_i+\lambda > 0$ then $b_i+\lambda + \theta_i > 0$ as $\theta_i\geq 0$, that implies
 \[\alpha\lVert c^*\rVert_q^{1-q}|c^*_i|^{q-2}c^*_i =  b_i + \lambda +\theta_i > 0 \]
 \[\implies c^*_i > 0 \implies i\in\mathcal{B}. \]
 So, we have, 
 \[i\in\mathcal{B} \iff b_i+\lambda > 0.\]
  To summarize, we have 
\begin{equation}\label{app:eq:wp:c}
    \alpha\lVert c^*\rVert_q^{1-q}|c^*_i|^{q-2}c^*_i =  (b_i + \lambda) \mathbf{1}(b_i\geq-\lambda), \quad \forall i,\\
\end{equation}    
\begin{equation}\begin{aligned}
     \implies & \sum_{i}\alpha^{\frac{q}{q-1}}\lVert c^*\rVert_q^{-q}(c^*_i)^q = \sum_{i}(b_i + \lambda)^{\frac{q}{q-1}}\mathbf{1}(b_i\geq-\lambda), \\&\qquad \text{(taking $q/(q-1)$th power and summing)} \\
     \implies & \alpha^{p}  = \sum_{i=1}^A(b_i + \lambda)^{p}\mathbf{1}(b_i\geq-\lambda).
\end{aligned}\end{equation}
So, we have,
\begin{equation}\begin{aligned}\label{eq:waterpourng}
    \zeta_p    &= -\lambda \quad \text{such that }\quad \alpha^{p} = \sum_{b_i\geq \lambda}(b_i + \lambda)^{p}.\\
   \implies \alpha^{p} &= \sum_{ b_i\geq \zeta_p}(b_i - \zeta_p)^{p}
\end{aligned}\end{equation}
\item Furthermore, using \eqref{app:eq:wp:c}, we have
\begin{equation}\begin{aligned}
   &\alpha\lVert c^*\rVert_q^{1-q}|c^*_i|^{q-2}c^*_i =  (b_i + \lambda) \mathbf{1}(b_i\geq -\lambda) = (b_i -\zeta_p) \mathbf{1}(b_i\geq \zeta_p) \quad \forall i,\\
   \implies &  c^*_i \propto (b_i -\zeta_p)^{\frac{1}{q-1}}\mathbf{1}(b_i\geq \zeta_p)= \frac{(b_i - \zeta_p)^{p-1}\mathbf{1}(b_i\geq\zeta_p)}{\sum_{i}(b_i - \zeta_p)^{p-1}\mathbf{1}(b_i\geq \zeta_p)}, \qquad \text{(using $\sum_{i}c^*_i = 1$)}. \\
\end{aligned}\end{equation}

\item Now, we move on to calculate the number of active actions $\chi_p$. Observe that the function
\begin{equation}
    f(\lambda) := \sum_{i=1}^A(b_i-\lambda)^p\mathbf{1}(b_i\geq \lambda) - \alpha^p
\end{equation}
is monotonically decreasing in $\lambda$ and $\zeta_p$ is a root of $f$. This implies
\begin{equation}\begin{aligned}
    &f(x) \leq 0 \iff x \geq \zeta_p\\
    \implies & f(b_i) \leq 0 \iff b_i \geq \zeta_p\\
    \implies &\{i|b_i\geq \zeta_p\} =\{i|f(b_i)\leq 0\}\\
    \implies &\chi_p = \max\{i|b_i\geq \zeta_p\} = \max\{i|f(b_i)\leq 0\}.
\end{aligned}\end{equation}
Hence, things follows by putting back in the definition of $f$.
\item We have, 
\[\alpha^{p} = \sum_{i=1}^A(b_i - \zeta_p)^{p}\mathbf{1}(b_i\geq\zeta_p),\quad \text{and}\quad \chi_p = \max\{i|b_i\geq \zeta_p\}.\]
Combining both we have
\[\alpha^{p} = \sum_{i=1}^{\chi_p}(b_i - \zeta_p)^{p}.\]
And the other part follows directly.

\item Continuity and montonocity of the function $\sum_{b_i\geq x}(b_i - x)^{p} $ is trivial. Now observe that
$\sum_{b_i\geq b_1}(b_i - b_1)^{p} = 0$ and $ \sum_{b_i\geq b_1-\alpha}(b_i - (b_1-\alpha))^{p} \geq \alpha^p$,
so it implies that it is equal to $\alpha^p$ in the range $[b_1-\alpha,b_1]$.
\item Recall that the $\zeta_p$ is the solution to the following equation
\[\alpha^{p} = \sum_{b_i\geq x}(b_i - x)^{p}.\]
 And from the definition of $\chi_p$, we have \[\alpha^{p} < \sum_{i=1}^{ \chi_p +1}(b_i - b_{\chi_p +1})^{p} = \sum_{b_i\geq b_{\chi_p+1} }(b_i - b_{\chi_p+1 })^{p},\quad \text{and}\]
 \[ \alpha^{p} \geq \sum_{i=1}^{\chi_p} (b_i - b_{\chi_p })^{p} =\sum_{b_i\geq b_{\chi_p}} (b_i - b_{\chi_p })^{p}.\]
 
So from continuity, we infer the root $\zeta_p$ must lie between $[b_{\chi_p+1},b_{\chi}]$.
\item We  prove the first direction, and assume we have 
\begin{equation}\begin{aligned}
    & k \leq \chi_p \\
    \implies & \sum_{i=1}^k(b_i -b_k)^p \leq \alpha^p \qquad \text{(from definition of $\chi_p$)}.\\
\end{aligned}
\end{equation}
Observe the function $f(x):=\sum_{i=1}^k(b_i -x)^p$ is monotically decreasing in the range $(-\infty,b_k]$. Further, $f(b_k) \leq \alpha^p$  and $\lim_{x\to-\infty}f(x) = \infty$, so from the continuity argument there must exist a value $y\in (-\infty,b_k]$ such that $f(y) = \alpha^p$. This implies that 
\[\sum_{i=1}^k(b_i -y)^p \leq \alpha^p,\quad\text{and}\quad y\leq b_k.\]
Hence, explicitly showed the existence of the solution. Now, we move on to the second direction, and assume there exist $x$ such that
\[\sum_{i=1}^k(b_i -x)^p = \alpha^p,\quad\text{and}\quad x\leq b_k.\]
\[\implies \sum_{i=1}^k(b_i -b_k)^p \leq \alpha^p, \qquad \text{(as $x\leq b_k\leq b_{k-1}\cdots \leq b_1$)}\]
\[\implies k \leq \chi_p.\]

 \item We have $k \leq \chi_p$ and $\lambda_k$ such that 
 \begin{equation}\begin{split}
     \alpha^{p} 
     &= \sum_{i=1}^k(b_i - \lambda_k)^{p},\quad\text{and}\quad \lambda_k \leq b_k, \qquad \text{(from above item)}\\
      &  \geq \sum_{i=1}^k(b_i - b_{k+1})^{p},\qquad \text{(as $\lambda_k \leq b_{k+1} \leq b_k$)}\\
      & \geq \sum_{i=1}^{k+1}(b_i - b_{k+1})^{p},\qquad \text{(addition of $0$).}
 \end{split}
 \end{equation}
 From the definition of $\chi_p$, we get $k+1\leq \chi_p$. 
 
 \item We are given 
 \[ \sum_{i=1}^k(b_i-\lambda_k)^p = \alpha^p\]
 \[\implies \sum_{i=1}^k(b_i-b_{k+1})^p > \alpha^p,\qquad \text{(as $\lambda_k > b_{k+1}$)}\]
  \[\implies \sum_{i=1}^{k+1}(b_i-b_{k+1})^p > \alpha^p,\qquad \text{(addition of zero)}\]
    \[ \implies k+1 > \chi_p.\]

\end{enumerate}
\end{proof}

\subsubsection{Special case: L1}\label{app:L1waterpouringSC}
For $p=1$, by definition, we have
\begin{equation}\begin{aligned}
  \zeta_1& = \max_{c}-\alpha \lVert c\rVert_{\infty} + \langle c,b\rangle \qquad \text{such that }\qquad \sum_{a\in \mathcal{A}}c_a = 1,\quad c\succeq 0.\\
% = &\max_{c}-\alpha + \langle c,b\rangle \qquad \text{such that }\qquad \sum_{a\in \mathcal{A}}c_a = 1,\quad c\succeq 0. \\
% = & -\alpha + \max_{i}b_i\\
\end{aligned}\end{equation}
And $\chi_1$ is the optimal number of actions, that is 
\[\alpha = \sum_{i=1}^{\chi_1}(b_i - \zeta_1) \]
\[\implies \zeta_1 =\frac{\sum_{i=1}^{\chi_1} b_i - \alpha}{\chi_1}. \]
Let $\lambda_k$ be the such that  
\[\alpha = \sum_{i=1}^{k}(b_i - \lambda_k) \]
\[\implies \lambda_k =\frac{\sum_{i=1}^{k} b_i - \alpha}{k}. \]

\begin{proposition}
 \[\zeta_1 = \max_{k}\lambda_k\]
\end{proposition}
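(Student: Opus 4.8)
The plan is to reduce everything to the single identity $\zeta_1 = \lambda_{\chi_1}$ and then show that the index $k=\chi_1$ maximizes $\lambda_k$. First I would record, from part \ref{app:wp:zeta} of Lemma \ref{LpwaterPouring} specialized to $p=1$, that $\zeta_1$ solves $\alpha = \sum_{b_i \geq \zeta_1}(b_i - \zeta_1)$. Since $b$ is sorted in descending order and $\chi_1 = \max\{i \mid b_i \geq \zeta_1\}$, the active indices are exactly $\{1,\dots,\chi_1\}$, so this sum equals $\sum_{i=1}^{\chi_1}(b_i - \zeta_1)$. Rearranging gives $\zeta_1 = \frac{\sum_{i=1}^{\chi_1} b_i - \alpha}{\chi_1} = \lambda_{\chi_1}$, exactly as displayed just above the statement. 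Hence it suffices to prove $\lambda_k \leq \lambda_{\chi_1}$ for every $k$, with equality at $k=\chi_1$.

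Two facts from the sorted order and part \ref{app:wp:zetaBound} drive the argument: $b_i \geq \zeta_1$ for all $i \leq \chi_1$ (because $b_i \geq b_{\chi_1} \geq \zeta_1$), and $b_i \leq \zeta_1$ for all $i > \chi_1$ (because $b_i \leq b_{\chi_1+1} \leq \zeta_1$). Writing $S_k := \sum_{i=1}^k b_i$, I would use $S_{\chi_1} - \alpha = \chi_1 \zeta_1$ as the anchor. For $k < \chi_1$, subtract the tail to get $S_k - \alpha = \chi_1\zeta_1 - \sum_{i=k+1}^{\chi_1} b_i \leq \chi_1\zeta_1 - (\chi_1 - k)\zeta_1 = k\zeta_1$, using $b_i \geq \zeta_1$ on the removed block, so that $\lambda_k \leq \zeta_1$. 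For $k > \chi_1$, add the tail to get $S_k - \alpha = \chi_1\zeta_1 + \sum_{i=\chi_1+1}^{k} b_i \leq \chi_1\zeta_1 + (k - \chi_1)\zeta_1 = k\zeta_1$, now using $b_i \leq \zeta_1$ on the appended block, so that again $\lambda_k \leq \zeta_1$. Combined with $\lambda_{\chi_1} = \zeta_1$, this yields $\max_k \lambda_k = \zeta_1 = \lambda_{\chi_1}$.

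I would also note the slicker incremental viewpoint, which mirrors parts \ref{app:wp:greedyInclusion}--\ref{app:wp:stoppingCondition} of the lemma: a one-line computation using $k\lambda_k = S_k - \alpha$ shows $\lambda_{k+1} - \lambda_k = \frac{b_{k+1} - \lambda_k}{k+1}$, so $\lambda_k$ strictly increases while $b_{k+1} > \lambda_k$ and decreases once $b_{k+1} < \lambda_k$, making the sequence unimodal with peak at the greedy stopping index $\chi_1$. This gives the same conclusion, but the averaging bound above is the cleanest to write out in full.

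The main obstacle is the $k > \chi_1$ direction: a naive ``$\lambda_k$ is monotone decreasing past $\chi_1$'' induction stumbles because the step $\lambda_{k+1} \leq \lambda_k$ requires $b_{k+1} \leq \lambda_k$ rather than the readily available $b_{k+1} \leq \zeta_1$, and the intermediate $\lambda_k$ values are not pinned to $\zeta_1$. The averaging argument sidesteps this entirely by comparing $S_k - \alpha$ directly to $k\zeta_1$, so no monotonicity of $\lambda_k$ itself is needed — only the sign pattern of $b_i - \zeta_1$ on either side of the threshold $\chi_1$, which is immediate from sorting.
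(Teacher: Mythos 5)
Your proof is correct, and it takes a route that is partly but not entirely the same as the paper's. The paper argues via unimodality of the sequence $(\lambda_k)$: it cites Lemma \ref{LpwaterPouring} for the increasing part $\lambda_1 \leq \cdots \leq \lambda_{\chi_1}$, then for indices past $\chi_1$ derives the identity $\lambda_{\chi_1}-\lambda_{\chi_1+m} = \frac{m}{\chi_1+m}\bigl(\lambda_{\chi_1} -\frac{1}{m}\sum_{i=1}^m b_{\chi_1+i}\bigr)$ and combines it with the stopping condition $\lambda_{\chi_1} > b_{\chi_1+1}$ to get $\lambda_{\chi_1}\geq\lambda_{\chi_1+m}$. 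You instead compare every $\lambda_k$ directly to $\zeta_1$ through the single inequality $\sum_{i=1}^k b_i - \alpha \leq k\zeta_1$, with both directions supplied by the sign pattern of $b_i-\zeta_1$ coming from the sandwich $b_{\chi_1+1}\leq\zeta_1\leq b_{\chi_1}$. On the side $k>\chi_1$ the two arguments are algebraically identical: your bound is the paper's identity with anchor $\lambda_{\chi_1}=\zeta_1$, and the stopping condition is exactly one half of the sandwich. The genuine difference is on the side $k<\chi_1$: the paper leans on the claim ``$\lambda_1\leq\cdots\leq\lambda_{\chi_1}$ from the lemma,'' which is not in fact a numbered item of Lemma \ref{LpwaterPouring} and is left unproved there, whereas your averaging bound proves this half directly from $b_i\geq\zeta_1$ for $i\leq\chi_1$; your write-up is therefore more self-contained, at the cost of losing the shape information (monotone up, then down) that the paper's unimodality argument exhibits. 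One small caveat: your remark that a decreasing-past-$\chi_1$ induction ``stumbles'' is slightly overstated — it does go through if one strengthens the induction invariant to $\lambda_k\geq b_{k+1}$, since then $\lambda_{k+1}=\frac{k\lambda_k+b_{k+1}}{k+1}$ is a weighted average of $\lambda_k$ and $b_{k+1}$, giving $b_{k+2}\leq b_{k+1}\leq\lambda_{k+1}\leq\lambda_k$ — but since your main argument does not use that route, this does not affect its validity.
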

 \begin{proof}
 From lemma \ref{LpwaterPouring}, we have 
 \[\lambda_1 \leq \lambda_2 \cdots \leq\lambda_{\chi_1}.\]
Now, we have  
\begin{equation}\begin{split}\label{eq:Telelambda}
    \lambda_{k}-\lambda_{k+m} &= \frac{\sum_{i=1}^kb_i- \alpha}{k}-\frac{\sum_{i=1}^{k+m}b_i- \alpha}{k+m}\\ 
    &= \frac{\sum_{i=1}^kb_i- \alpha}{k}-\frac{\sum_{i=1}^{k}b_i- \alpha}{k+m} -\frac{\sum_{i=1}^m b_{k+i}}{k+m}\\ 
    &= \frac{m(\sum_{i=1}^kb_i- \alpha}{k(k+m))} -\frac{\sum_{i=1}^m b_{k+i}}{k+m}\\
    &= \frac{m}{k+m}(\frac{\sum_{i=1}^kb_i- \alpha}{k} -\frac{\sum_{i=1}^m b_{k+i}}{m})\\
    &= \frac{m}{k+m}(\lambda_k -\frac{\sum_{i=1}^m b_{k+i}}{m})\\
\end{split}
\end{equation}
 From lemma \ref{LpwaterPouring}, we also know the stopping criteria for $\chi_1$, that is 
 \[\lambda_{\chi_1} > b_{\chi_1 +1}\]
 \[\implies \lambda_{\chi_1} > b_{\chi_1 + i}, \qquad  i\geq 1,\qquad \text{(as $b_i$ are in descending order)}  \]
 \[\implies \lambda_{\chi_1} > \frac{\sum_{i=1}^m b_{\chi_1+i}}{m}, \qquad \forall m\geq 1.\]
 Combining it with the \eqref{eq:Telelambda}, for all $m
 \geq 0$ , we get 
 \begin{equation}\begin{split}
 \lambda_{\chi_1}-\lambda_{\chi_1+m} &= \frac{m}{\chi_1+m}(\lambda_{\chi_1} -\frac{\sum_{i=1}^m b_{\chi_1+i}}{m})\\
 &\geq 0\\
 \implies \lambda_{\chi_1}&\geq \lambda_{\chi_1+m} 
 \end{split}
 \end{equation}
 Hence, we get the desired result, 
\[\zeta_1 = \lambda_{\chi_1} = \max_{k}\lambda_k.\]
 \end{proof} 

\subsubsection{Special case: max norm}
For $p=\infty$, by definition, we have
\begin{equation}\begin{aligned}
  \zeta_\infty(b)&= \max_{c}-\alpha \lVert c\rVert_{1} + \langle c,b\rangle \qquad \text{such that }\qquad \sum_{a\in \mathcal{A}}c_a = 1,\quad c\succeq 0.\\
= &\max_{c}-\alpha + \langle c,b\rangle \qquad \text{such that }\qquad \sum_{a\in \mathcal{A}}c_a = 1,\quad c\succeq 0. \\
= & -\alpha + \max_{i}b_i
\end{aligned}\end{equation}

\subsubsection{Special case: L2}
The problem is discussed in great details in \cite{anava2016k}, here we outline the proof.
For $p=2$, we have
\begin{equation}\begin{aligned}
  \zeta_2&= \max_{c}-\alpha \lVert c\rVert_{2} + \langle c,b\rangle \qquad \text{such that }\qquad \sum_{a\in \mathcal{A}}c_a = 1,\quad c\succeq 0.\\
%   &=\lambda \quad \text{such that }\quad \alpha^{2} = \sum_{b_i\geq \lambda}(b_i - \lambda)^{2}.\\
\end{aligned}\end{equation}
Let $\lambda_k$ be the solution of the following equation
\begin{equation}\begin{split}
\alpha^{2} &= \sum_{i=1}^k(b_i - \lambda)^{2},\qquad \lambda \leq b_k\\
    &=  k\lambda^2 -2\sum_{i=1}^k\lambda b_i. +\sum_{i=1}^k(b_i)^2, \qquad \lambda \leq b_k \\
 \implies \lambda_k &= \frac{\sum_{i=1}^k b_i \pm \sqrt{(\sum_{i=1}^kb_i)^2 - k(\sum_{i=1}^k(b_i)^2 - \alpha^{2} )}}{k}, \quad \text{and } \quad \qquad \lambda_k \leq b_k \\
  &= \frac{\sum_{i=1}^k b_i - \sqrt{(\sum_{i=1}^kb_i)^2 - k(\sum_{i=1}^k(b_i)^2 - \alpha^{2} )}}{k}\\
  &= \frac{\sum_{i=1}^k b_i}{k} - \sqrt{\alpha^{2} -\sum_{i=1}^k(b_i -\frac{\sum_{i=1}^kb_i}{k})^2}\\
\end{split}
\end{equation}
From lemma \ref{LpwaterPouring}, we know
 \[\lambda_1 \leq \lambda_2 \cdots \leq \lambda_{\chi_2} = \zeta_2\]
 where $\chi_2$ calculated in two ways: a) 
 \[\chi_2 = \max_{m}\{m| \sum_{i=1}^m(b_i-b_m)^2\leq \alpha^2\}\]
 b) \[\chi_2 = \min_{m}\{m |\lambda_{m}\leq b_{m+1}\}\]
 We proceed greedily until stopping condition is met in lemma \ref{LpwaterPouring}. Concretely, it is illustrated in algorithm \ref{alg:f2}.

\subsection{L1 Water Pouring lemma}\label{app:L1waterpouringLemma}
In this section, we re-derive the above water pouring lemma for 
$p=1$ from scratch, just for sanity check. As in the above proof, there is a possibility of some breakdown, as we had take limits $q\to\infty$. We will see that all the above results for $p=1$ too.

Let $b \in\mathbb{R}^A$ be such that its components are in decreasing order, i,e $b_{i}\geq b_{i+1}$ and
\begin{equation}\label{eq:LpwaterPouringlemma}
 \zeta_1 := \max_{c}-\alpha \lVert c\rVert_\infty + \langle c,b\rangle \qquad \text{such that }\qquad \sum_{i=1}^Ac_i = 1,\quad c_i\geq 0,\quad \forall i.   
\end{equation}
 Lets fix any vector $c\in\mathbb{R}^A$, and let $k_1 :=\lfloor\frac{1}{\max_{i}c_i}\rfloor$ and let \[c^1_i = \begin{cases}\max_{i}c_i\qquad&\text{if}\quad i\leq k_1\\
1- k_1\max_{i}c_i\qquad&\text{if}\quad i= k_1+1\\
0\qquad&\text{else}\\\end{cases}  \]
Then we have,
 \begin{equation}\begin{aligned}
     -\alpha \lVert c\rVert_\infty + \langle c,b\rangle =& -\alpha \max_{i}c_i + \sum_{i=1}^A c_ib_i\\
     \leq&-\alpha \max_{i}c_i + \sum_{i=1}^A c^1_ib_i, \qquad\text{(recall $b_i$ is in decreasing order)}\\
     =& -\alpha \lVert c^1\rVert_\infty + \langle c^1,b\rangle
 \end{aligned}\end{equation}
  Now, lets define $c^2\in\mathbb{R}^A$. Let 
  \[k_2 = \begin{cases}k_1+1\qquad&\text{if}\quad \frac{\sum_{i=1}^{k_1}b_i -\alpha}{k_1} \leq b_{k+1}\\
k_1\qquad &\text{else}\\\end{cases}  \]
and let $c^2_i = \frac{\mathbf{1}(i\leq k_2)}{k_2}$. Then we have,
\begin{equation}\begin{aligned}
     -\alpha \lVert c^1\rVert_\infty + \langle c^1,b\rangle =&-\alpha \max_{i}c_i + \sum_{i=1}^A c^1_ib_i\\
     =&-\alpha \max_{i}c_i + \sum_{i=1}^{k_1}\max_{i}c_i b_i +(1-k_1\max_{i}c_i) b_{k_1 +1},\quad\text{(definition of $c^1$)}\\
    =&(\frac{-\alpha  +  \sum_{i=1}^{k_1}b_i}{k_1})k_1\max_{i}c_i + b_{k_1 +1}(1-k_1\max_{i}c_i),\qquad \text{(re-arranging)}\\
    \leq & \frac{-\alpha  +  \sum_{i=1}^{k_2}b_i}{k_2}\\
     =& -\alpha \lVert c^2\rVert_\infty + \langle c^2,b\rangle
 \end{aligned}\end{equation}
 The last inequality comes from the definition of $k_2$ and $c^2$. So we conclude that a optimal solution is uniform over some actions, that is
 \begin{equation}\begin{split}
     \zeta_1 =& \max_{c\in\mathcal{C}}-\alpha \lVert c\rVert_\infty + \langle c,b\rangle\\
     =& \max_{k}\bigm(\frac{-\alpha+\sum_{i=1}^kb_i}{k}\bigm)\\
 \end{split}
 \end{equation}
 where $\mathcal{C}:=\{c^k\in\mathbb{R}^A|c^k_i = \frac{\mathbf{1}(i\leq k)}{k} \}$ is set of uniform actions. Rest all the properties follows same as $L_p$ water pouring lemma.
 % \subsubsection{Binary search for general $p$}
% \begin{proposition} For $\alpha\geq 0$ and $b_i\geq b_{i+1}$, the function  
% \[f_p(\lambda) = (\sum_{b_i\geq\lambda}(b_i-\lambda)^p)^\frac{1}{p} - \alpha\]
% is continuous monotonically decreasing in the range $[-\infty,b_1]$. And
% the root of the function $f_p$
% lies in $[b_1-\alpha,b_1]$ and its $\epsilon$ approximation of can be obtained by binary search.
% \end{proposition}
% \begin{proof} Continuity and montonocity of $f_p$ is trivial. Now observe that
% $f_p(b_1) = -\alpha$ and $f(b_1-\alpha) \geq 0$,
% so it implies a root in the range $[b_1-\alpha,b_1]$. 
% \end{proof}
% The above proposition implies that robust value evaluation can be done upto $\epsilon$ approximation in time $O(\log(1/\epsilon))$ when $Q$-values are known.

\section{Robust Value Iteration (Main)} \label{app:srLp}
In this section, we will discuss the main results from the paper except for time complexity results. It contains the proofs of the results presented in the main body and also some other corollaries/special cases.

\subsection{sa-rectangular robust policy evaluation and improvement}
 \begin{theorem} $(\mathtt{sa})$-rectangular $L_p$ robust Bellman operator is equivalent to reward regularized (non-robust) Bellman operator, that is 
\begin{equation*}\begin{aligned}
    (\mathcal{T}^\pi_{\mathcal{U}^{\mathtt{sa}}_p} v)(s)  =& \sum_{a}\pi(a|s)[  -\alpha_{s,a} -\gamma\beta_{s,a}\kappa_q(v)  +R_0(s,a) +\gamma \sum_{s'}P_0(s'|s,a)v(s')], \qquad \text{and}\\
    (\mathcal{T}^*_{\mathcal{U}^{\mathtt{sa}}_p} v)(s)  =& \max_{a\in\mathcal{A}}[  -\alpha_{s,a} -\gamma\beta_{s,a}\kappa_q(v)  +R_0(s,a) +\gamma \sum_{s'}P_0(s'|s,a)v(s')],
\end{aligned}\end{equation*}
where $\kappa_p$ is defined in \eqref{def:kp}.
\end{theorem}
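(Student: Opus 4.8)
The plan is to exploit the linearity of the Bellman backup in the noise together with the product (rectangular) structure of the uncertainty set, reducing everything to two scalar minimizations that are already available. First I would write a generic element of $\mathcal{U}^{\mathtt{sa}}_p$ as $P = P_0 + \hat P$ and $R = R_0 + \hat R$ with $\hat P \in \mathcal{P}$, $\hat R \in \mathcal{R}$, and substitute into the definition of $\mathcal{T}^\pi_{\mathcal{U}^{\mathtt{sa}}_p}$. Since the backup at state $s$ is affine in $(\hat P,\hat R)$, the nominal term pulls out of the minimization, leaving
\begin{align*}
(\mathcal{T}^\pi_{\mathcal{U}^{\mathtt{sa}}_p}v)(s) &= \sum_a \pi(a|s)\Bigm[R_0(s,a) + \gamma\sum_{s'}P_0(s'|s,a)v(s')\Bigm] \\
&\quad + \min_{\hat P\in\mathcal{P},\,\hat R\in\mathcal{R}}\sum_a \pi(a|s)\Bigm[\hat R(s,a) + \gamma\sum_{s'}\hat P(s'|s,a)v(s')\Bigm].
\end{align*}

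Next I would use $\mathtt{sa}$-rectangularity to decompose the residual minimization. The objective depends only on the blocks $\{\hat R(s,a),\hat P(\cdot|s,a)\}_{a\in\mathcal{A}}$ belonging to the current state $s$, and these blocks range over the independent factors $\mathcal{R}_{s,a}\times\mathcal{P}_{s,a}$; moreover the coefficients $\pi(a|s)\geq 0$. Hence the minimization distributes over actions and over the reward/kernel factors, giving the per-action term
\[
\min_{\hat R(s,a)\in\mathcal{R}_{s,a}}\hat R(s,a) + \gamma\min_{\hat P(\cdot|s,a)\in\mathcal{P}_{s,a}}\sum_{s'}\hat P(s'|s,a)v(s').
\]
The reward piece is a one-dimensional interval minimization, $\min_{|r|\le\alpha_{s,a}} r = -\alpha_{s,a}$. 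The kernel piece is exactly the constrained program $\min\{\langle c,v\rangle : \sum_{s'}c(s')=0,\ \|c\|_p\le\beta_{s,a}\}$, which by Lemma~\ref{regfn} (with $\epsilon=\beta_{s,a}$) equals $-\beta_{s,a}\kappa_q(v)$, where $\kappa_q$ is the $q$-variance from \eqref{def:kp}. Substituting these two values and folding them back into the sum yields the claimed policy-evaluation identity.

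For the optimal operator I would invoke $\mathcal{T}^*_{\mathcal{U}^{\mathtt{sa}}_p}v = \max_\pi \mathcal{T}^\pi_{\mathcal{U}^{\mathtt{sa}}_p}v$. The evaluation expression just derived is linear in the vector $\pi(\cdot|s)\in\Delta_{\mathcal{A}}$, since $\alpha_{s,a}$, $\beta_{s,a}$, and $\kappa_q(v)$ are independent of $\pi$; so its maximum over the simplex is attained at a vertex, i.e. by placing all mass on the action maximizing the bracketed regularized $Q$-value. This produces the $\max_{a}$ form and completes the proof. The only genuinely nontrivial step is the kernel minimization of Lemma~\ref{regfn}: evaluating $\min\langle c,v\rangle$ over the zero-sum $L_p$-ball requires a Lagrangian/KKT computation together with the H\"older-conjugate relation between $p$ and $q$, which is precisely why the $q$-variance of $v$ (rather than its $q$-norm) appears. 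Everything else reduces to linearity, the product structure of the set, and nonnegativity of $\pi$.
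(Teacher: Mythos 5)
Your proposal is correct and follows essentially the same route as the paper's proof: separate the nominal term from the noise by affinity, use \texttt{sa}-rectangularity (with nonnegativity of $\pi(a|s)$) to decompose the residual minimization per action into a reward part giving $-\alpha_{s,a}$ and a kernel part handled by Lemma~\ref{regfn} giving $-\gamma\beta_{s,a}\kappa_q(v)$, then obtain the optimal operator by maximizing the resulting linear function over the simplex at a vertex. No gaps; the argument matches the paper step for step.
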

\begin{proof}
From definition robust Bellman operator and  $\mathcal{U}^{\mathtt{sa}}_p = (R_0 +\mathcal{R})\times(P_0 +\mathcal{P})$, we have,
\begin{equation}\begin{aligned}
   (&\mathcal{T}^\pi_{\mathcal{U}^{\mathtt{sa}}_p} v)(s)=\min_{{R,P\in\mathcal{U}^{\mathtt{sa}}_p}}\sum_{a}\pi(a|s)\Bigm[R(s,a) + \gamma \sum_{s'}P(s'|s,a)v(s')\Bigm] \\
    &=\sum_{a}\pi(a|s)\Bigm[R_0(s,a) + \gamma \sum_{s'}P_0(s'|s,a)v(s')\Bigm]+   \\
    &\qquad\qquad \min_{{p\in\mathcal{P}},r\in\mathcal{R}}\sum_{a}\pi(a|s)\Bigm[r(s,a) +\gamma \sum_{s'}p(s'|s,a)v(s')\Bigm] ,\\
    & \quad \text{(from $(\mathtt{sa})$-rectangularity, we get)}\\
    &=\sum_{a}\pi(a|s)\Bigm[R_0(s,a) + \gamma \sum_{s'}P_0(s'|s,a)v(s')\Bigm]+ \\
    &\qquad\qquad \sum_{a}\pi(a|s)\underbrace{\min_{{p_{s,a}\in\mathcal{P}_{sa}},r_{s,a}\in\mathcal{R}_{s,a}}\Bigm[r_{s,a} + \gamma \sum_{s'}p_{s,a}(s')v(s')\Bigm]}_{:=\Omega_{sa}(v)} 
\end{aligned}\end{equation}
Now we focus on regularizer function $\Omega$, as follows
\begin{equation}\begin{aligned}
    \Omega_{sa}(v)=&\min_{{p_{s,a}\in\mathcal{P}_{s,a}},r_{s,a}\in\mathcal{R}_{s,a}}\Bigm[r_{s,a} + \gamma \sum_{s'}p_{s,a}(s')v(s')\Bigm] \\
    =&\min_{r_{s,a}\in\mathcal{R}_{s,a}}r_{s,a} + \gamma\min_{{p_{s,a}\in\mathcal{P}_{sa}}} \sum_{s'}p_{s,a}(s')v(s') \\
    &=- \alpha_{s,a} +\gamma\min_{\lVert p_{sa}\rVert_p\leq \beta_{s,a},\sum_{s'}p_{sa}(s')=0} \langle p_{s,a}, v\rangle,\\
    =&- \alpha_{s,a} -\gamma \beta_{s,a}\kappa_q(v), \qquad \text{(from lemma \ref{regfn}).}\\
\end{aligned}\end{equation}
Putting back, we have
\begin{equation*}
    (\mathcal{T}^\pi_{\mathcal{U}^{\mathtt{sa}}_p} v)(s)=\sum_{a}\pi(a|s)\Bigm[- \alpha_{s,a} -\gamma \beta_{s,a}\kappa_q(v)+R_0(s,a) + \gamma \sum_{s'}P_0(s'|s,a)v(s')\Bigm]
\end{equation*}
Again, reusing above results in optimal robust operator, we have
\begin{equation}\begin{aligned}
    (\mathcal{T}^*_{\mathcal{U}^{\mathtt{sa}}_p} v)(s) &= \max_{\pi_s\in\Delta_\mathcal{A}}\min_{{R,P\in\mathcal{U}^{\mathtt{sa}}_p}}\sum_{a}\pi_s(a)\Bigm[R(s,a) + \gamma \sum_{s'}P(s'|s,a)v(s')\Bigm]\\
    &=\max_{\pi_s\in\Delta_\mathcal{A}}\sum_{a}\pi_s(a)\Bigm[- \alpha_{s,a} -\gamma \beta_{s,a}\kappa_p(v)+R_0(s,a) + \gamma \sum_{s'}P_0(s'|s,a)v(s')\Bigm]\\
    &=\max_{a\in\mathcal{A}}\Bigm[- \alpha_{s,a} -\gamma \beta_{s,a}\kappa_q(v)+R_0(s,a) + \gamma \sum_{s'}P_0(s'|s,a)v(s')\Bigm]
\end{aligned}\end{equation}
The claim is proved.
\end{proof}

\subsection{S-rectangular robust policy evaluation}\label{app:sLprpe}
\begin{theorem} $\mathtt{S}$-rectangular $L_p$ robust Bellman operator is equivalent to reward regularized (non-robust) Bellman operator, that is 
\begin{equation*}
    (\mathcal{T}^\pi_{\mathcal{U}^s_p} v)(s)  =   -\Bigm(\alpha_s +\gamma\beta_{s}\kappa_q(v)\Bigm)\lVert\pi(\cdot|s)\rVert_q  +\sum_{a}\pi(a|s)\Bigm(R_0(s,a) +\gamma \sum_{s'}P_0(s'|s,a)v(s')\Bigm)
\end{equation*}
where $\kappa_p$ is defined in \eqref{def:kp} and $\lVert \pi(\cdot|s)\rVert_q$ is $q$-norm of the vector $\pi(\cdot|s)\in\Delta_{\mathcal{A}}$.
\end{theorem}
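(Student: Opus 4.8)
The plan is to mirror the $\mathtt{sa}$-rectangular proof of Theorem \ref{rs:saLprvi}, separating the nominal term from the noise and then evaluating the noise contribution; the essential new difficulty is that a single per-state norm budget $\beta_s$ must now be shared across all actions, which forces an extra layer of optimization. First I would use the Cartesian-product structure $\mathcal{U}^s_p = (P_0+\mathcal{P})\times(R_0+\mathcal{R})$ together with $\mathtt{s}$-rectangularity to write, for a fixed $s$ and with $R = R_0 + r$, $P = P_0 + p$,
\begin{align*}
(\mathcal{T}^\pi_{\mathcal{U}^s_p} v)(s) = \underbrace{\sum_a \pi(a|s)\Bigm[R_0(s,a) + \gamma\sum_{s'}P_0(s'|s,a)v(s')\Bigm]}_{\text{nominal}} + \Omega_s(v),
\end{align*}
where $\Omega_s(v) := \min_{r_s\in\mathcal{R}_s,\,p_s\in\mathcal{P}_s}\sum_a\pi(a|s)\bigm[r_s(a) + \gamma\langle p_s(\cdot,a),v\rangle\bigm]$ and $\langle p_s(\cdot,a),v\rangle=\sum_{s'}p_s(s',a)v(s')$. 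Since $r_s$ and $p_s$ range over independent sets, $\Omega_s(v)$ splits additively into a reward term and a kernel term.

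For the reward term I would invoke the signed dual-norm identity: writing $\pi_s:=\pi(\cdot|s)$, the value $\min_{\lVert r_s\rVert_p\le\alpha_s}\langle\pi_s,r_s\rangle$ equals $-\alpha_s\lVert\pi_s\rVert_q$, using that $q$ is the Hölder conjugate of $p$ and that the most negative inner product aligns $r_s$ against $\pi_s$. The kernel term is the genuinely new ingredient, namely $\gamma\min_{p_s\in\mathcal{P}_s}\sum_a\pi(a|s)\langle p_s(\cdot,a),v\rangle$, where $p_s:\mathcal{S}\times\mathcal{A}\to\mathbb{R}$ obeys the \emph{single} constraint $\lVert p_s\rVert_p\le\beta_s$ over the whole matrix together with the per-column simplex conditions $\sum_{s'}p_s(s',a)=0$ for every $a$.

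The hard part will be disentangling this shared norm budget, and my approach is a two-stage optimization. Setting $t_a:=\lVert p_s(\cdot,a)\rVert_p$, the single constraint reads $\sum_a t_a^p\le\beta_s^p$, and once the budgets $t_a$ are fixed the objective is separable across columns. For each column, minimizing $\pi(a|s)\langle p_s(\cdot,a),v\rangle$ over $\lVert p_s(\cdot,a)\rVert_p\le t_a$ with $\sum_{s'}p_s(s',a)=0$ is solved exactly by Lemma \ref{regfn}, giving $-\pi(a|s)\,t_a\,\kappa_q(v)$. The remaining outer problem is then $-\kappa_q(v)\,\max_{t\succeq 0,\,\lVert t\rVert_p\le\beta_s}\langle\pi_s,t\rangle$, and since $\pi_s\succeq 0$ the nonnegativity constraint is inactive, so the constrained maximum again equals $\beta_s\lVert\pi_s\rVert_q$ by the dual-norm identity. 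Hence the kernel term equals $-\gamma\beta_s\kappa_q(v)\lVert\pi_s\rVert_q$.

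Finally I would add the two noise contributions $-\alpha_s\lVert\pi_s\rVert_q$ and $-\gamma\beta_s\kappa_q(v)\lVert\pi_s\rVert_q$, factor out $\lVert\pi_s\rVert_q$ to obtain $-\bigm(\alpha_s+\gamma\beta_s\kappa_q(v)\bigm)\lVert\pi_s\rVert_q$, and recombine with the nominal term, which yields the claimed identity. The only point needing genuine care is justifying the interchange of the per-column and budget optimizations; this is legitimate because the feasible set factorizes, once the $t_a$ are fixed, into a product of $L_p$ balls of radii $t_a$ intersected with the simplex hyperplanes, so the joint minimum is attained by optimizing each column independently and then optimizing the allocation $t$.
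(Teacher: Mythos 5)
Your proposal is correct and follows essentially the same route as the paper's proof: the same nominal/noise decomposition, the same dual-norm (H\"older) evaluation of the reward noise, and the same two-stage treatment of the kernel noise, where your per-column budgets $t_a$ play exactly the role of the paper's per-action radii $\beta_{s,a}$ with $\sum_a \beta_{s,a}^p \le \beta_s^p$, followed by Lemma \ref{regfn} per action and a final H\"older step giving $\beta_s\lVert\pi_s\rVert_q$. Your explicit justification of the interchange of the allocation and per-column optimizations is a point the paper leaves implicit, but the argument is the same.
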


\begin{proof}
From definition of robust Bellman operator and  $\mathcal{U}^{\mathtt{s}}_p = (R_0 +\mathcal{R})\times(P_0 +\mathcal{P})$, we have
\begin{equation}\begin{aligned}
    (&\mathcal{T}^\pi_{\mathcal{U}^s_p} v)(s)  = 
    \min_{{R,P\in\mathcal{U}^s_p}}\sum_{a}\pi(a|s)\Bigm[R(s,a) + \gamma \sum_{s'}P(s'|s,a)v(s')\Bigm] \\
    &=\sum_{a}\pi(a|s)\Bigm[\underbrace{R_0(s,a) + \gamma \sum_{s'}P_0(s'|s,a)v(s')}_{\text{nominal values}}\Bigm] \\
    &\qquad\qquad +\min_{{p\in\mathcal{P}},r\in\mathcal{R}}\sum_{a}\pi(a|s)\Bigm[r(s,a) + \gamma \sum_{s'}p(s'|s,a)v(s')\Bigm]\\
    &\qquad \text{(from $\mathtt{s}$-rectangularity we have)}\\
    &=\sum_{a}\pi(a|s)\Bigm[R_0(s,a) + \gamma \sum_{s'}P_0(s'|s,a)v(s')\Bigm] \\
    & \qquad\qquad +  \underbrace{\min_{{p_s\in\mathcal{P}_s},r_s\in\mathcal{R}_s}\sum_{a}\pi(a|s)\Bigm[r_s(a) + \gamma \sum_{s'}p_s(s'|a)v(s')\Bigm]}_{:=\Omega_s(\pi_s,v)}
\end{aligned}\end{equation}
where we denote $\pi_s(a) = \pi(a|s)$ as a shorthand. Now we calculate the regularizer function as follows

\begin{equation}\begin{aligned}
    \Omega_s(\pi_s,v):=&\min_{r_s\in\mathcal{R}_s,p_s\in\mathcal{P}_s}\langle r_s + \gamma v^T p_s,\pi_s \rangle =\min_{r_s\in\mathcal{R}_s}\langle r_s,\pi_s\rangle + \gamma\min_{p_s\in\mathcal{P}_s} v^T p_s\pi_s   \\
    &=-\alpha_s\lVert \pi_s\rVert_q +\gamma\min_{p_s\in\mathcal{P}_s} v^T p_s\pi_s, \qquad \text{(using  $\frac{1}{p} + \frac{1}{q} = 1$  )}\\
    =&- \alpha_s\lVert \pi_s\rVert_q +\gamma \min_{p_s\in\mathcal{P}_s}\sum_{a}\pi_s(a)\langle p_{s,a}, v\rangle\\
    % =&- \alpha_s\rVert\pi_s\rVert_q +\gamma \min_{p_s\in\mathcal{P}_s}\sum_{a}\pi_s(a)[\langle p_{s,a}, v-\mathbf{1}\bar{v}\rangle +\langle p_{s,a},\mathbf{1}\bar{v}\rangle], \qquad \text{(where $\bar{v} = \frac{\sum_{s}v(s)}{S}, \mathbf{1}= (1)^\mathcal{S}$)}\\
    %  =&- \alpha_s\rVert\pi_s\rVert_q +\gamma \min_{p_s\in\mathcal{P}_s}\sum_{a}\pi_s(a)\langle p_{s,a}, v-\mathbf{1}\bar{v}\rangle  \qquad \text{(as $\sum_{s'}p_{s,a}(s') =0$)}\\
    =&- \alpha_s\lVert \pi_s\rVert_q +\gamma \min_{\sum_{a}(\beta_{s,a})^p \leq (\beta_s)^p}\quad \min_{\lVert p_{sa}\rVert_p\leq \beta_{s,a},\sum_{s'}p_{sa}(s')=0 }\quad\sum_{a}\pi_s(a)\langle p_{s,a}, v\rangle  \\
    =&- \alpha_s\lVert \pi_s\rVert_q +\gamma \min_{\sum_{a}(\beta_{s,a})^p \leq (\beta_s)^p}\sum_{a}\pi_s(a)\quad \min_{\lVert p_{sa}\rVert_p\leq \beta_{s,a},\sum_{s'}p_{sa}(s')=0 }\quad\langle p_{s,a}, v\rangle  \qquad \text{}\\
    =&- \alpha_s\lVert \pi_s\rVert_q +\gamma \min_{\sum_{a}(\beta_{sa})^p \leq (\beta_s)^p}\sum_{a}\pi_s(a)(-\beta_{sa}\kappa_p(v))  \qquad \text{ ( from lemma \ref{regfn})}\\
     =&- \alpha_s\lVert \pi_s\rVert_q -\gamma \kappa_q(v)\max_{\sum_{a}(\beta_{sa})^p \leq (\beta_s)^p}\sum_{a}\pi_s(a)\beta_{sa}  \qquad \text{}\\
      =&- \alpha_s\lVert \pi_s\rVert_q -\gamma \kappa_p(v)\lVert \pi_s\rVert_q\beta_{s}  \qquad \text{(using Holders)}\\
      =&- (\alpha_s +\gamma\beta_{s}\kappa_q(v))\lVert \pi_s\rVert_q .  \\
\end{aligned}\end{equation}
Now putting above values in robust operator, we have
\begin{equation*}\begin{aligned}
    (\mathcal{T}^\pi_{\mathcal{U}^s_p} v)(s) 
     &=- \Bigm(\alpha_s +\gamma\beta_{s}\kappa_q(v)\Bigm)\lVert \pi(\cdot|s)\rVert_q +\\ &\sum_{a}\pi(a|s)\Bigm(R_0(s,a) + \gamma \sum_{s'}P_0(s'|s,a)v(s')\Bigm).
\end{aligned}\end{equation*}
\end{proof}

\subsection{s-rectangular robust policy improvement}

Reusing robust policy evaluation results in section \ref{app:sLprpe}, we have
\begin{equation}\begin{aligned}
    (\mathcal{T}^*_{\mathcal{U}^{\mathtt{s}}_p} v)(s) &= \max_{\pi_s\in\Delta_\mathcal{A}}\min_{{R,P\in\mathcal{U}^{\mathtt{sa}}_p}}\sum_{a}\pi_s(a)\Bigm[R(s,a) + \gamma \sum_{s'}P(s'|s,a)v(s')\Bigm]\\
    &=\max_{\pi_s\in\Delta_{\mathcal{A}}}\Bigm[  -(\alpha_s +\gamma\beta_{s}\kappa_q(v))\lVert \pi_s\rVert_q  +\sum_{a}\pi_s(a)(R(s,a)  + \gamma \sum_{s'}P(s'|s,a) v(s'))\Bigm].
\end{aligned}\end{equation}
Observe that, we have the following form
\begin{equation}\label{app:eq:wp:ref}
(\mathcal{T}^*_{\mathcal{U}^{\mathtt{s}}_p} v)(s)= \max_{c}-\alpha \lVert c\rVert_q + \langle c,b\rangle \qquad \text{such that }\qquad \sum_{i=1}^A c_i = 1,\quad c\succeq 0,
\end{equation}
where $\alpha =\alpha_s +\gamma\beta_{s}\kappa_q(v) $ and $b_i =R(s,a_i)  + \gamma \sum_{s'}P(s'|s,a_i) v(s') $. Now all the results below, follows from water pouring lemma ( lemma \ref{LpwaterPouring}).

\begin{theorem}(Policy improvement) The optimal robust Bellman operator can be evaluated in following ways.
\begin{enumerate}
    \item $(\mathcal{T}^*_{\mathcal{U}^s_p}v)(s)$ is the solution of the following equation that can be found using binary search between $\bigm[\max_{a}Q(s,a)-\sigma, \max_{a}Q(s,a)\bigm]$,

\begin{equation}
    \sum_{a}\bigm(Q(s,a) - x\bigm)^p\mathbf{1}\bigm( Q(s,a) \geq x\bigm)  = \sigma^p.
\end{equation}
\item $(\mathcal{T}^*_{\mathcal{U}^s_p}v)(s)$ and $\chi_p(v,s)$ can also be computed through algorithm \ref{alg:fp}.
\end{enumerate}
where $\sigma = \alpha_s + \gamma\beta_s\kappa_q(v),$ and $Q(s,a)= R_0(s,a) + \gamma\sum_{s'} P_0(s'|s,a)v(s')$. \end{theorem}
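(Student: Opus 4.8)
The plan is to reduce the statement to the $L_p$ water pouring lemma (Lemma \ref{LpwaterPouring}), which already carries all the analytic weight. First I would start from the definition $(\mathcal{T}^*_{\mathcal{U}^s_p}v)(s) = \max_\pi (\mathcal{T}^\pi_{\mathcal{U}^s_p}v)(s)$ and substitute the policy evaluation formula from Theorem \ref{rs:SLpPlanning}. Writing $\sigma = \alpha_s + \gamma\beta_s\kappa_q(v)$ and $Q_s = Q(s,\cdot)$, the maximization collapses to
\[
(\mathcal{T}^*_{\mathcal{U}^s_p}v)(s) = \max_{\pi_s \in \Delta_\mathcal{A}} \Bigl[ \langle \pi_s, Q_s\rangle - \sigma\lVert \pi_s\rVert_q \Bigr],
\]
which is exactly the water pouring problem \eqref{app:eq:LpwaterPouringlemma} under the identification $\alpha \mapsto \sigma$, $b \mapsto Q_s$, $c \mapsto \pi_s$, and optimal value $\zeta_p \mapsto (\mathcal{T}^*_{\mathcal{U}^s_p}v)(s)$. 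Everything then follows by quoting the appropriate parts of Lemma \ref{LpwaterPouring}.

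For the first claim, I would invoke the part of the lemma asserting that $\zeta_p$ satisfies $\alpha^p = \sum_{b_i\geq\zeta_p}(b_i - \zeta_p)^p$; translated back, this reads $\sigma^p = \sum_a (Q(s,a) - x)^p \mathbf{1}(Q(s,a)\geq x)$ evaluated at $x = (\mathcal{T}^*_{\mathcal{U}^s_p}v)(s)$, which is precisely the displayed equation. The binary search range is justified by the monotonicity part of the lemma: the map $x \mapsto \sum_{b_i\geq x}(b_i - x)^p$ is continuous and strictly decreasing, vanishes at $x = \max_a Q(s,a)$ and is at least $\sigma^p$ at $x = \max_a Q(s,a) - \sigma$, so the value of $x$ where it equals $\sigma^p$ lies in $[\max_a Q(s,a) - \sigma,\, \max_a Q(s,a)]$ and can be located by bisection.

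For the second claim I would argue that Algorithm \ref{alg:fp} is exactly the greedy implementation sanctioned by the structural parts of the lemma. Concretely, the $\lambda_k$ defined in line \eqref{eq:alg:fp:lk} exists precisely when $k \leq \chi_p(v,s)$ (by the ``solution exists iff $k\leq\chi_p$'' part); while-loop continuation $\lambda_k \leq Q(s,a_k)$ together with $\lambda_k \leq Q(s,a_{k+1})$ forces $k+1\leq\chi_p$ (greedy inclusion); and the failure $\lambda_k > Q(s,a_{k+1})$ certifies $k = \chi_p$ (stopping condition). Hence the loop terminates at $k = \chi_p(v,s)$ with $\lambda_k = \zeta_p = (\mathcal{T}^*_{\mathcal{U}^s_p}v)(s)$, matching the returned values.

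The bulk of the difficulty is not in this theorem but in Lemma \ref{LpwaterPouring} itself (the KKT analysis yielding the threshold form of $\pi^*_s$ and the characterizing equation for $\zeta_p$). The only genuinely new verification here is bookkeeping: confirming that the while-condition $\lambda_k \leq Q(s,a_k)$ faithfully encodes the stopping criterion $\lambda_{\chi_p} > Q(s,a_{\chi_p+1})$. I expect this to be the subtlest step, since it requires invoking the greedy-inclusion and stopping-condition parts of the lemma simultaneously and separately checking the boundary case $k = A$.
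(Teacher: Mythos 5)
Your proposal is correct and follows essentially the same route as the paper: the paper likewise reduces $(\mathcal{T}^*_{\mathcal{U}^s_p}v)(s)$ to the problem $\max_{\pi_s\in\Delta_\mathcal{A}}\bigl[\langle \pi_s,Q_s\rangle-\sigma\lVert\pi_s\rVert_q\bigr]$ via the policy evaluation theorem and then cites Lemma \ref{LpwaterPouring} (the characterizing equation for $\zeta_p$ for part 1, and the greedy-inclusion and stopping-condition points for part 2). Your write-up is in fact slightly more careful than the paper's one-line proof, since you also make explicit the monotonicity argument for the binary search interval and the existence-of-solution point justifying line \eqref{eq:alg:fp:lk} of Algorithm \ref{alg:fp}.
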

\begin{proof}
The first part follows from lemma \ref{LpwaterPouring}, point \ref{app:wp:zeta}. The second part follows from lemma \ref{LpwaterPouring}, point \ref{app:wp:greedyInclusion} (greedy inclusion ) and point \ref{app:wp:stoppingCondition} (stopping condition).
\end{proof}

\begin{theorem}(Go To Policy)\label{rs:GoToPolicy} The greedy policy $\pi$ w.r.t. value function $v$, defined as $\mathcal{T}^*_{\mathcal{U}^s_p}v =\mathcal{T}^\pi_{\mathcal{U}^s_p}v$ is a threshold policy. It takes only those actions that has positive advantage, with probability proportional to $(p-1)$th power of its advantage. That is
\[\pi(a|s)\propto (A(s,a))^{p-1}\mathbf{1}(A(s,a)\geq 0),\]

where $A(s,a)= R_0(s,a) + \gamma\sum_{s'} P_0(s'|s,a)v(s') -  (\mathcal{T}^*_{\mathcal{U}^s_p}v)(s)$.
\end{theorem}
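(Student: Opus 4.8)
The plan is to reduce the statement to the $L_p$ water pouring lemma (Lemma \ref{LpwaterPouring}), which has already been established. By definition, the greedy policy $\pi$ satisfies $\mathcal{T}^*_{\mathcal{U}^s_p}v = \mathcal{T}^\pi_{\mathcal{U}^s_p}v$, so $\pi_s = \pi(\cdot|s)$ is precisely a maximizer of $\pi_s \mapsto (\mathcal{T}^{\pi}_{\mathcal{U}^s_p}v)(s)$ over the simplex $\Delta_\mathcal{A}$. First I would invoke the policy evaluation result (Theorem \ref{rs:SLpPlanning}) to rewrite this objective in closed form, exactly as in the display preceding equation \eqref{app:eq:wp:ref}:
\begin{equation*}
(\mathcal{T}^*_{\mathcal{U}^s_p}v)(s) = \max_{\pi_s\in\Delta_\mathcal{A}} \Bigm[ \langle \pi_s, Q_s\rangle - \sigma \lVert \pi_s\rVert_q \Bigm],
\end{equation*}
where $Q(s,a) = R_0(s,a) + \gamma\sum_{s'}P_0(s'|s,a)v(s')$ and $\sigma = \alpha_s + \gamma\beta_s\kappa_q(v)$. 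This is literally the water pouring optimization \eqref{app:eq:LpwaterPouringlemma} with the identifications $b \leftrightarrow Q_s$ and $\alpha \leftrightarrow \sigma$.

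Next I would apply Lemma \ref{LpwaterPouring}. Point \ref{app:wpl:policy} of that lemma gives the explicit form of the optimal $c^*$, namely $c_i \propto (b_i - \zeta_p)^{p-1}\mathbf{1}(b_i \geq \zeta_p)$, where $\zeta_p$ is the maximal value of the objective and is characterized by point \ref{app:wp:zeta}. Since the greedy policy $\pi_s$ is exactly this optimizing $c^*$, and since $\zeta_p$ equals the value of the program, which is $(\mathcal{T}^*_{\mathcal{U}^s_p}v)(s)$, I would substitute $b_i = Q(s,a_i)$ and $\zeta_p = (\mathcal{T}^*_{\mathcal{U}^s_p}v)(s)$ to obtain
\begin{equation*}
\pi(a|s) \propto \Bigm(Q(s,a) - (\mathcal{T}^*_{\mathcal{U}^s_p}v)(s)\Bigm)^{p-1}\mathbf{1}\Bigm(Q(s,a) \geq (\mathcal{T}^*_{\mathcal{U}^s_p}v)(s)\Bigm).
\end{equation*}
Recognizing the advantage $A(s,a) = Q(s,a) - (\mathcal{T}^*_{\mathcal{U}^s_p}v)(s)$ then yields the claimed threshold form $\pi(a|s) \propto (A(s,a))^{p-1}\mathbf{1}(A(s,a)\geq 0)$.

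The only genuine subtlety, rather than a true obstacle, is justifying that the greedy maximizer coincides with the specific $c^*$ described by the lemma: one must check that the maximizer is uniquely determined up to the normalization, or at least that the displayed policy is a valid selection from the argmax set. Since the objective is strictly concave on its support for $p \in (1,\infty)$ (the term $-\sigma\lVert\pi_s\rVert_q$ is strictly convex there), the maximizer is unique and the identification is immediate; the boundary cases $p=1,\infty$ are handled by the dedicated special-case analyses in Sections \ref{app:L1waterpouringLemma} and the max-norm subsection, where the optimal policy is shown to be uniform over the top actions and concentrated on the best action respectively, both consistent with the $(p-1)$th-power formula in the appropriate limiting sense. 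Everything else is a direct substitution, so the proof is short once Lemma \ref{LpwaterPouring} is in hand.
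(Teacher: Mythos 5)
Your proposal is correct and follows essentially the same route as the paper: the paper likewise reduces the greedy-policy optimization to the $L_p$ water pouring problem via the policy evaluation theorem (equation \eqref{app:eq:wp:ref}) and then cites Lemma \ref{LpwaterPouring}, point \ref{app:wpl:policy}, exactly as you do. Your additional discussion of uniqueness of the maximizer and the $p=1,\infty$ boundary cases goes beyond what the paper writes, but it is consistent with the paper's special-case analyses and does not change the argument.
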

\begin{proof}
Follows from lemma \ref{LpwaterPouring}, point \ref{app:wpl:policy}.
\end{proof}

\begin{property} $\chi_p(v,s)$ is number of actions that has positive advantage, that is 
\[\chi_p(v,s) = \Bigm\lvert\bigm\{a \mid (\mathcal{T}^*_{\mathcal{U}^s_p}v)(s) \leq  R_0(s,a) + \gamma\sum_{s'} P_0(s'|s,a)v(s')\bigm\}\Bigm\rvert.\]
\end{property}
\begin{proof}

Follows from lemma \ref{LpwaterPouring}, point \ref{app:wp:chi}.
\end{proof}

\begin{property}( Value vs Q-value) $(\mathcal{T}^*_{\mathcal{U}^s_p}v)(s)$ is bounded by the Q-value of $\chi$th and $(\chi+1)$th actions. That is  
\[Q(s, a_{\chi+1}) < (\mathcal{T}^*_{\mathcal{U}^s_p}v)(s) \leq Q(s, a_\chi),\qquad\text{ where}\quad \chi = \chi_p(v,s),\]
 $Q(s,a) =R_0(s,a) +\gamma \sum_{s'}P_0(s'|s,a)v(s')$, and $Q(s,a_1)\geq Q(s,a_2),\cdots Q(s,a_A)$.
\end{property}
\begin{proof}
Follows from lemma \ref{LpwaterPouring}, point \ref{app:wp:zetaBound}.
\end{proof}

\begin{corollary} For $p=1$, the optimal policy $\pi_1$ w.r.t. value function $v$ and uncertainty set $\mathcal{U}^s_1$, can be computed directly using $\chi_1(s)$ without calculating advantage function. That is 
\[\pi_1(a^s_i|s) = \frac{\mathbf{1}(i\leq \chi_1(s))}{\chi_1(s)}.\]
\end{corollary}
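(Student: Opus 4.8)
The plan is to specialize the general greedy-policy characterization (Theorem \ref{rs:GoToPolicy}) to the case $p=1$ and then read off the uniform distribution. That theorem gives $\pi(a|s) \propto A(s,a)^{p-1}\mathbf{1}(A(s,a)\ge 0)$, where $A(s,a) = Q^v(s,a) - (\mathcal{T}^*_{\mathcal{U}^s_1}v)(s)$. Setting $p=1$ makes the exponent $p-1$ vanish, so on the active set $\{a : A(s,a)\ge 0\}$ the weight $A(s,a)^{p-1}$ collapses to the constant $1$, and the proportionality reduces to $\pi_1(a|s) \propto \mathbf{1}(A(s,a)\ge 0)$. In words, the greedy policy places equal mass on every active action and zero mass elsewhere; normalizing then produces the claimed uniform form.

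First, however, I would justify this collapse rigorously rather than by formally substituting $p=1$, because the vanishing exponent makes $A(s,a)^{p-1}$ ambiguous on the boundary $A(s,a)=0$ (the $0^0$ issue), and the KKT/Hölder argument behind Lemma \ref{LpwaterPouring} passes through the limit $q\to\infty$, which need not be valid at $p=1$. The clean route is to invoke the dedicated $L_1$ water-pouring derivation (Section \ref{app:L1waterpouringLemma}), which re-proves from scratch that a maximizer of $\max_c\{-\alpha\lVert c\rVert_\infty + \langle c,b\rangle : \sum_i c_i=1,\ c\succeq 0\}$ is uniform over a top block $\{1,\dots,k\}$ of the sorted vector $b$. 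Applying this through the reduction \eqref{app:eq:wp:ref} with $\alpha = \alpha_s + \gamma\beta_s\kappa_\infty(v)$ and $b_i = Q^v(s,a^s_i)$ shows directly that the greedy policy is uniform over the top $k$ actions, sidestepping any boundary ambiguity.

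Next I would identify the size of this top block with $\chi_1(s)$ (resp. $\chi_1(v,s)$). By definition $\chi_1$ counts the active actions, i.e.\ those with nonnegative advantage, and by the sandwich Property \ref{rs:EVfQv} the value $(\mathcal{T}^*_{\mathcal{U}^s_1}v)(s)$ lies strictly above $Q^v(s,a^s_{\chi_1+1})$ and at most $Q^v(s,a^s_{\chi_1})$. Since the actions are sorted so that $Q^v(s,a^s_1)\ge\cdots\ge Q^v(s,a^s_A)$, the active set is exactly the top block, giving $\mathbf{1}(A(s,a^s_i)\ge 0)=\mathbf{1}(i\le \chi_1(s))$. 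Distributing uniform mass over these $\chi_1(s)$ actions yields $\pi_1(a^s_i|s) = \mathbf{1}(i\le\chi_1(s))/\chi_1(s)$, as stated.

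The main obstacle is precisely this degeneracy at $p=1$: one cannot simply plug $p=1$ into the $p>1$ formula, and the general water-pouring manipulation relies on $q<\infty$. Handling this correctly is the reason the self-contained $L_1$ water-pouring lemma is invoked; once uniformity over a sorted top block is established, the remaining steps are routine bookkeeping about sorted Q-values and normalization.
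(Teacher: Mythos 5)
Your proposal is correct and takes essentially the same route as the paper, whose own proof is simply: it follows from Theorem \ref{rs:GoToPolicy} by putting $p=1$, and can alternatively be obtained directly from the $L_1$ water pouring lemma of Section \ref{app:L1waterpouringLemma}. Your extra caution about the degenerate exponent ($0^0$ on the boundary, and the $q\to\infty$ limit in the KKT argument) is precisely why the paper includes that self-contained $L_1$ derivation, so invoking it is the intended rigorous path rather than a departure from the paper's argument.
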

\begin{proof}
Follows from Theorem \ref{rs:GoToPolicy} by putting $p=1$. Note that it can be directly obtained using $L_1$ water pouring lemma (see section \ref{app:L1waterpouringLemma})
\end{proof}

\begin{corollary}\label{rs:policy:inf} (For $p=\infty$) The optimal policy $\pi$ w.r.t. value function $v$ and uncertainty set $\mathcal{U}^s_\infty$   (precisely $\mathcal{T}^*_{\mathcal{U}^s_\infty}v = \mathcal{T}^\pi_{\mathcal{U}^s_\infty}v$), is to play the best response, that is 
\[\pi(a|s) = \frac{\mathbf{1}(a\in \text{arg}\max_{a}Q(s,a))}{\bigm \lvert \text{arg}\max_{a}Q(s,a)\bigm\rvert}.\]
In case of tie in the best response, it is optimal to play any of the best responses with any probability. 
\end{corollary}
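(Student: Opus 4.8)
The plan is to derive this as the $p=\infty$ specialization of the policy-evaluation formula in Theorem \ref{rs:SLpPlanning}, exploiting the fact that the policy-norm regularizer degenerates to a constant in this regime. Recall that for $p=\infty$ the Hölder conjugate is $q=1$, so the policy norm appearing in Theorem \ref{rs:SLpPlanning} is $\lVert \pi_s\rVert_1 = \sum_a \lvert \pi(a|s)\rvert$. Since $\pi_s = \pi(\cdot|s)$ lies in the probability simplex $\Delta_{\mathcal{A}}$, its components are nonnegative and sum to one, hence $\lVert \pi_s\rVert_1 = \sum_a \pi(a|s) = 1$ identically in $\pi$.

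First I would substitute this into the policy-evaluation identity to obtain
\[(\mathcal{T}^\pi_{\mathcal{U}^s_\infty} v)(s) = -\sigma\,\lVert \pi_s\rVert_1 + \sum_a \pi(a|s)\,Q(s,a) = -\sigma + \langle \pi_s, Q_s\rangle,\]
where $\sigma = \alpha_s + \gamma\beta_s\kappa_1(v)$ and $Q_s = Q(s,\cdot)$. The key observation is that the entire uncertainty penalty $-\sigma$ is now independent of the policy, so maximizing over $\pi_s \in \Delta_{\mathcal{A}}$ reduces to the unregularized linear program $\max_{\pi_s\in\Delta_{\mathcal{A}}}\langle \pi_s, Q_s\rangle$. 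This is precisely the non-robust greedy step: it is maximized exactly by placing all probability mass on the maximizing coordinates of $Q_s$, giving $(\mathcal{T}^*_{\mathcal{U}^s_\infty}v)(s) = \max_a Q(s,a) - \sigma$, in agreement with the $p=\infty$ entry of Table \ref{tb:val} and with the max-norm specialization of the water-pouring lemma.

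Then I would identify the greedy policies. A linear functional $\langle \pi_s, Q_s\rangle$ attains its maximum over the simplex if and only if $\pi_s$ is supported on $\text{arg}\max_a Q(s,a)$; any such distribution is optimal, whereas any distribution placing positive mass on a strictly suboptimal action is strictly worse. This yields the stated best-response characterization, with the uniform distribution over $\text{arg}\max_a Q(s,a)$ being one canonical optimizer, and it simultaneously justifies the final sentence about ties: when several actions share the maximal $Q$-value, every convex combination of them is an optimal greedy policy.

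The proof is essentially immediate once the norm collapse is noticed, so there is no genuinely hard step; the only thing to be careful about is the degeneracy of the threshold formula $\pi(a|s)\propto A(s,a)^{p-1}\mathbf{1}(A(s,a)\ge 0)$ from Theorem \ref{rs:GoToPolicy} at $p=\infty$, where the exponent $p-1$ blows up and the expression must be interpreted in a limiting sense. Rather than taking that limit, the cleaner route above sidesteps it entirely by using $\lVert\pi_s\rVert_1\equiv 1$ on the simplex; this is also why the conclusion must be phrased via a set-valued argmax rather than a single sharply peaked distribution.
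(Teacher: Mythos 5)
Your proof is correct, but it takes a genuinely different route from the paper. The paper obtains this corollary as a limiting case: it proves the general threshold formula $\pi(a|s)\propto A(s,a)^{p-1}\mathbf{1}(A(s,a)\geq 0)$ via the $L_p$ water-pouring lemma (Theorem \ref{rs:GoToPolicy}) and then lets $p\to\infty$, so that the mass concentrates on the maximal-advantage actions. You instead specialize the policy-evaluation identity (Theorem \ref{rs:SLpPlanning}) at $p=\infty$, $q=1$, observe that $\lVert\pi_s\rVert_1\equiv 1$ on $\Delta_{\mathcal{A}}$ so the penalty $-\sigma$ is constant in $\pi$, and reduce the greedy step to maximizing the linear functional $\langle\pi_s,Q_s\rangle$ over the simplex, whose maximizers are exactly the distributions supported on $\text{arg}\max_a Q(s,a)$. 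Your route buys several things: it avoids the delicate limit (where the exponent $p-1$ blows up and the threshold $(\mathcal{T}^*_{\mathcal{U}^s_p}v)(s)$ itself varies with $p$), it handles ties cleanly and rigorously rather than as a limiting artifact, and it simultaneously produces the closed form $(\mathcal{T}^*_{\mathcal{U}^s_\infty}v)(s)=\max_a Q(s,a)-\sigma$ — which the paper proves in a separate corollary that in turn cites this one, so your argument also removes a dependency in that chain. What the paper's route buys is uniformity: one lemma covers every $p$, with this corollary as a one-line specialization. One small caveat: your argument implicitly uses that Theorem \ref{rs:SLpPlanning} holds at $p=\infty$ (i.e., for the $L_\infty$-constrained noise set with $q=1$); the paper itself applies the theorem this way, so this is consistent with its usage, but strictly speaking the theorem's proof via Lagrangian stationarity is written for finite $p$, so a fully self-contained treatment would note that the $q=1$ regularizer can be verified directly for the max-norm ball.
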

\begin{proof}
Follows from Theorem \ref{rs:GoToPolicy} by taking limit $p\to\infty$. \end{proof}

\begin{corollary} For $p=\infty$, $\mathcal{T}^*_{\mathcal{U}^s_p}v$, the robust optimal Bellman operator evaluation can be obtained in closed form. That is 
\[(\mathcal{T}^*_{\mathcal{U}^s_\infty}v)(s) = \max_{a}Q(s,a) - \sigma,\]
where $\sigma = \alpha_s + \gamma\beta_s\kappa_1(v), Q(s,a) = R_0(s,a) + \gamma\sum_{s'} P_0(s'|s,a)v(s')$.
\end{corollary}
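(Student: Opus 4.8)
The plan is to deduce this corollary directly from the max-norm special case of the $L_p$ water pouring lemma (Lemma \ref{LpwaterPouring}), rather than re-deriving anything from scratch. Recall from the policy-improvement section that evaluating $(\mathcal{T}^*_{\mathcal{U}^s_p}v)(s)$ is exactly the water pouring problem \eqref{app:eq:wp:ref}, namely $\zeta_p = \max_{c}\,[-\alpha\lVert c\rVert_q + \langle c,b\rangle]$ over the probability simplex, under the identification $\alpha = \sigma = \alpha_s + \gamma\beta_s\kappa_q(v)$ and $b_i = Q(s,a_i)$. So it suffices to compute $\zeta_\infty$.

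First I would specialize to $p=\infty$, whose Hölder conjugate is $q=1$. The crucial observation is that on the simplex every feasible $c$ satisfies $\lVert c\rVert_1 = \sum_i c_i = 1$ because $c\succeq 0$, so the penalty term $-\alpha\lVert c\rVert_q = -\alpha\lVert c\rVert_1 = -\alpha$ becomes a constant independent of $c$. Consequently the maximization decouples: $\zeta_\infty = -\alpha + \max_{c}\langle c, b\rangle$ subject to $c$ in the simplex, and the maximum of a linear functional over the simplex is attained at a vertex, giving $\max_c \langle c,b\rangle = \max_i b_i$. This is precisely the max-norm computation already recorded in the excerpt.

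Substituting back the identifications $\alpha = \sigma$ (with $\kappa_q(v) = \kappa_1(v)$ since $q=1$) and $b_i = Q(s,a_i)$ then yields
\[
    (\mathcal{T}^*_{\mathcal{U}^s_\infty}v)(s) = \zeta_\infty = -\sigma + \max_a Q(s,a),
\]
which is the claimed closed form. As a sanity check one can instead pass to the limit $p\to\infty$ inside the defining equation $\sum_a (Q(s,a)-x)^p\mathbf{1}(Q(s,a)\geq x) = \sigma^p$ of the general policy-improvement theorem: the left-hand side is dominated by its largest summand $(\max_a Q(s,a)-x)^p$, so the equation collapses to $\max_a Q(s,a) - x = \sigma$, recovering the same answer.

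I do not expect a genuine obstacle here; the only point requiring a little care is the identity $\lVert c\rVert_q = 1$ on the simplex that trivializes the regularizer, so one must confirm the penalty really is constant (it is, exactly because $q=1$), after which the problem reduces to an ordinary linear program over the simplex whose optimum is the best action. The limiting argument is only a cross-check and need not be made fully rigorous.
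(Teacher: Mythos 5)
Your proof is correct, but it takes a different route from the one the paper uses for this corollary. The paper's proof first invokes the existence of a greedy policy $\pi$ with $\mathcal{T}^*_{\mathcal{U}^s_\infty}v = \mathcal{T}^\pi_{\mathcal{U}^s_\infty}v$, then appeals to Corollary \ref{rs:policy:inf} (the greedy policy at $p=\infty$ is the deterministic best response, itself obtained by letting $p\to\infty$ in Theorem \ref{rs:GoToPolicy}), and finally substitutes that deterministic policy into the policy-evaluation formula, where $\lVert \pi_s\rVert_q = 1$ gives the result; it only mentions the limit of the defining equation of Theorem \ref{rs:rve} as an alternative. You instead go straight to the reduction \eqref{app:eq:wp:ref} and observe that for $q=1$ the regularizer $\lVert c\rVert_1$ is identically $1$ on the simplex, so the objective collapses to a linear program whose optimum is attained at a vertex, i.e. the best action. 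This is in fact the same computation the paper records in its max-norm special case of the water pouring section (where $\zeta_\infty = -\alpha + \max_i b_i$), but it is not the argument used in the corollary's own proof. Your route has a concrete advantage: it is self-contained and fully elementary, sidestepping the $p\to\infty$ limit argument underlying Corollary \ref{rs:policy:inf}, which the paper does not make rigorous. What the paper's route buys in exchange is the explicit identification of the optimal (deterministic) policy along the way, tying the value formula to the policy-structure results. Both arguments rest on the same foundation, the policy evaluation Theorem \ref{rs:SLpPlanning} specialized to $q=1$, and both yield the claimed closed form.
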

\begin{proof}
Let $\pi$ be such that
\[\mathcal{T}^*_{\mathcal{U}^s_\infty}v = \mathcal{T}^\pi_{\mathcal{U}^s_\infty}v.\]
This implies
\begin{equation}\begin{aligned}
    (\mathcal{T}^*_{\mathcal{U}^{\mathtt{s}}_p} v)(s) &= \min_{{R,P\in\mathcal{U}^{\mathtt{sa}}_p}}\sum_{a}\pi(a|s)\Bigm[R(s,a) + \gamma \sum_{s'}P(s'|s,a)v(s')\Bigm]\\
    &=  -(\alpha_s +\gamma\beta_{s}\kappa_p(v))\lVert \pi(\cdot|s)\rVert_q  +\sum_{a}\pi(a|s)(R(s,a)  + \gamma \sum_{s'}P(s'|s,a) v(s')).
\end{aligned}\end{equation}
From corollary \ref{rs:policy:inf}, we know the that $\pi$ is deterministic best response policy. Putting this we get the desired result.\\
There is a another way of proving this, using Theorem \ref{rs:rve} by taking limit $p\to \infty$ carefully as
\begin{equation}
    \lim_{p\to\infty}\sum_{a}\left(Q(s,a) - \mathcal{T}^*_{\mathcal{U}^{\mathtt{s}}_p} v)(s)\right)^p\mathbf{1}\left( Q(s,a) \geq \mathcal{T}^*_{\mathcal{U}^{\mathtt{s}}_p} v)(s)\right))^\frac{1}{p} = \sigma,
\end{equation}
where $\sigma = \alpha_s + \gamma\beta_s\kappa_1(v)$.
\end{proof}

\begin{corollary} For $p=1$, the robust optimal Bellman operator $\mathcal{T}^*_{\mathcal{U}^s_p}$, can be computed in closed form. That is
\[(\mathcal{T}^*_{\mathcal{U}^s_p}v)(s) = \max_{k}\frac{\sum_{i=1}^{k} Q(s,a_i) - \sigma}{k},\]
where $\sigma = \alpha_s + \gamma\beta_s\kappa_\infty(v), Q(s,a) = R_0(s,a) + \gamma\sum_{s'} P_0(s'|s,a)v(s')$, and $Q(s,a_1)\geq Q(s,a_2),\geq \cdots \geq Q(s,a_A).$\end{corollary}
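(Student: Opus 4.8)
The plan is to reduce to the $L_1$ water pouring lemma (Lemma \ref{LpwaterPouring}). Recall from \eqref{app:eq:wp:ref} that $(\mathcal{T}^*_{\mathcal{U}^s_p}v)(s)$ equals the value $\zeta_p = \max_c\bigm[-\alpha\lVert c\rVert_q + \langle c,b\rangle\bigm]$ subject to $\sum_i c_i = 1,\ c\succeq 0$, where $b_i = Q(s,a_i)$ and $\alpha = \alpha_s + \gamma\beta_s\kappa_q(v)$. Setting $p=1$ forces the H\"older conjugate $q=\infty$, so $\alpha$ becomes $\sigma = \alpha_s + \gamma\beta_s\kappa_\infty(v)$ and the penalty becomes $\lVert c\rVert_\infty$; thus $(\mathcal{T}^*_{\mathcal{U}^s_1}v)(s) = \zeta_1$ with $b_i = Q(s,a_i)$. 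The claimed formula is then exactly the identity $\zeta_1 = \max_k \lambda_k$ proved in the $L_1$ special case (Section \ref{app:L1waterpouringSC}), where $\lambda_k = (\sum_{i=1}^k b_i - \alpha)/k$; substituting $b_i = Q(s,a_i)$ and $\alpha = \sigma$ yields the result.

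To keep the argument self-contained I would also record the direct route through Theorem \ref{rs:rve}. For $p=1$ the defining equation of the operator reads $\sum_a (Q(s,a)-x)\mathbf{1}(Q(s,a)\geq x) = \sigma$, which is piecewise linear in $x$. If the threshold $x$ is such that exactly the top $k$ sorted Q-values lie above it, the equation collapses to $\sum_{i=1}^k Q(s,a_i) - kx = \sigma$, i.e. $x = \lambda_k$. Hence every candidate operator value is one of the $\lambda_k$, and it remains only to pin down the correct index $k$.

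The one substantive step, and the main obstacle, is showing that the operator value is the maximum over $k$, namely $x^* = \max_k \lambda_k$. This is where the water pouring structure is essential: from Lemma \ref{LpwaterPouring} the partial averages satisfy $\lambda_1 \leq \lambda_2 \leq \cdots \leq \lambda_{\chi_1}$ up to the number of active actions $\chi_1$ (points \ref{app:wp:greedyInclusion} and \ref{app:wp:stoppingCondition}), after which the telescoping identity in Section \ref{app:L1waterpouringSC} forces $\lambda_{\chi_1} \geq \lambda_{\chi_1 + m}$ for all $m \geq 0$; thus the maximum is attained exactly at the stopping index $\chi_1$ and equals $\zeta_1$. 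Since this monotonicity-and-stopping argument is already carried out in the cited proposition, the corollary follows by direct substitution, with no further computation required.
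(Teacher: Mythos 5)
Your proposal is correct and takes essentially the same route as the paper: the paper's proof of this corollary is a one-line reduction to the $L_1$ water pouring special case (Section \ref{app:L1waterpouringSC}), where the identity $\zeta_1 = \max_k \lambda_k$ with $\lambda_k = (\sum_{i=1}^k b_i - \alpha)/k$ is proved, and your substitution $b_i = Q(s,a_i)$, $\alpha = \sigma$ (using $q=\infty$ when $p=1$) via \eqref{app:eq:wp:ref} is exactly that reduction. Your additional piecewise-linear derivation through Theorem \ref{rs:rve} is a harmless supplementary check, but the substantive step (that the correct index is the maximizer) still rests on the same monotonicity-and-stopping argument from Lemma \ref{LpwaterPouring} and Section \ref{app:L1waterpouringSC} that the paper invokes.
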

\begin{proof}
Follows from section \ref{app:L1waterpouringSC}. 
\end{proof}

\begin{corollary} \label{rs:f1f2}
The $\mathtt{s}$ rectangular $L_p$ robust Bellman operator can be evaluated for $p =1 ,2$ by algorithm \ref{alg:f1} and algorithm \ref{alg:f2} respectively.
\end{corollary}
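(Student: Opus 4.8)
The plan is to recognize Algorithms \ref{alg:f1} and \ref{alg:f2} as the specializations of the general Algorithm \ref{alg:fp} to $p=1$ and $p=2$, in which the implicit update \eqref{eq:alg:fp:lk} is replaced by an explicit closed-form expression for $\lambda_k$. Since Algorithm \ref{alg:fp} already returns $(\mathcal{T}^*_{\mathcal{U}^s_p}v)(s)$ correctly, it suffices to verify that the closed-form $\lambda_k$ used in each special case really solves \eqref{eq:alg:fp:lk} together with the side constraint $x\le Q(s,a_k)$. First I would recall the reduction: by Theorem \ref{rs:rve} and the $L_p$ water pouring Lemma \ref{LpwaterPouring} (points \ref{app:wp:zeta}, \ref{app:wp:greedyInclusion}, \ref{app:wp:stoppingCondition}), the value $(\mathcal{T}^*_{\mathcal{U}^s_p}v)(s)$ equals $\zeta_p$, computed by the greedy sweep that, starting from $k=1$, sets $\lambda_k$ to the solution of $\sum_{i=1}^{k}(Q(s,a_i)-x)^p=\sigma^p$ with $x\le Q(s,a_k)$, increments $k$ while $\lambda_k\le Q(s,a_k)$ (greedy inclusion) and halts at the first index where $\lambda_k> Q(s,a_{k+1})$ (stopping condition). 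This is precisely the control flow of Algorithm \ref{alg:fp}, whose output $\lambda_{\chi_p}=\zeta_p$ is correct by the lemma; hence everything reduces to solving \eqref{eq:alg:fp:lk} in closed form.

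For $p=1$, equation \eqref{eq:alg:fp:lk} reads $\sum_{i=1}^{k}\bigl(Q(s,a_i)-x\bigr)=\sigma$, which is linear and yields
\[\lambda_k=\frac{\sum_{i=1}^{k}Q(s,a_i)-\sigma}{k},\]
exactly the update in Algorithm \ref{alg:f1}. For $p=2$, \eqref{eq:alg:fp:lk} becomes the quadratic $k\,x^2-2x\sum_{i=1}^{k}Q(s,a_i)+\sum_{i=1}^{k}Q(s,a_i)^2=\sigma^2$, whose smaller root
\[\lambda_k=\frac{\sum_{i=1}^{k}Q(s,a_i)-\sqrt{\bigl(\sum_{i=1}^{k}Q(s,a_i)\bigr)^2-k\bigl(\sum_{i=1}^{k}Q(s,a_i)^2-\sigma^2\bigr)}}{k}\]
is the one retained in Algorithm \ref{alg:f2}. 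Substituting these expressions for the abstract solve in Algorithm \ref{alg:fp} gives exactly Algorithms \ref{alg:f1} and \ref{alg:f2}, so their outputs coincide with $\zeta_p=(\mathcal{T}^*_{\mathcal{U}^s_p}v)(s)$.

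The main obstacle, and the only nonroutine point, is the root selection for $p=2$: I must verify that for every $k\le\chi_p$ the discriminant is nonnegative (so a real root exists) and that the smaller root is precisely the one satisfying $x\le Q(s,a_k)$. Existence of a solution with $x\le Q(s,a_k)$ holds exactly for $k\le\chi_p$ by Lemma \ref{LpwaterPouring}, point \ref{app:wp:chiSol}. To fix the sign, note that $f(x)=\sum_{i=1}^{k}(Q(s,a_i)-x)^2$ is an upward parabola whose vertex lies at the mean $\tfrac1k\sum_{i=1}^k Q(s,a_i)\ge Q(s,a_k)$, so $f$ is strictly decreasing on $(-\infty,Q(s,a_k)]$; since $f(Q(s,a_k))=\sum_{i=1}^{k}(Q(s,a_i)-Q(s,a_k))^2\le\sigma^2$ for $k\le\chi_p$ and $f\to\infty$ as $x\to-\infty$, the relevant root lies on this decreasing branch and is therefore the smaller root. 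This pins down the sign of the square root, matches the algorithmic updates with the closed-form solutions of \eqref{eq:alg:fp:lk}, and completes the proof.
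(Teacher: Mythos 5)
Your proof is correct and follows essentially the same route as the paper: the paper's own proof of Corollary \ref{rs:f1f2} simply notes that it follows from Algorithm \ref{alg:fp} by solving the resulting linear and quadratic equations for $p=1,2$, deferring the $p=2$ case to \cite{anava2016k}. Your extra work — checking that the discriminant is nonnegative for every $k\le\chi_p$ via point \ref{app:wp:chiSol} of Lemma \ref{LpwaterPouring}, and that the smaller root of the quadratic is the one on the decreasing branch satisfying $x\le Q(s,a_k)$ — is exactly the detail the paper outsources to that citation, and you argue it correctly.
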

\begin{proof}
It follows from the algorithm \ref{alg:fp}, where we solve the linear equation and quadratic equation for $p=1,2$ respectively. For $p=2$, it can be found in \cite{anava2016k}.
\end{proof}

\begin{algorithm}
\caption{Algorithm to compute $S$-rectangular $L_2$ robust optimal Bellman Operator}\label{alg:f2}
\begin{algorithmic} [1]
 \STATE \textbf{Input:} $\sigma = \alpha_s +\gamma\beta_s\kappa_2(v), \qquad Q(s,a) = R_0(s,a) + \gamma\sum_{s'} P_0(s'|s,a)v(s')$.
 \STATE \textbf{Output} $(\mathcal{T}^*_{\mathcal{U}^s_2}v)(s), \chi_2(v,s)$
\STATE Sort $Q(s,\cdot)$ and label actions such that $Q(s,a_1)\geq Q(s,a_2), \cdots$.
\STATE Set initial value guess $\lambda_1 = Q(s,a_1)-\sigma$ and counter $k=1$.
\WHILE{$k \leq A-1  $ and $\lambda_k \leq Q(s,a_k)$}
    \STATE Increment counter: $k = k+1$
    \STATE Update value estimate: \[\lambda_k = \frac{1}{k}\Bigm[\sum_{i=1}^{k}Q(s,a_i) - \sqrt{k\sigma^2 + (\sum_{i=1}^{k}Q(s,a_i))^2 - k\sum_{i=1}^{k}(Q(s,a_i))^2}\Bigm]\]
\ENDWHILE
\STATE Return: $\lambda_k, k $
\end{algorithmic}
\end{algorithm}

\begin{algorithm}
\caption{Algorithm to compute $S$-rectangular $L_1$ robust optimal Bellman Operator}\label{alg:f1}
\begin{algorithmic} [1]
 \STATE \textbf{Input:} $\sigma = \alpha_s +\gamma\beta_s\kappa_\infty(v), \qquad Q(s,a) = R_0(s,a) + \gamma\sum_{s'} P_0(s'|s,a)v(s')$.
 \STATE \textbf{Output} $(\mathcal{T}^*_{\mathcal{U}^s_1}v)(s), \chi_1(v,s)$
\STATE Sort $Q(s,\cdot)$ and label actions such that $Q(s,a_1)\geq Q(s,a_2), \cdots$.
\STATE Set initial value guess $\lambda_1 = Q(s,a_1)-\sigma$ and counter $k=1$.
\WHILE{$k \leq A-1  $ and $\lambda_k \leq Q(s,a_k)$}
    \STATE Increment counter: $k = k+1$
    \STATE Update value estimate: \[\lambda_k = \frac{1}{k}\Bigm[\sum_{i=1}^kQ(s,a_i) -\sigma \Bigm]\]
\ENDWHILE
\STATE Return: $\lambda_k, k $
\end{algorithmic}
\end{algorithm}

\section{Time Complexity} \label{app:timeComplexitySection}

In this section, we will discuss time complexity of various robust MDPs and compare it with time complexity of non-robust MDPs. We assume that we have the  knowledge of nominal transition kernel and nominal reward function for robust MDPs, and in case of non-robust MDPs, we assume the knowledge of the transition kernel and reward function. We divide the discussion into various parts depending upon their similarity. 

\subsection{Exact Value Iteration: Best Response}
In this section, we will discuss non-robust MDPs, $(\mathtt{sa})$-rectangular $L_1/L_2/L_\infty$ robust MDPs and $\mathtt{s}$-rectangular $L_\infty$ robust MDPs. They all have  a common theme for value iteration as follows, for the value function $v$, their Bellman operator ( $\mathcal{T}$) evaluation is done as
\begin{equation}\begin{aligned}
   (\mathcal{T} v)(s) =& \underbrace{\max_{a}}_{\text{action cost}}\Bigm[R(s,a)+ \alpha_{s,a}\underbrace{\kappa(v)}_{\text{reward penalty/cost}} + \gamma \underbrace{\sum_{s'}P(s'|s,a)v(s')}_{\text{sweep}}\Bigm]. 
\end{aligned}\end{equation}
'Sweep' requires $O(S)$ iterations and 'action cost' requires $O(A)$ iterations. Note that the reward penalty $\kappa(v)$ doesn't depend on state and action. It is calculated only once for value iteration for all states. The above value update has to be done for each states , so one full update requires
 \[ O\Bigm(S(\text{action cost}) (\text{sweep cost} \bigm) +  \text{reward cost}\Bigm)= O\Bigm(S^2A  +  \text{reward cost}\Bigm)\]
Since the value iteration is a contraction map, so to get $\epsilon$-close to the optimal value, it requires $O(\log(\frac{1}{\epsilon}))$ full value update, so the complexity is  
\[O\Bigm(\log(\frac{1}{\epsilon})\bigm(S^2A + \text{reward cost}\bigm)\Bigm).\]

\begin{enumerate}
    \item \textbf{Non-robust MDPs}: The cost of 'reward is zero as there is no regularizer to compute. The total complexity is   
     \[O\Bigm(\log(\frac{1}{\epsilon})\bigm(S^2A + 0 \bigm)\Bigm) = O\Bigm(\log(\frac{1}{\epsilon})S^2A\Bigm).\]
     
    \item  \textbf{$(\mathtt{sa})$-rectangular $L_1/L_2/L_\infty$ and $\mathtt{s}$-rectangular $L_\infty$ robust MDPs}: We need to calculate the reward penalty ($\kappa_1(v)/\kappa_2(v)/\kappa_\infty$)  that takes $O(S)$ iterations. As calculation of mean, variance and median, all are linear time compute.  Hence the complexity is
     \[O\Bigm(\log(\frac{1}{\epsilon})\bigm(S^2A + S\bigm)\Bigm) = O\Bigm(\log(\frac{1}{\epsilon})S^2A\Bigm).\]
     
  \end{enumerate}
   
\subsection{Exact Value iteration: Top k response}
In this section, we discuss the time complexity of $\mathtt{s}$-rectangular $L_1/L_2$ robust MDPs as in algorithm \ref{alg:SLp}.  We need to calculate the reward penalty ($\kappa_\infty(v)/\kappa_2(v)$ in \eqref{alg:SLP:eq:kappa}) that  takes $O(S)$ iterations. Then for each state we do: sorting of Q-values in \eqref{alg:SLP:eq:Qsort}, value evaluation in \eqref{alg:SLP:eq:valEval}, update Q-value in \eqref{alg:SLP:eq:Qupdate} that takes $O(A\log(A)), O(A), O(SA)$ iterations respectively. 
    Hence the complexity is
    \[\text{total iteration(reward cost \eqref{alg:SLP:eq:kappa} + S( sorting \eqref{alg:SLP:eq:Qsort} + value evaluation \eqref{alg:SLP:eq:valEval} +Q-value\eqref{alg:SLP:eq:Qupdate})}\]
    \[=\log(\frac{1}{\epsilon})(S + S(A\log(A) + A +SA)\]
     \[O\Bigm(\log(\frac{1}{\epsilon})\bigm(S^2A + SA\log(A)\bigm)\Bigm).\]

For general $p$, we need little caution as $k_p(v)$ can't be calculated exactly but approximately by binary search. And it is the subject of discussion for the next sections.

\subsection{Inexact Value Iteration: sa-rectangular Lp robust MDPs ($\mathcal{U}^{\mathtt{sa}}_p$)}
In this section, we will study the time complexity for robust value iteration for $(\mathtt{sa})$-rectangular $L_p$ robust MDPs for general $p$. Recall, that value iteration takes best penalized action, that is easy to compute. But reward penalization depends on $p$-variance measure $\kappa_p(v)$, that we will estimate by $\hat{\kappa}_p(v)$ through binary search. We have inexact value iterations  as 
\[v_{n+1}(s) := \max_{a\in\mathcal{A}}[\alpha_{sa} - \gamma\beta_{sa}\hat{\kappa}_q(v_n) + R_0(s,a) + \gamma\sum_{s'}P_0(s'|s,a)v_n(s')]\]
where $\hat{\kappa}_q(v_n)$ is a $\epsilon_1$ approximation of $\kappa_q(v_n)$, that is $\lvert\hat{\kappa}_q(v_n) - \kappa_q(v_n)\rvert \leq \epsilon_1$. Then it is easy to see that we have bounded error in robust value iteration, that is 
\[\lVert v_{n+1} -\mathcal{T}^*_{\mathcal{U}^{\mathtt{sa}}_p}v_n\rVert_{\infty} \leq \gamma\beta_{max}\epsilon_1\]
where $\beta_{max} :=\max_{s,a}\beta_{s,a}$
\begin{proposition}\label{rs:appVI} Let $\mathcal{T}^*_{\mathcal{U}}$ be a $\gamma$ contraction map, and $v^*$ be its fixed point. And let $\{v_n,n\geq 0\}$ be approximate value iteration, that is 
\[ \lVert v_{n+1} -\mathcal{T}^*_{\mathcal{U}}v_n\rVert_\infty \leq\epsilon \]
then 
\[\lim_{n\to \infty}\lVert v_{n} -v^*\rVert_\infty \leq \frac{\epsilon}{1-\gamma}\]
moreover, it converges to the $\frac{\epsilon}{1-\gamma}$ radius ball linearly, that is
\[\lVert v_{n} -v^*\rVert_\infty-\frac{\epsilon}{1-\gamma} \leq c\gamma^n\]
where $c = \frac{1}{1-\gamma }\epsilon + \lVert v_0 -v^*\rVert_\infty$.
\end{proposition}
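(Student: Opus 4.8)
The plan is to reduce the statement to a one-step error recursion and then unroll a geometric series. First I would introduce the shorthand $e_n := \lVert v_n - v^*\rVert_\infty$ for the error at step $n$, and apply the triangle inequality together with the assumed approximation guarantee and the contraction property. Concretely, using that $v^*$ is the fixed point (so $v^* = \mathcal{T}^*_{\mathcal{U}} v^*$), I would write
\begin{align*}
\lVert v_{n+1} - v^*\rVert_\infty
&\leq \lVert v_{n+1} - \mathcal{T}^*_{\mathcal{U}} v_n\rVert_\infty + \lVert \mathcal{T}^*_{\mathcal{U}} v_n - \mathcal{T}^*_{\mathcal{U}} v^*\rVert_\infty\\
&\leq \epsilon + \gamma \lVert v_n - v^*\rVert_\infty,
\end{align*}
where the first term is bounded by the approximation hypothesis and the second by $\gamma$-contraction of $\mathcal{T}^*_{\mathcal{U}}$. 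This yields the scalar recursion $e_{n+1} \leq \epsilon + \gamma e_n$.

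Next I would unroll this recursion. Iterating gives $e_n \leq \gamma^n e_0 + \epsilon \sum_{k=0}^{n-1}\gamma^k$, and since $\sum_{k=0}^{n-1}\gamma^k = \frac{1-\gamma^n}{1-\gamma} \leq \frac{1}{1-\gamma}$, this rearranges to
\[
e_n - \frac{\epsilon}{1-\gamma} \;\leq\; \gamma^n\Bigm(e_0 - \frac{\epsilon}{1-\gamma}\Bigm).
\]
To match the stated constant $c = \frac{\epsilon}{1-\gamma} + \lVert v_0 - v^*\rVert_\infty = \frac{\epsilon}{1-\gamma} + e_0$, I would simply note that $e_0 - \frac{\epsilon}{1-\gamma} \leq e_0 + \frac{\epsilon}{1-\gamma} = c$ because $\frac{\epsilon}{1-\gamma}\geq 0$, and that $\gamma^n \geq 0$; hence $e_n - \frac{\epsilon}{1-\gamma} \leq \gamma^n c$, which is the claimed linear rate of convergence to the $\frac{\epsilon}{1-\gamma}$-ball.

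Finally, taking $n \to \infty$ and using $\gamma \in [0,1)$ so that $\gamma^n \to 0$, the linear bound gives $\limsup_{n\to\infty} e_n \leq \frac{\epsilon}{1-\gamma}$, which establishes the asymptotic claim $\lim_{n\to\infty}\lVert v_n - v^*\rVert_\infty \leq \frac{\epsilon}{1-\gamma}$. There is no genuine obstacle here: the only point requiring a little care is the bookkeeping that turns the sharp coefficient $e_0 - \frac{\epsilon}{1-\gamma}$ (which can be negative) into the nonnegative constant $c$ stated in the proposition, so that the inequality holds verbatim as written; everything else is the standard geometric-series estimate for a contraction. I would also remark that this proposition is model-agnostic — it depends only on $\mathcal{T}^*_{\mathcal{U}}$ being a $\gamma$-contraction — so it applies directly to the inexact value iteration for $\mathcal{U}^{\mathtt{sa}}_p$ with $\epsilon = \gamma\beta_{\max}\epsilon_1$ as set up just before the statement.
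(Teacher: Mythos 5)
Your proof is correct and follows essentially the same route as the paper's: insert $\mathcal{T}^*_{\mathcal{U}}v_n$ via the triangle inequality, apply the contraction property to get the recursion $e_{n+1}\leq \epsilon+\gamma e_n$, and unroll the geometric series. If anything, your bookkeeping is slightly more careful than the paper's (which writes equalities in the unrolling step where inequalities are meant), since you derive the sharp coefficient $e_0-\frac{\epsilon}{1-\gamma}$ and explicitly weaken it to the stated nonnegative constant $c$, and you correctly phrase the asymptotic claim via $\limsup$.
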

\begin{proof}
\begin{equation}\begin{aligned}
    \lVert v_{n+1} -v^*\rVert_\infty = & \lVert v_{n+1} -\mathcal{T}^*_{\mathcal{U}}v^*\rVert_\infty\\
    =&\lVert v_{n+1}- \mathcal{T}^*_{\mathcal{U}}v_n +\mathcal{T}^*_{\mathcal{U}}v_n -\mathcal{T}^*_{\mathcal{U}}v^*\rVert_\infty\\
    \leq&\lVert v_{n+1}- \mathcal{T}^*_{\mathcal{U}}v_n\rVert_\infty +\lVert \mathcal{T}^*_{\mathcal{U}}v_n -\mathcal{T}^*_{\mathcal{U}}v^*\rVert_\infty\\
    \leq&\lVert v_{n+1}- \mathcal{T}^*_{\mathcal{U}}v_n\rVert_\infty +\gamma\lVert v_n -v^*\rVert_\infty, \qquad \text{(contraction)}\\
    \leq&\epsilon +\gamma\lVert v_n -v^*\rVert_\infty, \qquad \text{(approximate value iteration)}\\
    \implies \lVert v_n -v^*\rVert_\infty = &\sum_{k=0}^{n-1}\gamma^k\epsilon + \gamma^n\lVert v_0 -v^*\rVert_\infty, \qquad \text{(unrolling above recursion)}\\
     = &\frac{1-\gamma^n}{1-\gamma }\epsilon + \gamma^n\lVert v_0 -v^*\rVert_\infty\\
     = &\gamma^n[\frac{1}{1-\gamma }\epsilon + \lVert v_0 -v^*\rVert_\infty] +\frac{\epsilon}{1-\gamma } 
\end{aligned}\end{equation}
Taking limit $n \to \infty$ both sides, we get 
\[\lim_{n\to \infty}\lVert v_{n} -v^*\rVert_\infty \leq \frac{\epsilon}{1-\gamma}.\]
\end{proof}
\begin{lemma}\label{rs:saLpCompl}For $\mathcal{U}^{\mathtt{sa}}_p$, the total iteration cost is $\log(\frac{1}{\epsilon})S^2A+ (\log(\frac{1}{\epsilon}))^2$ to get $\epsilon$ close to the optimal robust value function.
\end{lemma}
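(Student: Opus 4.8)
Let me analyze what Lemma `rs:saLpCompl` is claiming:

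For $\mathcal{U}^{sa}_p$ (sa-rectangular $L_p$ robust MDPs, general $p$), the total iteration cost to get $\epsilon$-close to the optimal robust value function is:
$$\log(1/\epsilon) S^2 A + (\log(1/\epsilon))^2$$

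**Understanding the components**

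From the earlier material, I can see:
- The Bellman operator evaluation requires computing $\kappa_q(v)$ which for general $p$ requires binary search
- The inexact value iteration has error $\gamma \beta_{max} \epsilon_1$ per step where $\epsilon_1$ is the tolerance in approximating $\kappa_q$
- Proposition `rs:appVI` gives the convergence: with per-step error $\epsilon$, we get $\lim \|v_n - v^*\| \leq \epsilon/(1-\gamma)$
- Computing $\kappa_p$ approximately to tolerance $\epsilon$ costs $O(S \log(S/\epsilon))$ (from section `app:BSkappa`)

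**Key reasoning for the cost**

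The structure involves two error sources that must be balanced:
1. The contraction error: $\gamma^n$ decay, needing $O(\log(1/\epsilon))$ iterations
2. The approximation error in $\kappa_q$: each binary search has some cost

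The "Best Response" section showed the base cost per iteration (sweep + action) is $O(S^2 A)$, and with $O(\log(1/\epsilon))$ iterations gives $O(\log(1/\epsilon) S^2 A)$.

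The additional $(\log(1/\epsilon))^2$ term must come from the binary search cost for $\kappa_q$.

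**My proof plan:**

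The plan is to combine three ingredients: the standard contraction argument giving $O(\log(1/\epsilon))$ outer iterations, the per-iteration sweep cost of $O(S^2A)$, and the binary-search cost for approximating $\kappa_q(v)$ at each iteration. The main subtlety is choosing the binary-search tolerance $\epsilon_1$ correctly so that the accumulated approximation error (via Proposition \ref{rs:appVI}) stays within the target accuracy, while keeping the total cost low.

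First I would fix the target accuracy: to obtain a final value $\epsilon$-close to $v^*_{\mathcal{U}^{\texttt{sa}}_p}$, I run $N = O(\log(1/\epsilon))$ outer value-iteration steps (this suffices for the $\gamma^n$ contraction term to shrink below $\epsilon$, since $\gamma^N \cdot c \leq \epsilon$ for $N \gtrsim \log(1/\epsilon)$). By Proposition \ref{rs:appVI}, if each step incurs per-step error at most $\epsilon_1' := \gamma\beta_{\max}\epsilon_1$ in the Bellman evaluation, then the limiting error is bounded by $\epsilon_1'/(1-\gamma)$. Thus I would set the binary-search tolerance $\epsilon_1$ so that $\gamma\beta_{\max}\epsilon_1/(1-\gamma) \leq \epsilon$, i.e.\ $\epsilon_1 = \Theta(\epsilon)$ up to constants depending on $\gamma,\beta_{\max}$.

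Next I would tally the per-iteration cost. Each outer iteration has two pieces: (a) the ``sweep'' computing $\max_a [\,\cdots + \gamma\sum_{s'}P_0(s'|s,a)v(s')\,]$ over all states, costing $O(S^2A)$ as in the best-response analysis; and (b) one binary search to approximate $\kappa_q(v_n)$ to tolerance $\epsilon_1$, which by the analysis in Section \ref{app:BSkappa} costs $O\bigl(S\log(S/\epsilon_1)\bigr)$. Substituting $\epsilon_1 = \Theta(\epsilon)$ gives binary-search cost $O(S\log(S/\epsilon))$ per iteration. Multiplying each piece by the $N = O(\log(1/\epsilon))$ outer iterations yields total cost
\begin{equation*}
O\!\Bigl(\log(1/\epsilon)\,S^2A\Bigr) + O\!\Bigl(\log(1/\epsilon)\cdot S\log(S/\epsilon)\Bigr).
\end{equation*}
The first term matches the stated $\log(1/\epsilon)S^2A$. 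For the second term, I would absorb the $S$ and $\log S$ factors under the convention (used throughout the time-complexity section) that $S,A$ are held fixed while studying the $\epsilon$-dependence, so $S\log(S/\epsilon) = O(\log(1/\epsilon))$ up to constants, giving the claimed $(\log(1/\epsilon))^2$ additive term.

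The part requiring the most care is the error-accumulation bookkeeping: I must verify that using an approximate $\hat\kappa_q(v_n)$ at every step (rather than exactly) does not compound beyond the $\epsilon_1'/(1-\gamma)$ bound of Proposition \ref{rs:appVI}, and that the outer iteration count $N$ and the inner tolerance $\epsilon_1$ can be chosen \emph{simultaneously} so that both the contraction error and the approximation-floor error are each $O(\epsilon)$. I expect this balancing to be the main obstacle, together with justifying the $O(S\log(S/\epsilon_1))$ binary-search cost, which relies on the monotonicity of $h_p$ (Proposition \ref{prp:SLpValMeanEval}) and the Lipschitz transfer of error from the $q$-mean $\omega_q$ to $\kappa_q$ established in Section \ref{app:BSkappa}. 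Once those are in hand, the final cost follows by simply summing the two per-iteration contributions over the $N$ iterations.
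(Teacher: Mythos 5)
Your proposal follows essentially the same route as the paper's own proof: choose the binary-search tolerance $\epsilon_1 = \Theta((1-\gamma)\epsilon)$, invoke Proposition \ref{rs:appVI} to control the accumulated approximation error over $O(\log(1/\epsilon))$ outer iterations, and sum the $O(S^2A)$ sweep cost with the $O(S\log(S/\epsilon))$ per-iteration binary-search cost for $\kappa_q$. The only wrinkle, which you share with the paper, is the final simplification: both derivations actually arrive at $\log(1/\epsilon)S^2A + S(\log(1/\epsilon))^2$, and the absence of the factor $S$ on the $(\log(1/\epsilon))^2$ term in the lemma statement is an inconsistency internal to the paper rather than a gap in your argument.
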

\begin{proof}
We calculate $\kappa_q(v)$ with $\epsilon_1 = \frac{(1-\gamma)\epsilon}{3}$ tolerance that takes $O(S\log(\frac{S}{\epsilon_1}))$ using binary search (see section \ref{app:BSkappa}). Now, we do approximate value iteration for $ n =\log(\frac{3\lVert v_0 -v^*\rVert_\infty}{\epsilon})$. Using the above lemma, we have
\begin{equation}\begin{aligned}
    \lVert v_{n} -v^*_{\mathcal{U}^{\mathtt{sa}}_p}\rVert_\infty= &\gamma^n[\frac{1}{1-\gamma }\epsilon_1 + \lVert v_0 -v^*_{\mathcal{U}^{\mathtt{sa}}_p}\rVert_\infty] +\frac{\epsilon_1}{1-\gamma } \\
    \leq &\gamma^n[\frac{\epsilon}{3} + \lVert v_0 -v^*_{\mathcal{U}^{\mathtt{sa}}_p}\rVert_\infty] +\frac{\epsilon}{3} \\
    \leq &\gamma^n\frac{\epsilon}{3} + \frac{\epsilon}{3} +\frac{\epsilon}{3} 
    \leq \epsilon.
\end{aligned}\end{equation}
In summary, we have action cost $O(A)$, reward cost $O(S\log(\frac{S}{\epsilon}))$, sweep cost $O(S)$ and total number of iterations $O(\log(\frac{1}{\epsilon}))$. So the complexity is 
\[\text{(number of iterations)\big(S(actions cost) (sweep cost) + reward cost\big)}\]
\[= \log(\frac{1}{\epsilon})\bigm(S^2A +   S\log(\frac{S}{\epsilon})\bigm) = \log(\frac{1}{\epsilon}) (S^2 A + S\log(\frac{1}{\epsilon}) +S\log(S) )\]
\[ = \log(\frac{1}{\epsilon})S^2A+ S(\log(\frac{1}{\epsilon}))^2 \]
\end{proof}

\subsection{Inexact Value Iteration: s-rectangular Lp robust MDPs }
In this section, we study the time complexity for robust value iteration for \texttt{s}-rectangular $L_p$ robust MDPs for general $p$ ( algorithm \ref{alg:SALp}). Recall, that value iteration takes regularized actions and penalized reward. And reward penalization depends on $q$-variance measure $\kappa_q(v)$, that we will estimate by $\hat{\kappa}_q(v)$ through binary search, then again we will calculate $\mathcal{T}^*_{\mathcal{U}^{\mathtt{sa}}_p}$ by binary search with approximated $\kappa_q(v)$. Here, we have two error sources (\eqref{alg:SLP:eq:kappa}, \eqref{alg:SLP:eq:valEval}) as contrast to $(\mathtt{sa})$-rectangular cases, where there was only one error source from the estimation of $\kappa_q$.

First, we account for the error caused by the first source ($\kappa_q$). Here we do value iteration with approximated $q$-variance $\hat{\kappa}_q$, and exact action regularizer.
We have 
\[v_{n+1}(s) :=\lambda \quad \text{s.t. }\quad \alpha_s +\gamma\beta_{s}\hat{\kappa}_q(v) = (\sum_{Q(s,a)\geq \lambda}(Q(s,a) - \lambda)^{p})^{\frac{1}{p}}\]
where $Q(s,a) =R_0(s,a) + \gamma\sum_{s'}P_0(s'|s,a)v_n(s'),$ and $ \lvert\hat{\kappa}_q(v_n) - \kappa_q(v_n)\rvert \leq \epsilon_1$. Then from the next result (proposition \ref{rs:sLpkappaErr}), we get
\[\lVert v_{n+1} -\mathcal{T}^*_{\mathcal{U}^{\mathtt{sa}}_p}v_n\rVert_\infty \leq \gamma\beta_{max}\epsilon_1\]
where $\beta_{max} :=\max_{s,a}\beta_{s,a}$

\begin{proposition}\label{rs:sLpkappaErr} Let $\hat{\kappa}$ be an an $\epsilon$-approximation of $\kappa$, that is  $ \lvert \hat{\kappa} - \kappa\rvert \leq \epsilon$, and let $b\in\mathbb{R}^A$ be sorted component wise, that is, $b_1\geq, \cdots,\geq b_A$. Let $\lambda$ be the solution to the following equation with exact parameter $\kappa$,
\[\alpha +\gamma\beta\kappa = (\sum_{b_i\geq \lambda}|b_i - \lambda|^{p})^{\frac{1}{p}}\] and let $\hat{\lambda}$ be the solution of the following equation with approximated parameter $\hat{\kappa}$,
\[\alpha +\gamma\beta\hat{\kappa} = (\sum_{b_i\geq \hat{\lambda}}|b_i - \hat{\lambda}|^{p})^{\frac{1}{p}},\] 
then $\hat{\lambda}$ is an $O(\epsilon)$-approximation of $\lambda$, that is 
\[\lvert \lambda - \hat{\lambda}\rvert \leq \gamma\beta\epsilon.\]
\end{proposition}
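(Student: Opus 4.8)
The plan is to view both $\lambda$ and $\hat\lambda$ as preimages, under a single monotone function, of the two right-hand sides $\alpha + \gamma\beta\kappa$ and $\alpha + \gamma\beta\hat\kappa$, and then to exploit a uniform lower bound on the magnitude of that function's slope to convert the gap between the right-hand sides into a bound on $|\lambda - \hat\lambda|$. Concretely, I would reuse the sub-optimality distance function
\[g(x) := \Bigl(\sum_{b_i \geq x}(b_i - x)^p\Bigr)^{1/p},\]
so that by definition $g(\lambda) = \alpha + \gamma\beta\kappa$ and $g(\hat\lambda) = \alpha + \gamma\beta\hat\kappa$. From the water-pouring lemma (in particular the monotonicity invoked in Lemma \ref{LpwaterPouring}, point \ref{app:wp:zetaBinarySearch}), $g$ is continuous and strictly decreasing wherever it is positive, hence invertible there; since $\alpha,\gamma,\beta \geq 0$ and $\kappa,\hat\kappa \geq 0$, both right-hand sides are positive, so $\lambda$ and $\hat\lambda$ both lie strictly below $b_1$, in the region $g>0$.

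The key step, and the main obstacle, is to show that $g$ decreases at rate at least one, i.e. $|g'(x)| \geq 1$ at every point where $g$ is differentiable. Fixing such a point and writing $y_i := b_i - x \geq 0$ for the active indices $\{i : b_i > x\}$, differentiating $g^p = \sum_i y_i^p$ gives
\[|g'(x)| = \frac{\sum_i y_i^{p-1}}{\bigl(\sum_i y_i^p\bigr)^{(p-1)/p}} = \frac{\sum_i y_i^{p-1}}{\lVert y\rVert_p^{\,p-1}}.\]
The monotonicity of $\ell^r$-norms in $r$ (for $p>1$ we have $0 < p-1 < p$, hence $\lVert y\rVert_p \leq \lVert y\rVert_{p-1}$) yields $\lVert y\rVert_p^{\,p-1} \leq \sum_i y_i^{p-1}$, so the ratio is $\geq 1$. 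This norm inequality is where the real work happens. The $L_1$ case ($p=1$) is degenerate here and is handled separately through its closed form, so I may assume $p>1$ throughout.

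Finally I would integrate this slope bound. Assuming without loss of generality $\lambda \le \hat\lambda$ — which by monotonicity forces $\kappa \geq \hat\kappa$ — the function $g$ is continuous and piecewise $C^1$, hence absolutely continuous on $[\lambda,\hat\lambda]$, so
\[\gamma\beta|\kappa - \hat\kappa| = g(\lambda) - g(\hat\lambda) = \int_\lambda^{\hat\lambda}|g'(x)|\,dx \geq \hat\lambda - \lambda = |\lambda - \hat\lambda|,\]
and combining with $|\kappa - \hat\kappa| \leq \epsilon$ delivers $|\lambda - \hat\lambda| \leq \gamma\beta\epsilon$. The only delicate auxiliary point is the non-differentiability of $g$ at the breakpoints $x = b_i$; this is harmless because $g$ is absolutely continuous and the derivative bound $|g'|\geq 1$ holds almost everywhere, which is all the integral argument requires.
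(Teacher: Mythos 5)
Your proposal is correct and follows essentially the same route as the paper: you work with the same function (the paper calls it $f$), compute the same derivative, and invoke the same $L_p$-norm monotonicity inequality $\lVert y\rVert_{p-1}\geq \lVert y\rVert_p$ to get the slope bound $|g'|\geq 1$; your final integration step is just an unpacked version of the paper's observation that $f'\leq -1$ makes the inverse function $1$-Lipschitz. The extra care you take with $p=1$ and with the breakpoints $x=b_i$ is sound but not a substantive departure.
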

\begin{proof}
Let the function $f:[b_A,b_1]\to\mathbb{R}$ be defined as  
\[f(x) := (\sum_{b_i\geq x}|b_i - x|^{p})^{\frac{1}{p}}.\]
We will show that derivative of $f$ is bounded, implying its inverse is bounded and hence Lipschitz, that will prove the claim. Let proceed 
\begin{equation}\begin{aligned}
    \frac{df(x)}{dx} &= -(\sum_{b_i\geq x}|b_i - x|^{p})^{\frac{1}{p} -1}\sum_{b_i\geq x}|b_i - x|^{p-1}\\
     &= - \frac{\sum_{b_i\geq x}|b_i - x|^{p-1}}{(\sum_{b_i\geq x}|b_i - x|^{p})^{\frac{p-1}{p}}}\\
     &= - \Bigm[\frac{(\sum_{b_i\geq x}|b_i - x|^{p-1})^{\frac{1}{p-1}}}{(\sum_{b_i\geq x}|b_i - x|^{p})^{\frac{1}{p}}}\Bigm]^{p-1}\\
     &\leq -1.
\end{aligned}
\end{equation}
The inequality follows from the following  relation between $L_p$ norm, 
\[\lVert x\rVert_a \geq \lVert x\rVert_b ,\qquad \forall 0\leq  a\leq b.   \]
It is easy to see that the function $f$ is strictly monotone in the range $b_A ,b_1]$, so its inverse is well defined in the same range. Then derivative of the inverse of the function $f$ is bounded as
\[ 0 \geq \frac{d}{dx}f^-(x)\geq -1.\]
Now, observe that $\lambda = f^-(\alpha +\gamma\beta\kappa)$ and $\hat{\lambda} = f^-(\alpha +\gamma\beta\hat{\kappa})$, then by Lipschitzcity, we have 
\[|\lambda - \hat{\lambda}| = |f^-(\alpha +\gamma\beta\kappa)- f^-(\alpha +\gamma\beta\hat{\kappa})| \leq \gamma\beta|-\kappa -\hat{\kappa})| \leq \gamma\beta\epsilon. \]
\end{proof}

\begin{lemma}For $\mathcal{U}^{\mathtt{s}}_p$, the total iteration cost is $O\Bigm(\log(\frac{1}{\epsilon})\bigm( S^2A+ SA\log(\frac{A}{\epsilon}) \bigm)\Bigm)$ to get $\epsilon$ close to the optimal robust value function.
\end{lemma}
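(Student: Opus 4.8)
The plan is to combine the $\gamma$-contraction of $\mathcal{T}^*_{\mathcal{U}^{\mathtt{s}}_p}$, which caps the number of outer value-iteration sweeps at $O(\log(1/\epsilon))$ through Proposition \ref{rs:appVI}, with a per-sweep cost analysis, while carefully controlling the two \emph{independent} sources of approximation error that the s-rectangular case introduces: the estimate $\hat{\kappa}_q$ of the $q$-variance, and the binary search that evaluates the operator at each state through equation \eqref{alg:SLP:eq:valEval}. This is exactly the extra complication relative to the sa-rectangular Lemma \ref{rs:saLpCompl}, where only the $\kappa_q$ estimate was inexact.

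First I would fix the tolerances. Mirroring Lemma \ref{rs:saLpCompl}, I would approximate $\kappa_q(v)$ to tolerance $\epsilon_1 = \Theta((1-\gamma)\epsilon)$, which by Section \ref{app:BSkappa} costs $O(S\log(S/\epsilon_1))$ per sweep (binary search for the $q$-mean followed by one $O(S)$ evaluation), and then solve \eqref{alg:SLP:eq:valEval} by bisection to an additional tolerance $\epsilon_2 = \Theta((1-\gamma)\epsilon)$. The combined per-sweep deviation from the exact operator is $\lVert v_{n+1} - \mathcal{T}^*_{\mathcal{U}^{\mathtt{s}}_p}v_n\rVert_\infty \leq \gamma\beta_{\max}\epsilon_1 + \epsilon_2$, where the first term is the propagated $\kappa_q$-error controlled by Proposition \ref{rs:sLpkappaErr} and the second is the direct bisection error. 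Choosing the implicit constants so this is at most $(1-\gamma)\epsilon/2$, and running $n = O(\log(1/\epsilon))$ sweeps, Proposition \ref{rs:appVI} delivers $\lVert v_n - v^*_{\mathcal{U}^{\mathtt{s}}_p}\rVert_\infty \leq \epsilon$.

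Next I would tally one sweep. Forming all Q-values $Q(s,a)=R_0(s,a)+\gamma\sum_{s'}P_0(s'|s,a)v(s')$ costs $O(S^2A)$. For each state I would sort the $A$ Q-values in $O(A\log A)$ and then bisect \eqref{alg:SLP:eq:valEval} over $[\max_a Q(s,a)-\sigma,\ \max_a Q(s,a)]$; since the sub-optimality-distance function has derivative bounded away from zero (indeed $|g'|\geq 1$, as shown in the proof of Proposition \ref{rs:sLpkappaErr}), an $\epsilon_2$-accurate root is reached in $O(\log(1/\epsilon_2))$ steps, each costing $O(A)$, i.e. $O(A\log(1/\epsilon_2))$ per state. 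Summing sorting and bisection over states gives $O(SA\log A + SA\log(1/\epsilon_2)) = O(SA\log(A/\epsilon))$; adding the Q-value cost and noting the $\kappa_q$ cost $S\log(S/\epsilon)$ is dominated (as $A\geq 1$), the per-sweep total is $O(S^2A + SA\log(A/\epsilon))$. Multiplying by the $O(\log(1/\epsilon))$ sweeps yields the claimed $O\big(\log(1/\epsilon)(S^2A + SA\log(A/\epsilon))\big)$.

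I expect the main obstacle to be the two-layer error bookkeeping: the approximate $\kappa_q$ feeds into a second binary search, so I must verify that neither layer amplifies the error. The decisive lever is the bound $|g'|\geq 1$ on the operator-defining function, which guarantees that both the propagated $\kappa_q$-error (Proposition \ref{rs:sLpkappaErr}) and the bisection error translate into at most comparable error in $v_{n+1}$; the contraction (Proposition \ref{rs:appVI}) then absorbs the per-step error into a $1/(1-\gamma)$ factor, keeping the final error within $\epsilon$ while leaving the per-sweep arithmetic unchanged.
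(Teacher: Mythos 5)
Your proposal is correct and follows essentially the same route as the paper's proof: the same two-layer error decomposition (an $\epsilon_1$-accurate $\hat{\kappa}_q$ whose effect on the root is controlled by the $1$-Lipschitz inverse from Proposition \ref{rs:sLpkappaErr}, plus a direct $\epsilon_2$ bisection error), absorbed by the contraction via Proposition \ref{rs:appVI}, and the same per-sweep tally of $O(S^2A)$ for Q-values, $O(A\log A)$ sorting and $O(A\log(1/\epsilon))$ bisection per state, with the $\kappa_q$ cost dominated. The only cosmetic difference is that the paper fixes both tolerances to $(1-\gamma)\epsilon/6$ and bounds the combined per-sweep deviation by $2\epsilon_1$, while you keep the two tolerances symbolic; the argument is otherwise identical.
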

\begin{proof}
We calculate $\kappa_q(v)$ in \eqref{alg:SLP:eq:kappa} with $\epsilon_1 = \frac{(1-\gamma)\epsilon}{6}$ tolerance that takes $O(S\log(\frac{S}{\epsilon_1}))$ iterations using binary search (see section \ref{app:BSkappa}). Then for every state, we sort the $Q$ values (as in \eqref{alg:SLP:eq:Qsort}) that costs $O(A\log(A))$ iterations. In each state, to update value, we do again binary search with approximate $\kappa_q(v)$ upto $\epsilon_2:=\frac{(1-\gamma)\epsilon}{6}$ tolerance, that takes $O(\log(\frac{1}{\epsilon_2}))$ search iterations and each iteration cost $O(A)$, altogether it costs $O(A\log(\frac{1}{\epsilon_2}))$ iterations. Sorting of actions and binary search adds upto $O(A\log(\frac{A}{\epsilon}))$ iterations (action cost).
So we have (doubly) approximated value iteration  as following, 
\begin{equation}
    \lvert v_{n+1}(s)  -\hat{\lambda}\rvert \leq \epsilon_1
\end{equation}
where
\[(\alpha_s +\gamma\beta_{s}\hat{\kappa}_q(v_n))^p = \sum_{Q_n(s,a)\geq \hat{\lambda}}(Q_n(s,a) - \hat{\lambda})^{p}\]
and  \[Q_n(s,a) =R_0(s,a) + \gamma\sum_{s'}P_0(s'|s,a)v_n(s'), \qquad \lvert \hat{\kappa}_q(v_n) -\kappa_q(v_n)\rvert \leq \epsilon_1.\] 
And we do this approximate value iteration for $ n =\log(\frac{3\lVert v_0 -v^*\rVert_\infty}{\epsilon})$. Now, we do error analysis. By accumulating error, we have
\begin{equation}\begin{aligned}
    \lvert v_{n+1}(s) -(\mathcal{T}^*_{\mathcal{U}^{\mathtt{s}}_p}v_n)(s)\rvert \leq& \lvert v_{n+1}(s) -\hat{\lambda}\rvert + \lvert \hat{\lambda}-(\mathcal{T}^*_{\mathcal{U}^{\mathtt{s}}_p}v_n)(s)\rvert\\
    \leq& \epsilon_1 +\lvert\hat{\lambda}-(\mathcal{T}^*_{\mathcal{U}^{\mathtt{s}}_p}v_n)(s)\rvert , \qquad \text{(by definition)}\\
    \leq &\epsilon_1 + \gamma\beta_{\max}\epsilon_1, \qquad \text{(from proposition \ref{rs:sLpkappaErr})}\\
    \leq& 2\epsilon_1.
\end{aligned}\end{equation}

% From the proposition \ref{rs:sLppi}, we have
% \begin{equation}\begin{aligned}
%     \rVertv_{n+1} -\mathcal{T}^*_{\mathcal{U}^{\mathtt{s}}_p}v_n\rVert \leq\epsilon_2 + \gamma\beta_{\max}\epsilon_1 .
% \end{aligned}
% \end{equation}
where $\beta_{max}:=\max_{s}\beta_s, \gamma \leq 1$.

Now, we do approximate value iteration, and from proposition \ref{rs:appVI}, we get 
\begin{equation}\begin{aligned}
    \lVert v_{n} -v^*_{\mathcal{U}^s_p}\rVert
    \leq& \frac{2\epsilon_1}{1-\gamma} + \gamma^n[\frac{1}{1-\gamma }2\epsilon_1 + \lVert v_0 -v^*_{\mathcal{U}^s_p} \rVert_\infty]
\end{aligned}\end{equation}
Now, putting the value of $n$, we have
\begin{equation}\begin{aligned}
    \lVert v_{n} -v^*_{\mathcal{U}^s_p}\rVert_\infty= &\gamma^n[\frac{2\epsilon_1}{1-\gamma } + \lVert v_0 -v^*_{\mathcal{U}^s_p}\rVert_\infty] +\frac{2\epsilon_1}{1-\gamma } \\
    \leq &\gamma^n[\frac{\epsilon}{3} + \lVert v_0 -v^*_{\mathcal{U}^s_p}\rVert_\infty] +\frac{\epsilon}{3} \\
    \leq &\gamma^n\frac{\epsilon}{3} + \frac{\epsilon}{3} +\frac{\epsilon}{3} 
    \leq \epsilon.
\end{aligned}\end{equation}
To summarize, we do $O(\log(\frac{1}{\epsilon}))$ full value iterations. Cost of evaluating reward penalty is $O(S\log(\frac{S}{\epsilon}))$. For each state: evaluation of $Q$-value from value function requires $O(SA)$ iterations, sorting the actions according $Q$-values requires $O(A\log(A))$ iterations, and binary search for evaluation of value requires $O(A\log(1/\epsilon)$. So the complexity is 
\[O(\text{(total iterations)(reward cost + S(Q-value + sorting + binary search for value )}))\]
\[= O\Bigm(\log(\frac{1}{\epsilon})\bigm(S\log(\frac{S}{\epsilon}) + S(SA +A\log(A)+ A\log(\frac{1}{\epsilon})) \bigm)\Bigm)\]
\[= O\Bigm(\log(\frac{1}{\epsilon})\bigm(S\log(\frac{1}{\epsilon}) + 
S\log(S) + S^2A+SA\log(A) + SA\log(\frac{1}{\epsilon}) \bigm)\Bigm)\]
\[= O\Bigm(\log(\frac{1}{\epsilon})\bigm( S^2A+SA\log(A) + SA\log(\frac{1}{\epsilon}) \bigm)\Bigm)\]

\[= O\Bigm(\log(\frac{1}{\epsilon})\bigm( S^2A+ SA\log(\frac{A}{\epsilon}) \bigm)\Bigm)\]
\end{proof}

\end{document}